\renewcommand{\leq}{\leqslant}
\renewcommand{\geq}{\geqslant}
\renewcommand{\le}{\leqslant}
\renewcommand{\ge}{\geqslant}
\newcommand{\np}{{\rm NP}}
\newcommand{\conp}{{\rm coNP}}
\newtheorem{theorem}{Theorem}
\newtheorem{proposition}{Proposition}
\newtheorem{example}{Example}
\newtheorem{lemma}{Lemma}
\newtheorem{definition}{Definition}
\newtheorem{corollary}{Corollary}
\def\biglen{20cm} 
\tikzset{
  half plane/.style={
    to path={
       ($(\tikztostart)!.5!(\tikztotarget)!#1!(\tikztotarget)!\biglen!90:(\tikztotarget)$)
    -- ($(\tikztostart)!.5!(\tikztotarget)!#1!(\tikztotarget)!\biglen!-90:(\tikztotarget)$)
    -- ([turn]0,2*\biglen) -- ([turn]0,2*\biglen) -- cycle}},
  half plane/.default={1pt}
}
\def\n{23} 
\def\maxxy{4} 
\author{
  Pablo Barcel\'o\\
  IMC PUC, IMFD, CENIA\\
  \url{pbarcelo@ing.uc.cl}
  \and 
  Alexander Kozachinskiy\\
  CENIA\\
  \url{alexander.kozachinskyi@cenia.cl}
  \and
  Miguel Romero Orth\\
  DCC PUC, CENIA\\
  \url{mgromero@uc.cl}
  \and 
  Bernardo Subercaseaux\\
  Carnegie Mellon University\\
  \url{bersub@cmu.edu}
  \and
  Jos\'e Verschae\\
  IMC PUC\\
  \url{jverschae@uc.cl}
  }
\title{Explaining $k$-Nearest Neighbors:\\ 
Abductive and Counterfactual Explanations}
\begin{document}

\maketitle

\begin{abstract}
      Despite the wide use of $k$-Nearest Neighbors as classification models, their explainability properties remain poorly understood from a theoretical perspective.
 While nearest neighbors classifiers offer  interpretability from a \emph{``data perspective''}, in which the classification of an input vector $\bar{x}$ is explained by identifying the vectors $\bar{v}_1, \ldots, \bar{v}_k$ in the training set that determine the classification of $\bar{x}$, we argue that such explanations can be impractical in high-dimensional applications, where each vector has hundreds or thousands of features and it is not clear what their relative importance is.
Hence, we focus on understanding nearest neighbor classifications through a \emph{``feature perspective''}, in which the goal is to identify how the values of the features in $\bar{x}$ affect its classification. Concretely, we study \emph{abductive explanations} such as ``minimum sufficient reasons'', which correspond to sets of features in $\bar{x}$ that are enough to guarantee its classification, and 
\emph{counterfactual explanations} based on the minimum distance feature changes one would have to perform in $\bar{x}$ to change its classification. We present a detailed landscape of positive and negative complexity results for counterfactual and abductive explanations, distinguishing between discrete and continuous feature spaces, and considering the impact of the choice of distance function involved. Finally, we show that despite some negative complexity results, Integer Quadratic Programming and SAT solving allow for computing explanations in practice.
\end{abstract}

\section{Introduction}\label{sec:intro}
\paragraph{{\bf Nearest Neighbor classification.}} 
$k$-Nearest Neighbor ($k$-NN) classification is one of the most widely used supervised learning techniques~\cite{Biau2015}.
In $k$-NN classification, we assume a set of points $S$ over a metric space, where each point has already been labeled as either \emph{positive} or \emph{negative}. Then, a new point $\bar{x}$ is classified as either positive or negative by taking the majority label of its $k$ closest neighbors in $S$. 
The study of
$k$-NN classification has been a recurring focus in the data management community, encompassing extensive research on its behavior in high-dimensional spaces \cite{DBLP:conf/icdt/BeyerGRS99,DBLP:conf/vldb/HinneburgAK00,DBLP:conf/sigmod/TaoYSK09} and its properties when dealing with uncertain data \cite{DBLP:conf/pods/AgarwalESZ12,DBLP:conf/pods/AgarwalAHPYZ13,DBLP:conf/icdt/FanK22}. Considerable effort has also been directed toward the development of efficient algorithms and data structures to enable scalable NN queries \cite{DBLP:conf/sigmod/RoussopoulosKV95,DBLP:conf/pods/AgarwalFMN16}. As of late, $k$-NN has also become 
key to several search and retrieval problems in vector databases \cite{DBLP:journals/corr/abs-2402-01763}. 
For example, in \emph{Retrieval-Augmented Generation} (RAG) systems, the goal is to identify the most relevant sections of a document for a given query. This is achieved by performing a nearest-neighbor query within a textual-embedding space.

\paragraph{{\bf Formal explainability.}} 
Emerging data-driven applications, particularly those leveraging machine learning systems, are introducing new demands on classification methods. One of the most critical requirements is {\em explainability}: in many high-stakes applications, it is not enough for classifiers to be accurate; they must also provide clear and understandable explanations for their decisions~\cite{BarredoArrieta2020}. 
A significant milestone in this field has been the development of {\em formal} frameworks for explainability. The advantages of adopting such a principled approach have been comprehensively outlined in a recent survey~\cite{formal-xai}. 
Two prominent examples of this methodology are: 
\begin{itemize} 
    
    \item {\em Abductive explanations:} These aim to identify a small subset of components in the input $\bar{x}$ that is sufficient to justify the classifier's output for $\bar{x}$ \cite{DBLP:conf/aaai/Ribeiro0G18,DBLP:journals/corr/abs-2002-09284,DBLP:conf/aaai/IgnatievNM19}. 
    More formally, an abductive explanation for $\bar x$ with respect to a given classifier is a subset $X$ of the components of $\bar x$ such that every input $\bar y$ that coincides with $\bar x$ over the components in $X$ is classified in the same way by the classifier. Abductive explanations are also called {\em sufficient reasons} \cite{shih2018symbolic}. One then aims to find sufficient reasons for $\bar x$ that are {\em minimum} in terms of their cardinality.  

\item {\em Contrastive explanations:} These focus on the robustness of a classification, examining how much a given point $\bar{x}$ must be altered to change the output of the classifier \cite{artelt2019counterfactual,Guidotti2022}. 
More formally, a {\em contrastive explanation at distance $p$ from $\bar x$}, with respect to a given classifier, is another input $\bar y$ such that $\lVert\bar x - \bar y\rVert \leq p$ and $\bar y$ is classified differently from $\bar x$. 
\end{itemize}

\paragraph{{\bf Why feature-based explanations for $k$-NNs?}} 
Traditionally, $k$-NN models are considered "self-interpretable" because they identify a subset of training data that determines a new input's classification~\cite{molnar2022}. However, this view is overly simplistic, as interpretability depends on whether individual instances and their features are understandable~\cite{lipton2018mythos,molnar2022}. In high-dimensional settings, the $k$ nearest neighbors may already be too complex for direct human interpretation. Similar challenges arise in other classifiers like decision trees, often viewed as "self-interpretable". This has spurred research into concise, feature-based explanations—such as abductive and counterfactual ones~\cite{izzaComputingProbabilisticAbductive2023b}. As the next example shows, applying this approach to NN classification yields meaningful insights.

\begin{example} 
Consider a $1$-NN classifier trained on a subset of the MNIST dataset containing digits 4 and 
9. The test image in~\Cref{fig:mnist-counterfactuals}a is correctly classified as a 
4 based on its NN in~\Cref{fig:mnist-counterfactuals}b, while its closest counterfactual, shown in~\Cref{fig:mnist-counterfactuals}c, is classified as a 
9 due to its NN in~\Cref{fig:mnist-counterfactuals}d. The explanation for why the test image is not classified as a 
9, highlighted in~\Cref{fig:mnist-counterfactuals}e, identifies 13  pixels that correspond to key differences in the digit’s structure. This counterfactual explanation reveals the minimal changes, among the dataset's 784 features, needed to alter the classification. \qed
\end{example}


\begin{figure}
    \begin{subfigure}{0.24\textwidth}
    \centering
    \includegraphics[scale=0.12]{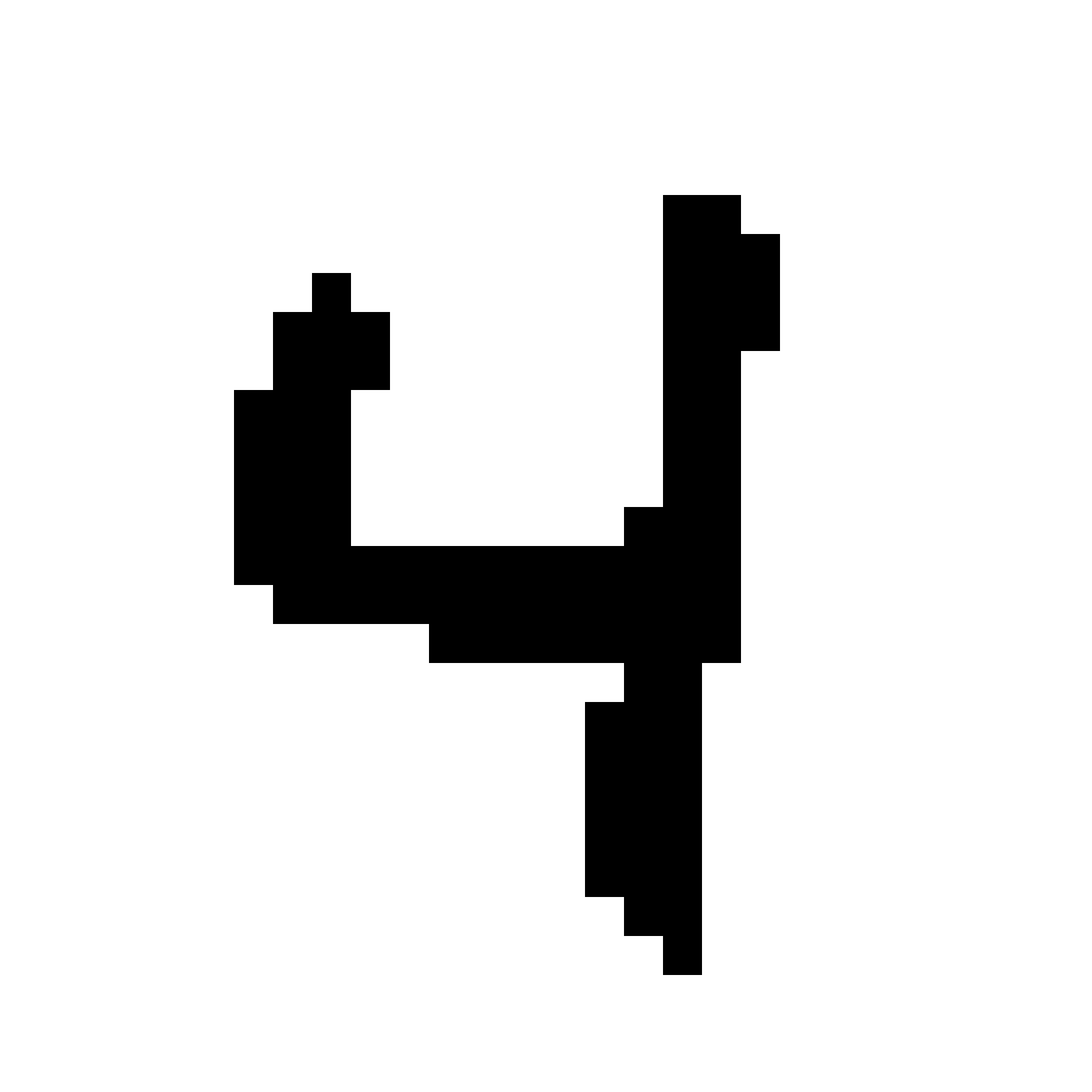}
    \caption{Test image.}
    \end{subfigure}
    \begin{subfigure}{0.24\textwidth}
     \centering
    \includegraphics[scale=0.12] {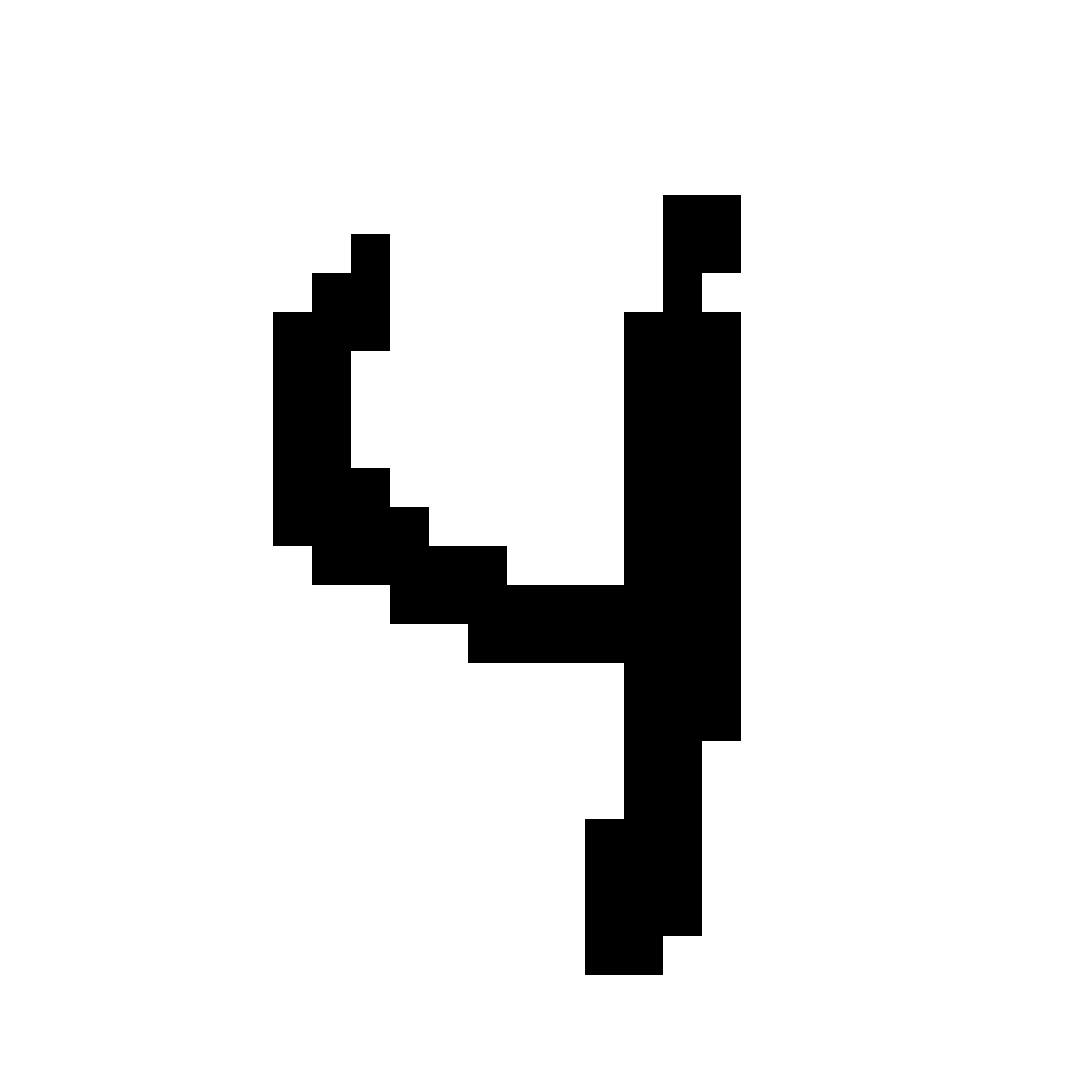}
    \caption{Nearest neighbor of (a).}
    \end{subfigure}
    \begin{subfigure}{0.24\textwidth}
     \centering
        \includegraphics[scale=0.12]{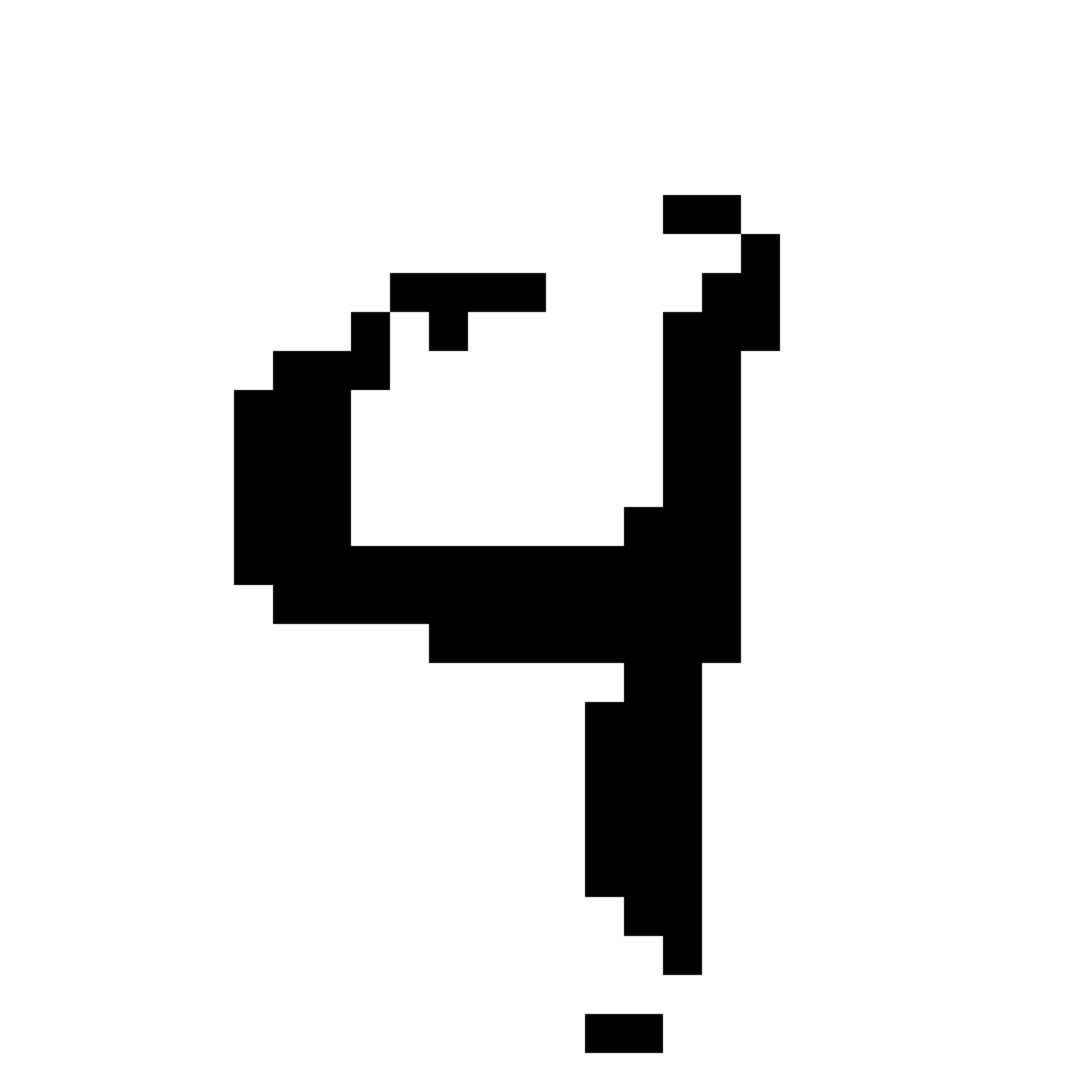}
        \caption{Closest counterfactual.}
    \end{subfigure}
    \begin{subfigure}{0.24\textwidth}
     \centering
        \includegraphics[scale=0.12]{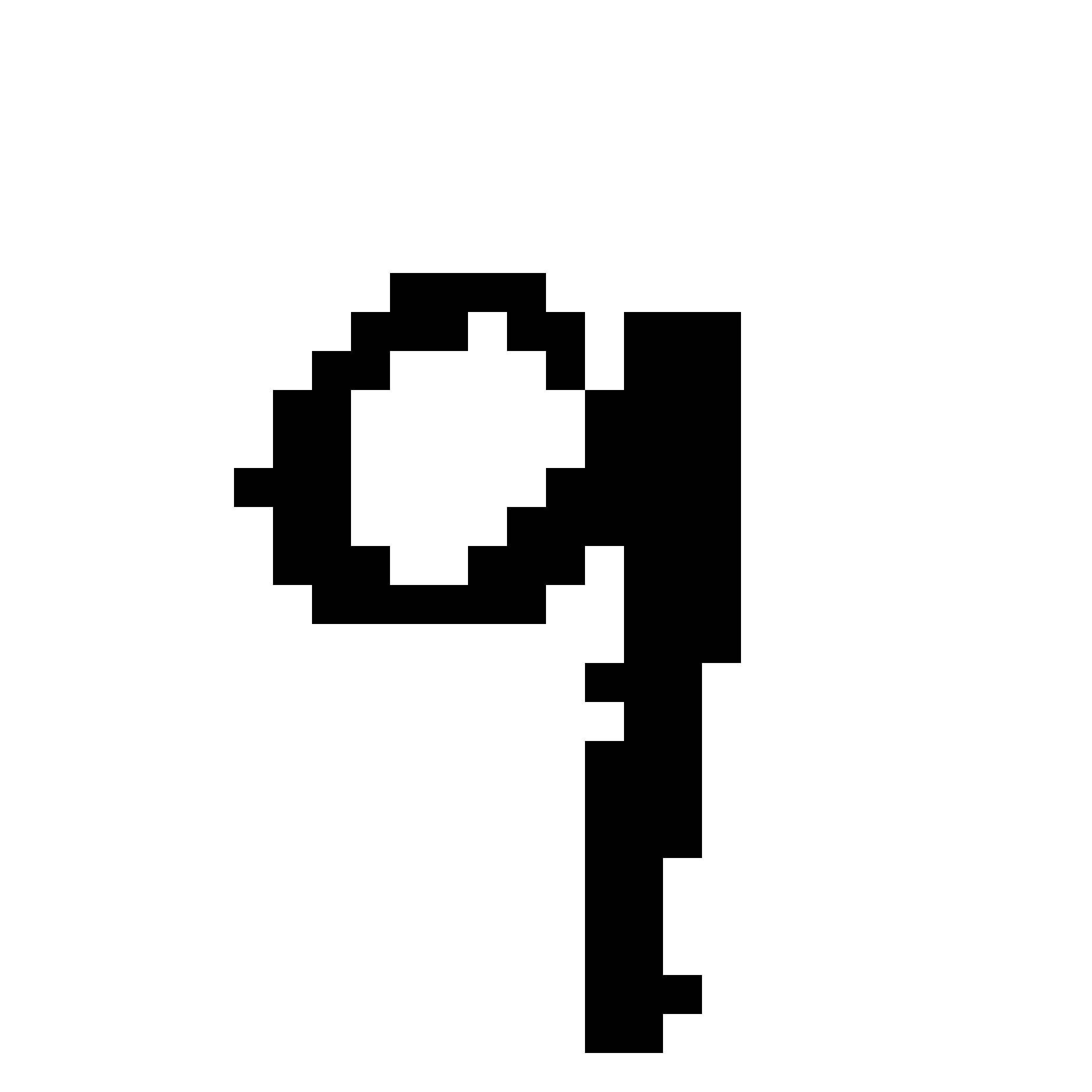}
        \caption{Nearest neighbor of (c).}
    \end{subfigure}

    \vspace{0.5cm}
    
    \begin{subfigure}{0.31\textwidth}
    \centering
        \includegraphics[scale=0.12]{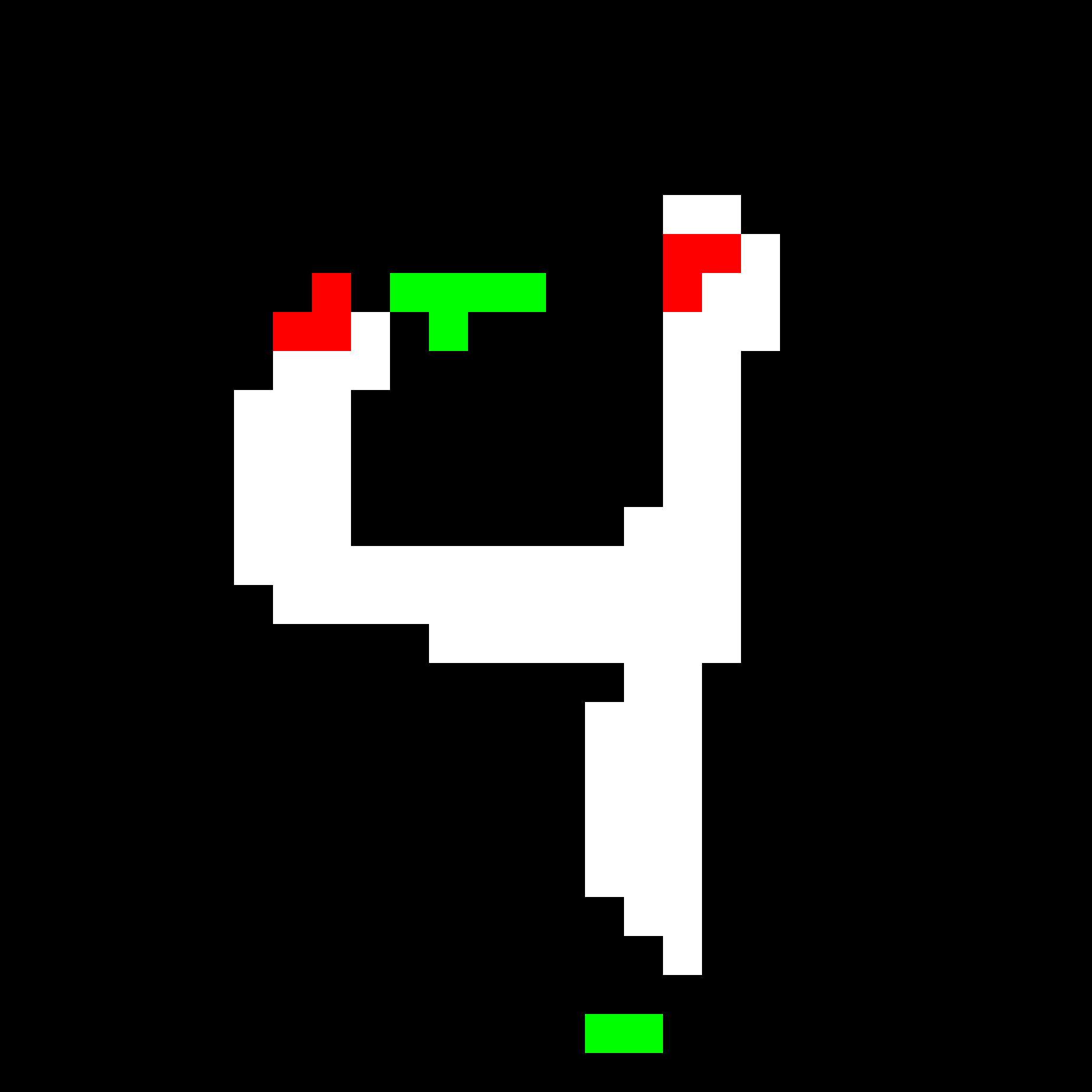}
        \caption{Diff. map between (a) and (c).}
    \end{subfigure}
     \begin{subfigure}{0.31\textwidth}
    \centering
        \includegraphics[scale=0.12]{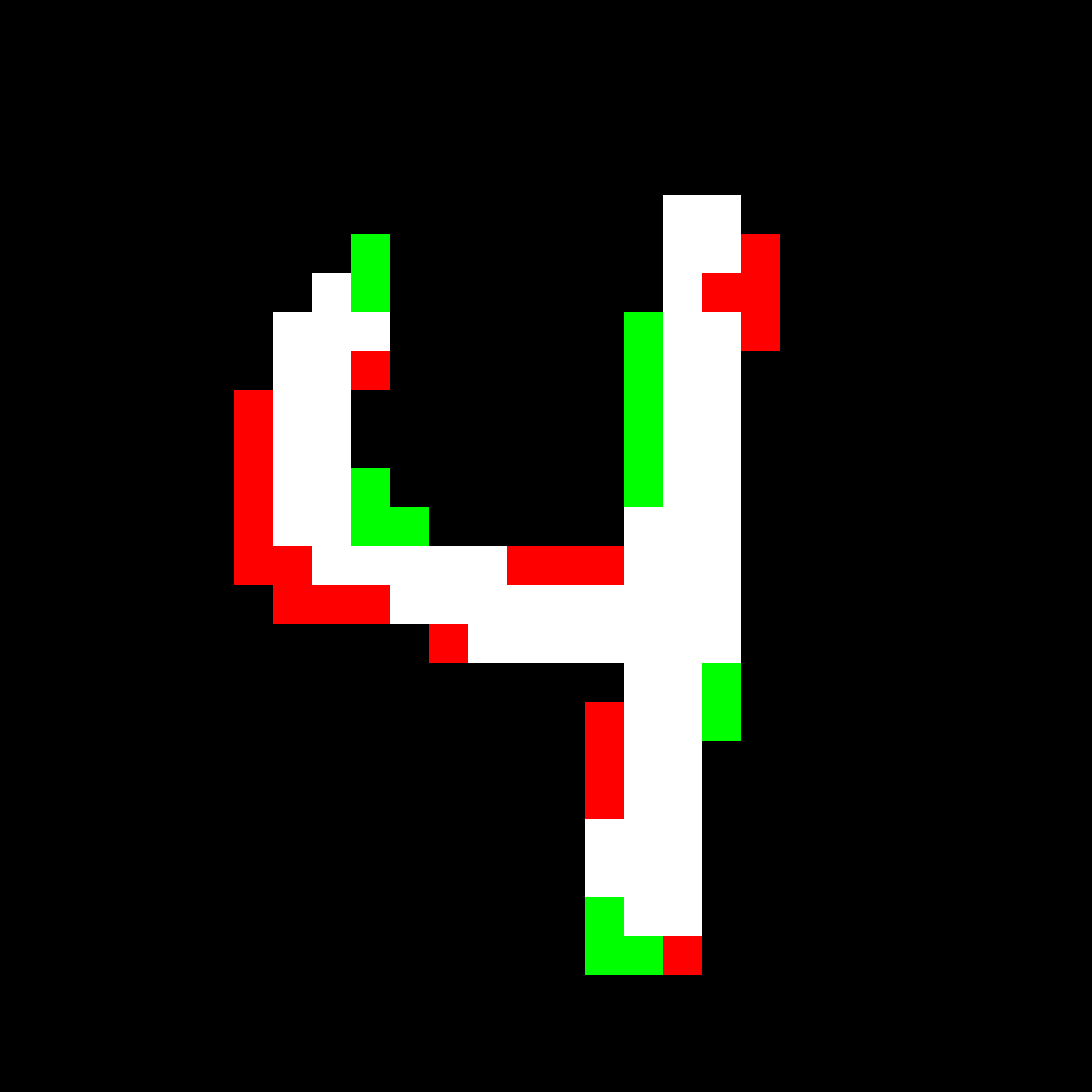}
        \caption{Diff. map between (a) and (b).}
    \end{subfigure}
     \begin{subfigure}{0.31\textwidth}
    \centering
        \includegraphics[scale=0.12]{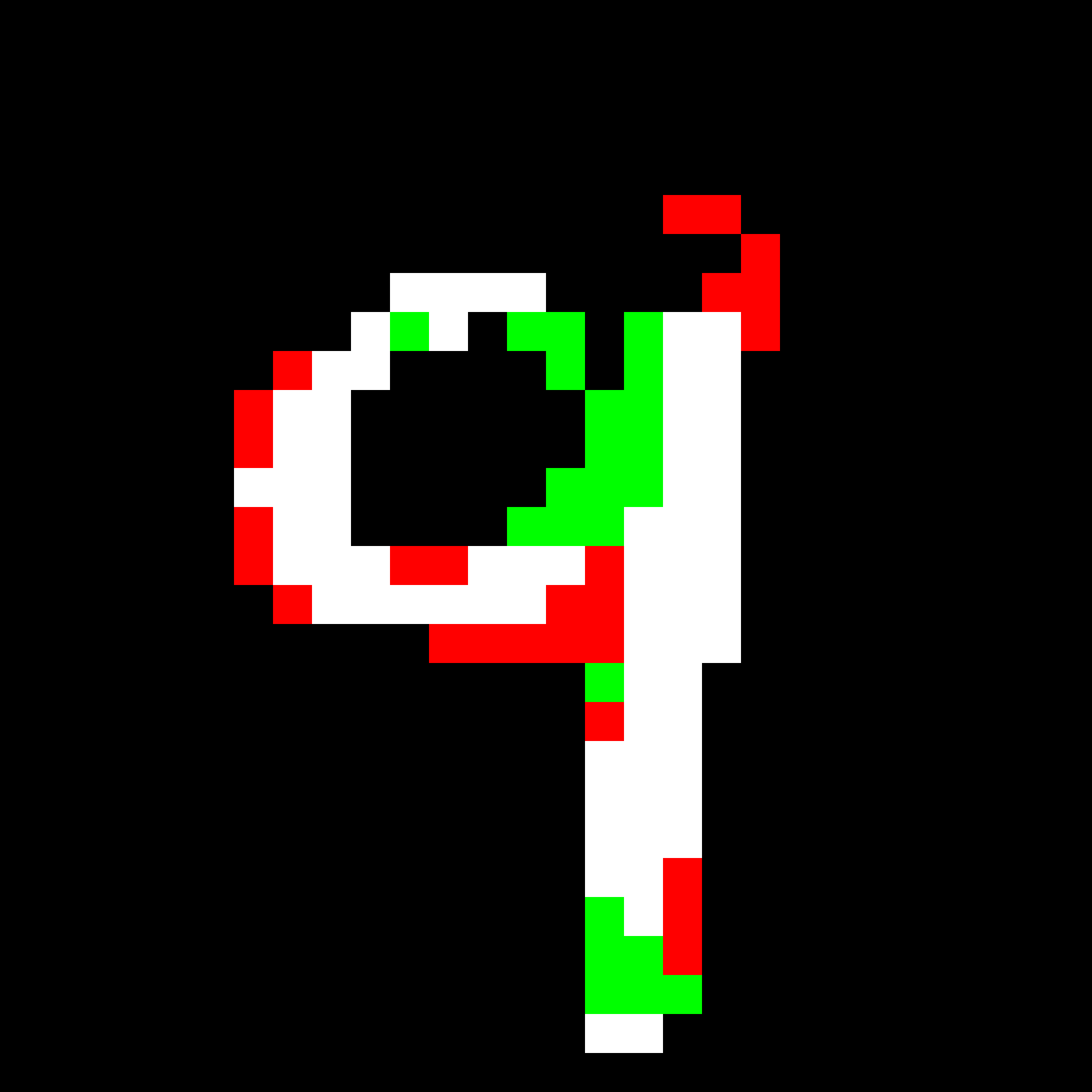}
        \caption{Diff. map between (c) and (d).}
    \end{subfigure}

\caption{Illustration of a counterfactual explanation for an image of digit $4$ in the binarized MNIST dataset, which after changing $13$ pixels is classified as a $9$.}\label{fig:mnist-counterfactuals}
\end{figure}

\paragraph{{\bf Context.}}
The algorithmic aspects of computing abductive and counterfactual explanations for ML models have garnered significant attention in recent years \cite{shih2018symbolic,DBLP:conf/nips/0001GCIN20,NEURIPS2020_b1adda14,Izza2021EfficientEW,DBLP:conf/kr/HuangII021,pmlr-v139-marques-silva21a,blanc2021provably,DBLP:conf/icml/0001GCIN21,DBLP:journals/jair/WaldchenMHK21,DBLP:conf/nips/ArenasBOS22,Ignatiev2021,Ignatiev2021a,Izza2022,Audemard,https://doi.org/10.48550/arxiv.2205.09569,DBLP:conf/ijcai/HanCD23,DBLP:conf/lics/Darwiche23,DBLP:conf/ecai/AudemardLM24}.
These efforts have explored the computational cost of generating explanations across various ML models, including decision trees, binary decision diagrams, Bayesian networks, neural networks, and graph-based classifiers.
Surprisingly, despite the foundational importance of $k$-NN in machine learning and data management, the literature on explainability for $k$-NN classifiers remains sparse. The few existing works primarily adopt operational approaches, leveraging mixed integer programming and constraint programming techniques to solve relevant explainability problems \cite{DBLP:conf/icml/ForelPV23,DBLP:journals/dam/LamotheCG24,magagniniNearestNeighborsCounterfactuals2025}. However, the theoretical complexity of computing explanations for $k$-NN models remains an unexplored question, which constitutes a critical gap in the field. In particular, we do not know which of these problems are computationally intractable, and thus can only be approached by applying modern solvers technology for satisfiability or integer programming problems.       

\paragraph{{\bf Our contributions}} 
We perform an analysis of the complexity of checking and computing explanations for $k$-NN classifiers in two common settings: (a) the {\em continuous} setting, where points are vectors of real numbers, and the distance is based on the $\ell_p$-norm for some integer $p > 0$, and (b) the {\em discrete} setting, where points are Boolean vectors, and the Hamming distance is used.

\begin{itemize} 
\item
We start by studying the complexity of checking for the existence of \emph{sufficient reasons} of a certain size. We show that this problem is \np-hard, in terms of the number of features, in both the discrete and continuous settings. In the latter case, hardness holds for every distance $\ell_p$, 
where $p$ is a positive integer. 

\item We next examine the continuous setting to assess tractability for other standard explainability problems.  
Focusing on the \(\ell_2\) and \(\ell_1\) distances, two of the most commonly used metrics in practice, we show that tractability depends crucially on the metric chosen.

\begin{itemize} 
\item 
For the $\ell_2$-distance, several positive results are established. First, we prove that the problem of checking the existence of a counterfactual explanation at a certain distance is tractable. Furthermore, if such a counterfactual explanation exists, at least one can be computed in polynomial time. Next, we examine the problem of checking if a subset of components of an input $\bar x$ constitutes a sufficient reason and prove that this problem is also tractable. Consequently, a {\em minimal} sufficient reason for $\bar x$—where minimality refers to set containment—can always be computed in polynomial time. The previous problems can be solved in time $n^{O(k)}$, which is polynomial when $k$ is fixed. For the case of counterfactual explanations, we further show that, under widely believed assumptions from parameterized complexity, the exponential dependence in $k$ is unavoidable, as the problem is W$[1]$-hard when $k$ is the parameter.

\item We then examine the $\ell_1$-distance and show that not all positive results derived for the $\ell_2$-distance carry over. Specifically, checking the existence of a counterfactual explanation at a certain distance becomes \np-complete in this case. However, for a $k$-NN classifier with $k = 1$, a {\em minimal} sufficient reason for input $\bar x$ can still be computed in polynomial time.  For fixed $k \geq 3$, we show that checking whether a subset of components of an input $\bar{x}$ is a sufficient reason is $\conp$-complete, and, as a consequence, that computing minimal sufficient reasons in this case is intractable.

\end{itemize} 

\item Afterwards, we consider the discrete setting and show that it resembles the behavior observed for the $\ell_1$-distance in the continuous setting. Specifically: (a) checking the existence of a counterfactual explanation at a certain distance is \np-complete, and (b) for a $k$-NN classifier with $k = 1$, a {\em minimal} sufficient reason for an input $\bar x$ can be computed in polynomial time. However, this property does not generalize beyond $k = 1$. In particular, checking if a subset of components of an input $\bar x$ is a sufficient reason for a $k$-NN classifier, for fixed $k\geq 3$, is $\conp$-complete. Our results also imply hardness for computing minimal sufficient reasons, in the case $k\geq 3$.  We further show that the problem of checking the existence of a sufficient reason of a certain size is complete for the complexity class $\Sigma_2^p$.

\item We present a preliminary analysis of computing explanations for $k$-NN when $k=1$, addressing both polynomial-time and \np-hard problems. For the latter, we employ two standard approaches: Integer Quadratic Programming (IQP) and SAT-solving (SAT). While Mixed Integer Programming (MIP) for counterfactual explanations was previously explored~\cite{DBLP:conf/iscopt/ContardoFRV24}, our SAT encoding is novel and utilizes a recent solver with native support for cardinality constraints~\cite{reeves2024clauses}. Interestingly, even for \np-hard problems, explanations for datasets with hundreds of features and thousands of points can be computed in under two minutes.


\end{itemize} 

\paragraph{{\bf Organization of the paper}}  
Basic definitions are provided in Section \ref{sec:definitions}, and the explainability problems are discussed in Section \ref{sec:problems}. Negative results concerning minimum sufficient reasons are presented in Section \ref{sec:msr}. Additional results for the continuous setting, based on the \(\ell_2\)-distance, are covered in Section \ref{sec:l2}, while those for the \(\ell_1\)-distance are detailed in Section \ref{sec:l1}. Results for the discrete setting appear in Section \ref{sec:discrete}. Further complexity completeness results are presented in Section \ref{sec:further}. Experimental results are discussed in Section~\ref{sec:implementation}, while final remarks are included in Section \ref{sec:final}.

\section{Definitions}\label{sec:definitions}
\noindent 
\paragraph{{\bf Basics}} 
We consider pairs of the form $(M,D)$, called {\em metric space families}, in which $M$ is a set and $D = \{d_n \mid n > 0\}$ satisfies that $d_n : M^n \times M^n \to \mathbb{R}$ is a metric (often referred as distance) on $M^n$, for every $n > 0$. Elements in $M^n$ are called vectors, and are typically denoted as $\bar x,\bar y,\bar z$. For $i \in \{1,\dots,n\}$, we write $\bar x[i]$ to denote the $i$th component of vector $\bar x$. When it is clear from the context, we also use $\bar{x}_i$ to denote the $i$th component of vector $\bar x$. 

\paragraph{{\bf Metric spaces studied in the paper}} 
In this article, we focus on two particular cases for the metric space families of the form $(M,D)$: 

\begin{itemize} 

\item {\em Continuous case:} Here $M = \mathbb{R}$ and $D = \{d_n \mid n > 0\}$ satisfies that there exists an integer 
$p > 0$ such that the distance $d_n$ is the one based on the $\ell_p$-norm over $\mathbb{R}^n$, for every $n > 0$. In this particular case, we denote $D$ as $D_p$. 

\item {\em Discrete case:} Here $M = \{0,1\}$ and $D = \{d_n \mid n > 0\}$ satisfies that $d_n$ is 
the Hamming distance on $\{0,1\}^n$. That is, if $\bar x,\bar y \in \{0,1\}^n$, then $d_n(\bar x,\bar y)$ is the number of components $i \in \{1,\dots,n\}$ for which $\bar x[i] \neq \bar y[i]$. 
In this case, we denote $D$ as $D_H$. 

\end{itemize}

\noindent 
\paragraph{{\bf Nearest neighbor classification}} 
We fix a metric space family $(M,D)$ as defined above. 
Let $k$ be a fixed odd integer. Consider two subsets $S^+$ and $S^-$ of $M^n$, for $n > 0$, 
where vectors in $S^+$ represent 
{\em positive examples}, and vectors in $S^-$ represent {\em negative examples}. 
We assume that $|S^+ \cup S^-| \geq k$. 
For the pair $(M,D)$, we aim to construct a {\em $k$-Nearest Neighbor ($k$-NN) classification function} 
\[
f^k_{S^+,S^-} : M^n \to \{0,1\},
\]
such that $f^k_{S^+,S^-}(\bar{x}) = 1$ if and only if the majority of the $k$ closest points to $\bar{x}$ in $S^+ \cup S^-$ are positive. However, the set of $k$ closest points may not always be uniquely defined, as multiple points can have the same distance from $\bar{x}$. To address this, we define $f^k_{S^+,S^-}(\bar{x}) = 1$ if and only if there is a subset $T\subseteq S^+\cup S^-$ of size $k$ such that the majority of points of $T$ belong to $S^+$ and $d_n(\bar x, \bar y) \le d_n(\bar x, \bar z)$ for all $\bar y\in T$ and $\bar z \in (S^+\cup S^-)\setminus T$.
This approach is sometimes referred to as an {\em optimistic view} of $k$-NN classification, as it favors sets that classify $\bar{x}$ as positive when there is ambiguity in the selection of $k$ closest points \cite{DBLP:conf/iscopt/ContardoFRV24}.\footnote{
We focus on the case where \( k \) is odd to ensure that ties occur only when the set of \( k \) closest points is not unique. If \( k \) is even, ties can encompass the entire space (or a significant subset), causing the optimistic tie-breaking rule to classify all points positively, including those in \( S^- \). For example, with \( k=2 \), if \( M^n \) is the real line, \( S^+ \) the odd numbers, and \( S^- \) the even numbers, ties arise everywhere.
}

In some proofs we use the following characterization of the $k$-NN classification function: 

\begin{proposition}
\label{prop_symm}
    (a) We have $f^k_{S^+, S^-}(\bar x) = 1$ if and only if there exist $A\subseteq S^+$ of size $(k+1)/2$  and $B\subseteq S^-$ of size at most $(k-1)/2$ such that $d_n(\bar x, \bar a) \le d_n(\bar x, \bar c)$ for every $\bar a \in A$ and $\bar c\in S^-\setminus B$.

    (b) We have $f^k_{S^+, S^-}(\bar x) = 0$ if and only if there exist $A\subseteq S^-$ of size $(k+1)/2$  and $B\subseteq S^+$ of size at most $(k-1)/2$ such that $d_n(\bar x, \bar a) < d_n(\bar x, \bar c)$ for every $\bar a \in A$ and $\bar c\in S^+\setminus B$.
\end{proposition}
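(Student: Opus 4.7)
Both parts follow by unpacking the definition of the \emph{optimistic} $k$-NN rule and tracking how ties at the $k$-th closest distance are handled. The asymmetry between the non-strict $\le$ in (a) and the strict $<$ in (b) reflects precisely this optimistic convention: tied points are counted toward the positive class whenever possible. A convenient device is the \emph{cutoff distance} $r^{*}$, defined as the distance from $\bar x$ to the $k$-th closest point of $S^+ \cup S^-$, together with the \emph{canonical} set $T^{*}$ consisting of all points at distance strictly less than $r^{*}$, padded to exactly $k$ points by including as many $S^+$ points as possible from distance $r^{*}$, and only then using $S^-$ points. By the definition of the rule, $f^k_{S^+,S^-}(\bar x) = 1$ if and only if $|T^{*} \cap S^+| \ge (k+1)/2$.

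For part (a), the forward direction is immediate: given a witness $T$ from the definition, set $B := T \cap S^-$ (of size at most $(k-1)/2$) and let $A$ be any $(k+1)/2$-element subset of $T \cap S^+$; the inequality $d_n(\bar x,\bar a) \le d_n(\bar x,\bar c)$ for $\bar a \in A$ and $\bar c \in S^- \setminus B$ follows from the fact that $S^- \setminus B \subseteq (S^+ \cup S^-) \setminus T$. For the reverse direction, given such $A, B$, let $r := \max_{\bar a \in A} d_n(\bar x, \bar a)$ and compare $r$ with $r^{*}$ through the three cases $r^{*} < r$, $r^{*} = r$, $r^{*} > r$. In each case the bound $|A| = (k+1)/2$, combined with $|\{\bar c \in S^- : d_n(\bar x,\bar c) < r\}| \le |B| \le (k-1)/2$, forces $|T^{*} \cap S^+| \ge (k+1)/2$; the case $r^{*} = r$ is where the optimistic tie-breaking built into $T^{*}$ is actually used.

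For part (b), the forward direction is the most delicate. Starting from $|T^{*} \cap S^+| \le (k-1)/2$, I would take $A$ to be the $(k+1)/2$ points of $S^-$ closest to $\bar x$ and $B := \{\bar c \in S^+ : d_n(\bar x, \bar c) \le r_A\}$, where $r_A := \max_{\bar a \in A} d_n(\bar x, \bar a)$. The strict inequality demanded by (b) then holds by construction. The bound $|B| \le (k-1)/2$ requires splitting into $r_A < r^{*}$ and $r_A = r^{*}$ and ruling out $|B| \ge (k+1)/2$ by showing that such a situation would upgrade $T^{*}$ into a positive-majority set. The reverse direction is the shortest: if both (a) and (b) held, with witnesses $A', B'$ and $A, B$ respectively, then $|A' \setminus B| \ge (k+1)/2 - (k-1)/2 = 1$ and similarly $|A \setminus B'| \ge 1$, so one can pick $\bar a' \in A' \setminus B \subseteq S^+ \setminus B$ and $\bar a \in A \setminus B' \subseteq S^- \setminus B'$, obtaining the contradictory chain $d_n(\bar x, \bar a) < d_n(\bar x, \bar a') \le d_n(\bar x, \bar a)$.

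The main obstacle I anticipate is the case analysis in the forward direction of (b), where one must verify that the set $B$, which by definition swallows \emph{every} $S^+$ point within distance $r_A$, stays within size $(k-1)/2$. This uses crucially that $T^{*}$ already applied the positive-optimistic tie-breaking at the cutoff $r^{*}$, so any extra $S^+$ point hiding in the ball of radius $r_A$ around $\bar x$ would push the positive count of $T^{*}$ above $(k-1)/2$, contradicting $f^k_{S^+,S^-}(\bar x) = 0$. Once this accounting is set up cleanly, the remaining steps are routine bookkeeping.
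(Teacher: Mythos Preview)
Your proof plan is correct. The paper itself offers no proof for this proposition---it is introduced with the phrase ``which is immediate from the definition'' and left at that---so there is nothing substantive to compare against; what you have written is a careful unpacking of exactly the bookkeeping the paper elides, and every step (the canonical set $T^{*}$, the three-way comparison of $r$ with $r^{*}$, and the contradiction argument for the reverse direction of (b)) goes through as you describe.
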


\begin{proof}
Imagine that we start inflating a ball with the center at $\bar x$. As the radius grows, some points of $S^+$ and $S^-$ fall inside the ball. We are waiting until either $(k+1)/2$ points of $S^+$ or of $S^-$ fall inside. If this first happens for $S^+$, we set $f^k_{S^+,S^-}(\bar x) = 1$, and we  set $f^k_{S^+,S^-}(\bar x) = 0$ if it first happens for $S^-$. If this happens simultaneously, we set $f^k_{S^+,S^-}(\bar x) = 1$, favoring positive classification. In other words, we classify $\bar x$ positively if there is a ball that has at least $(k+1)/2$ points of $S^+$ but its interior has at most $(k-1)/2$ points of $S^-$. Likewise, $\bar x$ is classified negatively if there is a ball with at least $(k+1)/2$ points of $S^-$ but with at most $(k-1)/2$ points of $S^+$ (in the interior and the boundary). 
\end{proof}

\section{Problems}\label{sec:problems}

\subsection{Decision problems}

We consider a metric space family $(M,D)$ with $D = \{d_n \mid n>0\}$. We present the different sorts of explanation studied in this paper and their associated decision problems. 

\paragraph{{\bf Abductive explanations}} 
Consider an input vector $\bar x \in M^n$. 
The goal in this case is to find a set $X$ of components over $\{1,\dots,n\}$ that suffice to explain the output of the $k$-NN classification function $f^k_{S^+,S^-}$ on $\bar x$. Intuitively, this means that every input vector $\bar y$ that coincides with $\bar x$ over the components in $X$ is classified in the same way by $f^k_{S^+,S^-}$. We formalize these ideas next using the well-known notion of {\em sufficient reason}. 

Fix an odd integer $k \geq 1$. 
Consider then two sets $S^+,S^- \subseteq M^n$ and an input vector $\bar x \in M^n$. Let $X \subseteq \{1,\dots,n\}$. We call $X$ a {\em sufficient reason for $\bar x$ with respect to $f^k_{S^+,S^-}$}, 
if 
$$f^k_{S^+,S^-}(\bar x) = f^k_{S^+,S^-}(\bar y), \ \ \ \ \ \text{for every $\bar y \in M^n$ that satisfies $\bar x[i] = \bar y[i]$, for each $i \in X$.}$$
The most basic decision problem in this case is verifying if an $X \subseteq \{1,\dots,n\}$ is in fact a sufficient reason for $\bar x$. This leads to the following problem. 

\medskip 

\begin{center}
\fbox{\begin{tabular}{lp{8cm}}
{\small PROBLEM} : & {\sc $k$-Check Sufficient Reason}$(M,D)$ \\
{\small INPUT} : & Two sets $S^+,S^- \subseteq M^n$, a vector $\bar x \in M^n$, an $X \subseteq \{1,\dots,n\}$
\\ 
{\small OUTPUT} : & {\sc Yes}, if $X$ is a sufficient reason for $\bar x$ with respect to $f^k_{S^+,S^-}$  
\end{tabular}}
\end{center}

\medskip 


Not all sufficient reasons are equally informative. For instance, $X = \{1,\dots,n\}$ is always a sufficient reason for $\bar x$, but arguably a very uninformative one. It is then natural to look for \emph{minimum} sufficient reasons, that is, sufficient reasons that are as small as possible in terms of their cardinality. This is formalized by the next decision problem.
\medskip 

\begin{center}
\fbox{\begin{tabular}{lp{9cm}}
{\small PROBLEM} : & {\sc $k$-Minimum Sufficient Reason}$(M,D)$ \\
{\small INPUT} : & Two sets $S^+,S^- \subseteq M^n$, a vector $\bar x \in M^n$, an integer $\ell>0$
\\ 
{\small OUTPUT} : & {\sc Yes}, if there is a sufficient reason $X$ for $\bar x$ w.r.t. $f^k_{S^+,S^-}$ with $|X|\leq \ell$
\end{tabular}}
\end{center}

\medskip 

When the problem of checking minimum sufficient reasons is computationally hard, one might be satisfied with finding a {\em minimal} one, i.e., one that does not properly contain another sufficient reason. Formally, if $X$ is a sufficient reason for $\bar x$ with respect to $f^k_{S^+,S^-}$, then $X$ is {\em minimal} if there is no sufficient reason $Y$ for $\bar x$ with respect to $f^k_{S^+,S^-}$ that satisfies $Y \subsetneq X$. Clearly, every minimum sufficient reason is also minimal, but the converse does not hold in general as shown next. 

\begin{example} 
We consider the discrete setting. Assume that $S^+ = \{(0,1,1),(1,0,1),(1,1,1)\}$ and $S^- = \{0,1\}^3 \setminus S^+$. It is easy to see that both sets $\{1,2\}$ and $\{3\}$ of components are sufficient reasons for $\bar x = (0,0,0)$ with respect to $f^k_{S^+,S^-}$, for  $k = 1$. This is because every vector $\bar y$ for which $\bar y[1] = \bar y[2] = 0$, or for which $\bar y[3] = 0$, belongs to $S^-$, and thus $f^k_{S^+,S^-}(\bar x) = f^k_{S^+,S^-}(\bar y) = 0$. Moreover, neither $\{1\}$ nor $\{2\}$ nor $\emptyset$ are sufficient reasons for $\bar x$. Hence, $\{1,2\}$ and $\{3\}$ are minimal sufficient reasons for $\bar x = (0,0,0)$, but only $\{3\}$ is a minimum one. \qed 
\end{example} 
 
This motivates our next decision problem. 

\medskip 

\begin{center}
\fbox{\begin{tabular}{lp{8cm}}
{\small PROBLEM} : & {\sc $k$-Minimal Sufficient Reason}$(M,D)$ \\
{\small INPUT} : & Two sets $S^+,S^- \subseteq M^n$, a vector $\bar x \in M^n$, an $X \subseteq \{1,\dots,n\}$
\\ 
{\small OUTPUT} : & {\sc Yes}, if $X$ is a minimal sufficient reason for $\bar x$ w.r.t. $f^k_{S^+,S^-}$  
\end{tabular}}
\end{center}

\medskip 

It is easy to observe that a greedy strategy turns a polynomial time algorithm for {\sc $k$-Check Sufficient Reason} into a polynomial time algorithm for {\sc $k$-Minimal Sufficient Reason}.


\begin{proposition}
\label{prop_check_minimal_reduction}
For any $k, M, D$, the  {\sc $k$-Minimal Sufficient Reason}$(M,D)$ problem reduces in polynomial time to {\sc $k$-Check Sufficient Reason}$(M,D)$.
\end{proposition}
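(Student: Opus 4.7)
The plan is to exhibit a polynomial-time Turing reduction from {\sc $k$-Minimal Sufficient Reason}$(M,D)$ to {\sc $k$-Check Sufficient Reason}$(M,D)$ that, on input $(S^+, S^-, \bar x, X)$, makes at most $|X|+1$ oracle calls and combines their answers. Concretely, I would first invoke the oracle on $(S^+, S^-, \bar x, X)$ to verify that $X$ is a sufficient reason at all; then, for each index $i \in X$, invoke the oracle on $(S^+, S^-, \bar x, X \setminus \{i\})$; and finally accept iff the first answer is {\sc Yes} and every one of the $|X|$ subsequent answers is {\sc No}.

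The only nontrivial point is correctness, which rests on a one-line monotonicity observation: if $Y \subseteq X \subseteq \{1,\dots,n\}$ and $Y$ is a sufficient reason for $\bar x$ with respect to $f^k_{S^+, S^-}$, then so is $X$, because any $\bar y \in M^n$ that coincides with $\bar x$ on every $i \in X$ also coincides with $\bar x$ on every $i \in Y$. From this, one direction of minimality is immediate: if $X$ is minimal, then $X$ is sufficient and each $X \setminus \{i\}$ fails to be sufficient. Conversely, suppose $X$ is sufficient but non-minimal, witnessed by a strictly smaller sufficient reason $Y \subsetneq X$; pick any $i \in X \setminus Y$, so $Y \subseteq X \setminus \{i\}$, and by monotonicity $X \setminus \{i\}$ is also sufficient, contradicting the assumption that every one-element removal fails. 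Hence the procedure accepts iff $X$ is minimal.

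I expect no real obstacle here; the only mild subtlety is that the reduction is a Turing (Cook) rather than many-one reduction, but this matches the phrasing of the proposition and the surrounding discussion about a greedy algorithmic transformation. The overall running time is polynomial, since the procedure issues at most $n+1$ oracle queries on instances whose size is bounded by the input size.
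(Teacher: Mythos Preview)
Your proposal is correct and follows essentially the same approach as the paper: use monotonicity of sufficient reasons under supersets, then check that $X$ is sufficient and that each $X\setminus\{i\}$ is not. You spell out the converse direction more carefully than the paper does, but the argument is identical in substance.
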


\begin{proof}
If a set is a sufficient reason, then all its supersets are. Hence, to decide if $X\subseteq \{1, \ldots, n\}$ is a minimal sufficient reason, it suffices to check if $X$ is a sufficient reason, and then check, for each subset $X \setminus \{i\}$ obtained by removing one element $i \in X$, that $X \setminus \{i \} $ is not a sufficient reason. 
\end{proof}

\paragraph{{\bf Counterfactual explanations}}  
These are explanations that aim to find what should be changed from an input vector $\bar x$ in order to obtain a different classification outcome. 
Typically, one aims to find counterfactual explanations that are not ``too far'' from $\bar x$, which is formalized by saying that the distance between $\bar x$ and its counterfactual explanation is bounded. 

Fix an odd integer $k \geq 1$.
Given sets $S^+,S^- \subseteq M^n$ and $\bar x \in M^n$, a {\em counterfactual explanation for $\bar x$ with respect to $f^k_{S^+,S^-}$} is a vector $\bar y \in M^n$ with $f^k_{S^+,S^-}(\bar x) \neq f^k_{S^+,S^-}(\bar y)$. We look for counterfactual explanations that are  close  to the vector $\bar x$. This leads to the following decision problem.  





\medskip 

\begin{center}
\fbox{\begin{tabular}{lp{8cm}}
{\small PROBLEM} : & {\sc $k$-Counterfactual Explanation}$(M,D)$ \\
{\small INPUT} : & Two sets $S^+,S^- \subseteq M^n$, a vector $\bar x \in M^n$, a rational $\ell > 0$
\\ 
{\small OUTPUT} : & {\sc Yes}, if there is a 
counterfactual explanation $\bar y$ for $\bar x$ with respect to $f^k_{S^+,S^-}$ such that $d_n(\bar x, \bar y)\leq \ell$
\end{tabular}}
\end{center}

\medskip 

Thus, {\sc Counterfactual Explanation} asks whether it is possible to find a vector $\bar y$ that is relatively close to $\bar x$ and that is classified differently than $\bar x$ under $f^k_{S^+,S^-}$. For instance, in the discrete case this asks if it is possible to ``flip'' the classification of $\bar x$ by ``flipping'' at most $\ell$ of its components. Figure \ref{fig:example-2d} illustrates how counterfactuals look in the continuous setting under the $\ell_2$-distance. 

\definecolor{redFigure}{RGB}{211, 115, 130}
\definecolor{blueFigure}{RGB}{123, 148, 191}
\begin{figure}
    \centering
    \begin{tikzpicture}[scale=0.5]
    \pgfmathsetseed{1908} 
    \def\pts{}
    \xintFor* #1 in {\xintSeq {1}{\n}} \do{
      \pgfmathsetmacro{\ptx}{.9*\maxxy*rand} 
      \pgfmathsetmacro{\pty}{.9*\maxxy*rand} 
      \edef\pts{\pts, (\ptx,\pty)} 
    }
    
    \newcount\colorcount
    \colorcount=0
    
    \xintForpair #1#2 in \pts \do{
      \edef\pta{#1,#2}
      \begin{scope}
        \xintForpair \#3#4 in \pts \do{
          \edef\ptb{#3,#4}
          \ifx\pta\ptb\relax 
            \tikzstyle{myclip}=[];
          \else
            \tikzstyle{myclip}=[clip];
          \fi;
          \path[myclip] (#3,#4) to[half plane] (#1,#2);
        }
        \clip (-\maxxy,-\maxxy) rectangle (\maxxy,\maxxy); 
        
        \ifodd\colorcount
          \fill[redFigure] (#1,#2) circle (4*\biglen); 
          \fill[draw=red,very thick] (#1,#2) circle (1.4pt); 
        \else
          \fill[blueFigure] (#1,#2) circle (4*\biglen); 
          \fill[draw=blue!70,very thick] (#1,#2) circle (1.4pt); 
        \fi
        \global\advance\colorcount by 1
      \end{scope}
    }
    \pgfresetboundingbox
    \draw (-\maxxy,-\maxxy) rectangle (\maxxy,\maxxy);

    \node[fill=green!60,  thick, circle, inner sep=0pt, outer sep=0pt, minimum width=5pt, draw] (x) at (2, 1.3)  {}; 
    
    \node at (2.3, 0.8) {\footnotesize Input point $\bar{x}$};

\node[fill=yellow!60,  thick, circle, inner sep=0pt, outer sep=0pt, minimum width=5pt, draw] (y) at (0.9, 2.1)  {}; 

  \node at (0.7, 2.6) {\footnotesize Optimal counterfactual $\bar{y}$};

\draw[->, thick]  (x) -- (y);

\end{tikzpicture}

    \caption{Illustration of  minimum distance counterfactual explanations over $\mathbb{R}^2$ in the $\ell_2$ metric, with $k = 1$. Blue (red) areas are classified negatively (positively).}\label{fig:example-2d}
\end{figure}

\subsection{Computation problems} 

For simplicity, we focus our complexity analysis on the decision problems introduced above. However, in the context of explainable AI, it is, of course, more important to compute an optimal explanation (if one exists). Our study, however, also sheds light on the computation problems. In fact, as shown in the explainability literature, the hardness of a decision problem often implies hardness for its associated computation problem \cite{DBLP:conf/nips/ArenasBOS22}. Conversely, the tractability of a decision problem often implies that the associated computation problem can be solved in polynomial time. 
In this paper, we show that all our tractability results extend from decision to computation.

\section{Minimum Sufficient Reasons}
\label{sec:msr}
In this section, we show that {\sc $k$-Minimum Sufficient Reason} is \np-hard for both the continuous and the discrete setting, for every odd integer $k \geq 1$. In the continuous case, hardness holds regardless of the norm being used. 

\begin{theorem} \label{theo:minimum-sr}
The following statements hold: 
\begin{enumerate}
\item {\sc $k$-Minimum Sufficient Reason}$(\{0,1\},D_H)$ is \np-hard under  many-to-one reductions, for every fixed odd integer $k \geq 1$. 
    \item {\sc $k$-Minimum Sufficient Reason}$(\mathbb{R},D_p)$ is \np-hard under many-to-one reductions for every fixed odd integer $k \geq 1$ and integer $p > 0$. 
\end{enumerate}
\end{theorem} 

\begin{proof} 
We first consider the discrete case in item (1). We start by providing some intuition regarding our construction. Assume that $k=1$. We reduce from the well-known \np-complete \emph{Vertex Cover} problem:
given an undirected graph $G=(V,E)$, and an integer $\ell\geq 0$, check whether there is a \emph{vertex cover} $C$ in $G$ of size $|C|\leq \ell$. Recall that a vertex cover is a subset of nodes $C\subseteq V$ such that every edge in $E$ has an endpoint in $C$. Let $G$ be a graph with vertex set $V=\{1,\dots,n\}$ and edge set $E=\{e_1,\dots,e_m\}$. Our encoding
considers $n$-dimensional vectors (each component corresponds to a vertex). The vector $\bar x$ is the zero vector, which will be classified positively. The negative example set $S^-$ corresponds to the edges $e_1,\dots,e_m$: each $e_j$ is encoded in the natural way, a component is $1$ iff it is an endpoint of $e_j$. Hence, $S^-$ is simply an encoding of the incidence matrix of $G$. The positive example set $S^+$
contains all the vectors resulting from vectors in $S^-$ by flipping a $1$ to a $0$. Then each vector in $S^-$ has two "guards" in $S^+$ that are closer to $\bar x$. Using these "guards", it follows that if $C$ is a vertex cover, then $C$ is a sufficient reason. On the other hand, if $C$ is not a vertex cover, an edge/vector $e_j$ not being covered directly provides a
counterexample to $C$ being a sufficient reason. We formalize this reduction below.

Given an instance of Vertex Cover, we construct an instance of  
{\sc $1$-Minimum Sufficient Reason}$(\{0,1\},D_H)$ as follows. Assume that $V=\{1,\dots,n\}$ and $E=\{e_1,\dots,e_m\}$, for $n,m\geq 1$. Suppose that $n$ is the vector dimension, and take $\bar x= (0,\dots,0) \in \{0,1\}^n$. For each $j\in \{1,\dots,m\}$, we define the vector $\bar{y}_{j}\in\{0,1\}^n$ such that $\bar{y}_{j}[i] = 1$ if $e_j$ is incident to the vertex $i$, and $\bar{y}_{j}[i] = 0$, otherwise. We define $S^{-} = \{\bar{y}_{j}\mid j\in \{1,\dots,m\}\}$.
For $\bar{y}_{j}$, we denote by $\bar{y}_{j}^1$ and $\bar{y}_{j}^2$, the vectors obtained from $\bar{y}_{j}$ by flipping the first and second component with value $1$, respectively, to $0$, and keeping the remaining vector components unchanged. We finally define $S^+ = \bigcup_{j}\{\bar{y}_{j}^1, \bar{y}_{j}^2\}$. Note that $f^{1}_{S^+,S^-}(\bar{x})=1$, as $d_H(\bar{x},\bar{y})=1$ for every $\bar{y}\in S^{+}$, while $d_H(\bar{x},\bar{y})=2$ for every $\bar{y}\in S^{-}$.

We claim that there is a vertex cover $C$ with $|C|\leq \ell$ if and only if there is a sufficient reason $X$ for $\bar{x}$ with respect to $f^{1}_{S^+,S^-}$ such that $|X|\leq \ell$. 
Suppose first there is such a vertex cover $C\subseteq \{1,\dots,n\}$. We show that $C$ is a sufficient reason. Let $\bar{z}\in\{0,1\}^n$ be an arbitrary vector such that $\bar{z}[i]=\bar{x}[i]=0$, for all $i\in C$.
We show that for every $\bar{y}\in S^-$, there exists $\bar{y}'\in S^+$, such that $d_H(\bar z, \bar y) > d_H(\bar z, \bar{y}')$, and hence $f^{1}_{S^+,S^-}(\bar{z})=1$ as required. Let $\bar{y}_{j}\in S^-$. Since $C$ is a vertex cover, one of the endpoints of $e_j$ is in $C$ and then there is $i\in C$ such that $\bar{y}_{j}[i]=1$. Pick one such $i$, and define $\bar{y}_j'\in S^+$ as the vector resulting from $\bar{y}_{j}$ by flipping $\bar{y}_{j}[i]$ to $0$. 
Since $\bar{y}_{j}$ and $\bar{y}_{j}'$ only differ in one component $i\in C$, we have
$$d_H(\bar z,\bar{y}_{j}) > d_H(\bar z,\bar{y}_{j}') \ \ \iff\ \ |\bar{z}[i]-\bar{y}_{j}[i]| > \ |\bar{z}[i]-\bar{y}_{j}'[i]|\ \ \iff \ \ 1>0$$ 
and then the condition holds.

Assume now that $X\subseteq\{1,\dots,n\}$ is a sufficient reason with $|X|\leq \ell$. We show that $X$ is a vertex cover of $G$. By contradiction, suppose there is an edge $e_j\in E$ whose endpoints are not in $X$. Then the vector $\bar{y}_{j}$ satisfies that $\bar{y}_{j}[i]=\bar{x}[i]=0$ for all $i\in X$. As $\bar{y}_{j}\in S^-$, it follows that $f^{1}_{S^+,S^-}(\bar{y}_{j})=0$, which is a contradiction. 

The hardness of {\sc $k$-Minimum Sufficient Reason}$(\{0,1\}, D_H)$, when $k \geq 3$, is deferred to Section~\ref{sec:further}, where an even stronger result is proved (Theorem~\ref{theo:sigma2p-msr-discrete}).

Now we turn to item (2). The construction for the continuous case with $k\geq 1$ and $\ell_p$-norms is a direct adaptation of the previous idea. Now each of the vectors $e_j$ in $S^-$ must be represented by $(k+1)/2$ sufficiently close vectors: we increment the two components with value $1$ by small quantities. In $S^+$ we add suitable "guards" as before. 

 Given an instance of Vertex Cover, we construct an instance of {\sc $k$-Minimum Sufficient Reason}$(\mathbb{R},D_p)$ as follows. Assume that $V=\{1,\dots,n\}$ and $E=\{e_1,\dots,e_m\}$, for $n,m\geq 1$. Suppose that $n$ is the vector dimension, and take $\bar x= (0,\dots,0) \in \mathbb{R}^n$. Choose $(k+1)/2$ numbers such that $1/2 > \varepsilon_1 >\cdots>\varepsilon_{(k+1)/2} >0$. For each $j\in \{1,\dots,m\}$, and $h\in \{1,\dots,(k+1)/2\}$, we define the vector $\bar{y}_{j,h}\in\mathbb{R}^n$ such that $\bar{y}_{j,h}[i] = 1+\varepsilon_h$ if $e_j$ is incident to the vertex $i$, and $\bar{y}_{j,h}[i] = 0$, otherwise. We also define: $$S^{-} \ = \ \{\bar{y}_{j,h}\mid j\in \{1,\dots,m\}, h\in \{1,\dots,(k+1)/2\}\}.$$ 
For $\bar{y}_{j,h}$, we define $\bar{y}_{j,h}^1$ and $\bar{y}_{j,h}^2$ as the vectors obtained from $\bar{y}_{j,h}$ by changing the first and second components equal to $1+\varepsilon_h$ to $\varepsilon_h$, respectively, and keeping all other components unchanged. We finally define: $$S^+ \ =\ \bigcup_{j,h}\{\bar{y}_{j,h}^1, \bar{y}_{j,h}^2\}.$$ Note that $f^{k}_{S^+,S^-}(\bar{x})=1$, as for every $\bar{y}^a_{j,(k+1)/2}\in S^{+}$ and $\bar{y}_{j',h}\in S^{-}$: 
$$\lVert \bar{y}^a_{j,(k+1)/2} \rVert_p^p \ = \ \varepsilon_{(k+1)/2}^p + (1+\varepsilon_{(k+1)/2})^p \ < \ 2(1+\varepsilon_{h})^p \ = \ \lVert \bar{y}_{j',h} \rVert_p^p.$$

Suppose first  $C\subseteq \{1,\dots,n\}$ is a vertex cover. We claim that $C$ is a sufficient reason for $\bar{x}$ with respect to $f^{k}_{S^+,S^-}$. Let $\bar{z}\in\mathbb{R}^n$ be an arbitrary vector such that $\bar{z}[i]=\bar{x}[i]=0$, for all $i\in C$.
We show that there is an inyective function $g:S^{-}\to S^{+}$ such that for every $\bar{y}\in S^{-}$, we have $\lVert \bar z - \bar y\rVert_p^p > \lVert \bar z - g(\bar y)\rVert_p^p$, and hence $f^{k}_{S^+,S^-}(\bar{z})=1$ as required. Let $\bar{y}_{j,h}\in S^-$. Since $C$ is a vertex cover, one of the endpoints of $e_j$ is in $C$ and then there is $i\in C$ such that $\bar{y}_{j,h}[i]=1+\varepsilon_h$. Pick one such $i$, and define $g(\bar{y}_{j,h})\in S^+$ as the vector resulting from $\bar{y}_{j,h}$ by changing $\bar{y}_{j,h}[i]$ to $\varepsilon_h$. Indeed, the function $g$ is inyective: 
\begin{itemize}
    \item for $\bar{y}_{j,h},\bar{y}_{j',h'}\in {S^-}$ with $j\neq j'$, we have $g(\bar{y}_{j,h})\neq g(\bar{y}_{j',h'})$ as their non-zero components differ; and  
    \item for $\bar{y}_{j,h},\bar{y}_{j,h'}\in {S^-}$ with $h\neq h'$, we have $g(\bar{y}_{j,h})\neq g(\bar{y}_{j,h'})$ as $\varepsilon_{h}\neq \varepsilon_{h'}$. 
\end{itemize}
Since $\bar{y}_{j,h}$ and $g(\bar{y}_{j,h})$ only differ in one component $i\in C$, we have
$$\lVert \bar z - \bar{y}_{j,h}\rVert_p^p \ > \ \lVert \bar z - g(\bar{y}_{j,h})\rVert_p^p \ \ \iff \ \ |\bar{z}[i]-(1+\varepsilon_h)|^p \ > \ |\bar{z}[i]-\varepsilon_h|^p \ \ \iff \ \ (1+\varepsilon_h)^p \ > \ \varepsilon_h^p,$$
and then $g$ satisfies the required conditions.

Assume now that $X\subseteq\{1,\dots,n\}$ is a sufficient reason. We show that $X$ is a vertex cover of $G$. By contradiction, suppose there is an edge $e_j\in E$ whose endpoints are not in $X$. Then the vector $\bar{y}_{j,1}$ satisfies that $\bar{y}_{j,1}[i]=\bar{x}[i]=0$ for all $i\in X$. We claim that $f^{k}_{S^+,S^-}(\bar{y}_{j,1})=0$, which is a contradiction. This follows from the fact that $\lVert \bar{y}_{j,1} - \bar{y}_{j,h} \rVert_p^p < \lVert \bar{y}_{j,1} - \bar{y}_{j',h'}^a \rVert_p^p$, for every $h\in \{1,\dots,(k+1)/2\}$ and $\bar{y}_{j',h'}^a\in S^+$. Indeed, since $\bar{y}_{j',h'}^a$ has exactly  one component with value $1+\varepsilon_{h'}$, and $\bar{y}_{j,1}$ has two components with value $1+\varepsilon_1$, we have:
$$\lVert \bar{y}_{j,1} - \bar{y}_{j',h'}^a \rVert_p^p \ \geq \ (1+\varepsilon_1 - \varepsilon_{h'})^p \ \geq \ 1 \ \geq \ 2\left(\frac{1}{2}\right)^p \ > \ 2(\varepsilon_1 - \varepsilon_h)^p \ = \ \lVert \bar{y}_{j,1} - \bar{y}_{j,h} \rVert_p^p.$$
This finishes the proof of the theorem. 
\end{proof} 

This resolves the complexity of one of our main problems across all the settings examined in the paper. In the next two sections, we address the remaining problems in the continuous setting, specifically exploring how the choice of the metric affects their tractability. To this end, we focus on two of the most commonly studied metrics: $\ell_2$ and $\ell_1$.

\section{The Continuous Setting Based on the 
$\ell_2$-distance}
\label{sec:l2}
It turns out that in case of the $\ell_2$-norm, all mentioned problems, apart from  {\sc $k$-Minimum Sufficient Reason}, are tractable. The main reason is that in the case of the $\ell_2$-norm, an inequality $d_n(\bar{x},\bar{a})\le d_n(\bar{x},\bar{c})$ is the \emph{linear} inequality in $\bar x$ given by $(\bar{a}-\bar{c})^{\top}\bar{x}\ge \frac{1}{2}(\bar{a}-\bar{c})^{\top}(\bar{a}+\bar{c})$. To see the correctness of this inequality, observe that with the $\ell_2$-norm, the set of equidistant points to $\bar{a}$ and $\bar{c}$ is a hyperplane $H$; see Figure~\ref{fig:VoronoiL1L2.L2}. The vector $\bar{a}-\bar{c}$ is orthogonal to $H$, and thus $H$ is defined by the equality $(\bar{a}-\bar{c})^\top x=b$ for some $b$. To find $b$, we simply observe that the average point $\frac{1}{2}(\bar{a}+\bar{c})$ belongs to $H$, and hence $b=\frac{1}{2}(\bar{a}-\bar{c})^\top (\bar{a}+\bar{c})$. It is worth noticing that other norms behave differently, with the set of equidistant points not in a hyperplane (see Figure~\ref{fig:VoronoiL1L2.L1}).

By Proposition~\ref{prop_symm}, this gives a representation of the set $\{\bar x\in\mathbb{R}^n\mid  f_{S^+,S^-}(\bar{x})=1\}$ as a union of at most $|S^+\cup S^-|^{2k} $ many polyhedra, one for each pair of sets $A$ and $B$ in the statement of the proposition. This is a polynomial number of polyhedra in the input. These polyhedra are explicitly given, as we can describe them by a system of linear inequalities in polynomial time. Analogously, the set $\{\bar x\in\mathbb{R}^n \mid f_{S^+,S^-}(\bar{x})=0\}$ is a union of polynomially many ``open polyhedra'', that is, sets of solutions to a 
system of \emph{strict} linear inequalities.


\begin{figure}
    \centering
    \captionbox{Set of equidistant points between $\bar{a}$ and $\bar{c}$ for the $\ell_2$-norm. 
    \label{fig:VoronoiL1L2.L2}}[0.4\textwidth]{
    \includegraphics[width=0.25\textwidth, trim={350 450 880 270},clip]{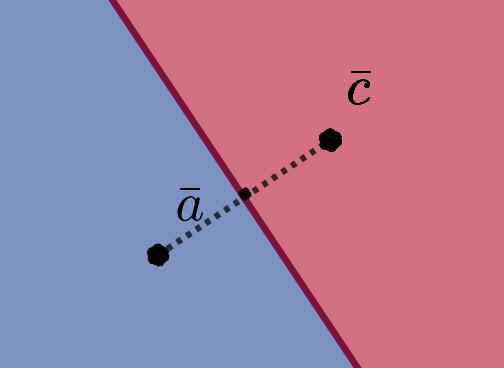}}
    ~$\qquad$
    \captionbox{Set of equidistant points between $\bar{a}$ and $\bar{c}$ for the $\ell_1$-norm.\label{fig:VoronoiL1L2.L1}}[0.4\textwidth]{
    \includegraphics[width=0.25\textwidth, trim={215  65 1015 655},clip]{sections/images/VoronoiL1L2.pdf}}
\end{figure}

\paragraph{{\bf Abductive explanations}}
We start by showing tractability of {\sc $k$-Check Sufficient Reason}. By Proposition \ref{prop_check_minimal_reduction}, this implies tractability of  {\sc $k$-Minimal Sufficient Reason} for the $\ell_2$-norm, which in turn implies that minimal sufficient reasons can be computed in polynomial time in this case. 

\begin{proposition} \label{prop:cont-check-sr}
The problem {\sc $k$-Check Sufficient Reason}$(\mathbb{R},D_2)$, and hence also {\sc $k$-Minimal Sufficient Reason}$(\mathbb{R},D_2)$, can be solved in polynomial time 
for every fixed odd integer $k \geq 1$. 
\end{proposition}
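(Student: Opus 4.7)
The plan is to reduce {\sc $k$-Check Sufficient Reason}$(\mathbb{R},D_2)$ to polynomially many linear programming feasibility checks, exploiting the linearization observation recalled just before the proposition: comparing $\ell_2$-distances from a variable point to two fixed points yields a linear inequality in the variable point. As a preparatory step, I compute $b := f^k_{S^+,S^-}(\bar x) \in \{0,1\}$ by enumerating all size-$k$ subsets of $S^+ \cup S^-$; this is polynomial since $k$ is fixed. Let $\Pi_X := \{\bar y \in \mathbb{R}^n : \bar y[i] = \bar x[i] \text{ for every } i \in X\}$, an affine subspace described by $|X|$ linear equalities. Then $X$ is a sufficient reason for $\bar x$ iff $\Pi_X$ contains no $\bar y$ with $f^k_{S^+,S^-}(\bar y) = 1-b$.

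I next apply Proposition~\ref{prop_symm} to decompose $\{\bar y \in \mathbb{R}^n : f^k_{S^+,S^-}(\bar y) = 1-b\}$ as a union of explicit regions indexed by pairs $(A,B)$, where $A$ is a size-$(k+1)/2$ subset of the class opposite to $b$ and $B$ is a size-at-most-$(k-1)/2$ subset of the class of $b$. By the linearization, each such region is cut out by polynomially many linear inequalities of the form $2\bar y^{\top}(\bar c - \bar a) \leq \|\bar c\|_2^2 - \|\bar a\|_2^2$ (one per pair $\bar a \in A$, $\bar c$ outside the blocking set), possibly strict. Since $k$ is fixed, the number of $(A,B)$ pairs is polynomial in $|S^+ \cup S^-|$. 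For each one I conjoin its system with the equalities defining $\Pi_X$ and decide feasibility by linear programming. The set $X$ is a sufficient reason iff every one of these polynomially many LPs is infeasible.

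The only step that requires a little extra care — and the one I expect to be the main subtlety — is handling the strict inequalities that arise when $1-b = 0$ (cf.\ Proposition~\ref{prop_symm}(b)). I handle it via the standard slack trick: replace each strict inequality $\bar v^{\top}\bar y < \beta$ by $\bar v^{\top}\bar y + \varepsilon \leq \beta$, add the bound $\varepsilon \leq 1$, and maximize $\varepsilon$; the original strict system is feasible iff the optimum is strictly positive. Since LP is polynomial-time, the overall procedure runs in polynomial time.

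Finally, the claim for {\sc $k$-Minimal Sufficient Reason}$(\mathbb{R},D_2)$ is immediate from Proposition~\ref{prop_check_minimal_reduction}; moreover, the greedy loop underlying that reduction simultaneously produces a minimal sufficient reason in polynomial time, so minimal sufficient reasons can in fact be \emph{computed} in polynomial time in this setting.
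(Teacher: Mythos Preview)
Your proposal is correct and follows essentially the same route as the paper: decompose the opposite-class region via Proposition~\ref{prop_symm} into polynomially many (open or closed) polyhedra, intersect each with the affine subspace $\Pi_X$, and decide feasibility by LP, handling the strict-inequality case with the standard slack-variable trick. The only cosmetic differences are that you explicitly precompute $b$ and add the bound $\varepsilon\le 1$ to guarantee boundedness of the auxiliary LP, both of which are harmless refinements.
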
 

\begin{proof}
    Assume first that $f_{S^+,S^-}^k(\bar{x})=0$. Let $ X\subseteq \{1,\ldots,n\}$ and  consider the affine subspace
    $$U(X,\bar{x}) := \{\bar{y} \in \mathbb{R}^n \mid \bar{x}[i] = \bar{y}[i], \text{for every $i\in X$}\}.$$
    Then $X$ is \emph{not} a sufficient reason for $\bar{x}$ if and only if $U(X,\bar{x})$  intersects the set $\{\bar y\in\mathbb{R}^n \mid f_{S^+,S^-}(\bar{y})=1\}$. By Proposition \ref{prop_symm}, this set is a union of polynomially many polyhedra. It remains to check if our affine subspace intersects one of these polyhedra. The intersection of an affine subspace and a polyhedron is a polyhedron, and checking emptiness of a polyhedron is equivalent to linear programming which thus can be done in polynomial time~\cite{schrijver_theory_1998}.

   In the case $f_{S^+,S^-}^k(\bar{x})=1$, we have to check, whether our affine subspace intersects the set $\{\bar y\in\mathbb{R}^n  \mid f_{S^+,S^-}(\bar{y})=0\}$. This time,  by Proposition \ref{prop_symm}, this set is a union of sets of solutions to systems of strict linear inequalities. The same argument as in the previous case reduces our problem to the emptiness problem for an intersection of an affine subspace with an open polyhedron. The latter is just the feasibility problem for systems of linear equalities and strict linear inequalities, which can be reduced to linear programming (with non-strict inequalities), solving our problem in polynomial time. Namely, let $S$ be a system of linear equalities and strict linear inequalities.  Consider a system $\widehat{S}$ of non-strict linear inequalities over variables of $S$ and a new variable $\varepsilon$,  obtained by turning every strict inequality $l > 0$ of $S$ into a non-strict inequality $l \ge \varepsilon$. Feasibility of $S$ is equivalent to existence of a feasible solution to $\widehat{S}$ with positive $\varepsilon$. To find out if the latter is true, it is enough to find the optimal solution to the problem of maximizing $\varepsilon$ subject to  $\widehat{S}$.
\end{proof}

As a corollary, we obtain the following: 

\begin{corollary}
     \label{coro:cont-minimal-sr-comp}
 Consider the setting $(\mathbb{R},D_2)$ and let $k \geq 1$ be any odd integer. There is a polynomial time algorithm that, given sets $S^+,S^- \subseteq \mathbb{R}^n$ and a vector $\bar x \in \mathbb{R}^n$, computes a minimal sufficient reason $X$ for $\bar x$ with respect to $f^k_{S^+,S^-}$. 
\end{corollary}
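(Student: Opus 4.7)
The plan is to convert the polynomial-time checking algorithm from Proposition~\ref{prop:cont-check-sr} into a polynomial-time \emph{computation} algorithm via a standard greedy deletion procedure. This is essentially the algorithmic content already alluded to in Proposition~\ref{prop_check_minimal_reduction}, but now we must actually produce a minimal sufficient reason, not merely verify one.

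Concretely, I would proceed as follows. Initialize $X := \{1, \ldots, n\}$; this is trivially a sufficient reason for $\bar{x}$ with respect to $f^k_{S^+, S^-}$, since fixing every coordinate of $\bar{x}$ leaves only $\bar{x}$ itself as a compatible input. Then iterate over the components $i = 1, 2, \ldots, n$: for each $i$, invoke the algorithm of Proposition~\ref{prop:cont-check-sr} to test whether $X \setminus \{i\}$ is still a sufficient reason, and if so, replace $X$ by $X \setminus \{i\}$. After all $n$ iterations, output the resulting set $X$.

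Correctness requires showing both that $X$ is a sufficient reason and that it is minimal. The first is immediate, since the invariant ``$X$ is a sufficient reason'' is preserved at every step: we only remove an index $i$ when the check confirms the resulting set still qualifies. For minimality, consider any $i \in X$ in the final output. At the iteration when $i$ was processed, the check rejected $X_{\text{then}} \setminus \{i\}$, where $X_{\text{then}} \supseteq X$ was the current set at that moment. Since every superset of a sufficient reason is a sufficient reason (equivalently, every subset of a non-sufficient reason is non-sufficient), we conclude that $X \setminus \{i\}$ is not a sufficient reason either. Hence no single element of $X$ can be removed, which is precisely the definition of minimality. The overall running time is $n$ calls to the polynomial-time subroutine of Proposition~\ref{prop:cont-check-sr}, giving a polynomial-time algorithm.

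I do not anticipate a genuine obstacle here: the only subtle point is ensuring that the monotonicity argument ``$X \setminus \{i\}$ not sufficient implies a later subset is also not sufficient'' is invoked correctly, which follows directly from the fact that imposing more coordinate-fixing constraints can only shrink the set of compatible inputs $\bar{y}$. All the real work has already been done in Proposition~\ref{prop:cont-check-sr}.
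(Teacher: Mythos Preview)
Your proposal is correct and matches the paper's implicit argument: the corollary is stated without proof, relying on precisely the greedy deletion procedure you describe (already alluded to around Proposition~\ref{prop_check_minimal_reduction}) combined with the polynomial-time checker of Proposition~\ref{prop:cont-check-sr}. Your monotonicity justification for minimality is the right one and nothing further is needed.
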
 

\paragraph{{\bf Counterfactual explanations}}
Tractability of the {\sc $k$-Counterfactual Explanation} problem is proved similarly to {\sc $k$-Check Sufficient Reason}, but this time we use polynomial-time solvability of convex quadratic programming~\cite{kozlov_polynomial_1980}. 
A very similar algorithm is described (although without a theoretical analysis) by  Magagnini et al.~\cite{magagniniNearestNeighborsCounterfactuals2025}.

\begin{theorem} \label{theo:l2-cont-counterfactual}
The problem {\sc $k$-Counterfactual Explanation}$(\mathbb{R},D_2)$ can be solved in polynomial time for every fixed odd integer $k \geq 1$.  
\end{theorem}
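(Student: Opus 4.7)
The plan is to mirror the proof of Proposition \ref{prop:cont-check-sr}, but replace the use of linear programming with a convex quadratic program. The starting observation is that, by Proposition \ref{prop_symm} together with the fact that pairwise $\ell_2$-distance comparisons are linear in the query point, the sets $\{\bar y \in \mathbb{R}^n \mid f^k_{S^+,S^-}(\bar y) = 1\}$ and $\{\bar y \in \mathbb{R}^n \mid f^k_{S^+,S^-}(\bar y) = 0\}$ are unions of at most $|S^+ \cup S^-|^{2k}$ polyhedra (resp.\ open polyhedra), each described in polynomial time by a system of non-strict (resp.\ strict) linear inequalities in the variable $\bar y$. There are only polynomially many such systems, so we can afford to iterate over all of them.

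First I would compute $f^k_{S^+,S^-}(\bar x)$ in polynomial time (for instance, by sorting the points of $S^+\cup S^-$ by their distance to $\bar x$ and inspecting the $k$ smallest). Depending on its value, we need to certify the existence, or absence, of a point $\bar y$ in the ``opposite'' classification region lying within $\ell_2$-distance at most $\ell$ of $\bar x$. The crucial observation is that the constraint $\lVert \bar x - \bar y\rVert_2 \le \ell$ is equivalent to the convex quadratic constraint $\lVert \bar x - \bar y\rVert_2^2 \le \ell^2$. Hence, in the case $f^k_{S^+,S^-}(\bar x) = 0$, we iterate over the polynomially many polyhedra $P$ making up $\{\bar y \mid f^k_{S^+,S^-}(\bar y) = 1\}$, and for each such $P$ we solve the convex quadratic program of minimizing $\lVert \bar x - \bar y\rVert_2^2$ subject to $\bar y \in P$. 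We answer \textsc{Yes} iff one of these minima is at most $\ell^2$. Polynomial-time solvability of convex quadratic programming~\cite{kozlov_polynomial_1980} closes this case.

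The main technical obstacle lies in the case $f^k_{S^+,S^-}(\bar x) = 1$, where the opposite region is a union of \emph{open} polyhedra defined by strict inequalities, so a naive minimum of $\lVert \bar x - \bar y\rVert_2^2$ over the closure may be attained on the boundary and correspond to no actual counterfactual. I would reuse the slack-variable trick from the proof of Proposition \ref{prop:cont-check-sr}: for each open polyhedron $P = \{\bar y \mid A\bar y > b\}$, consider the auxiliary convex quadratic program
\begin{equation*}
  \max \varepsilon \quad \text{subject to} \quad A\bar y \ge b + \varepsilon \mathbf{1}, \quad \lVert \bar x - \bar y\rVert_2^2 \le \ell^2.
\end{equation*}
This is a linear objective over a convex feasible set defined by linear and convex-quadratic constraints, so it is solvable in polynomial time. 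A point $\bar y \in P$ with $\lVert \bar x - \bar y\rVert_2 \le \ell$ exists if and only if the optimal value of this auxiliary program is strictly positive. Taking the disjunction over all polynomially many open polyhedra yields the desired polynomial-time algorithm. As an added bonus, whenever the answer is \textsc{Yes}, the optimizer $\bar y$ returned by the convex QP is itself an explicit counterfactual, which will be useful for the computation-problem variant discussed after the decision results.
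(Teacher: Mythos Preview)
Your treatment of the case $f^k_{S^+,S^-}(\bar x)=0$ is identical to the paper's. For the case $f^k_{S^+,S^-}(\bar x)=1$, your slack-variable reformulation is logically sound (the ``optimum strictly positive'' equivalence you state is correct), but the auxiliary program you write is no longer a convex quadratic program in the sense of Kozlov--Tarasov--Khachiyan: it has a linear objective subject to a \emph{quadratic} constraint $\lVert \bar x-\bar y\rVert_2^2\le \ell^2$, i.e., it is an SOCP/QCQP. Such problems are solvable to arbitrary precision in polynomial time, but the optimal $\varepsilon$ need not be rational (for instance, maximizing $\varepsilon$ subject to $y_1+y_2\ge\varepsilon$ and $y_1^2+y_2^2\le 1$ yields $\varepsilon=\sqrt 2$). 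Hence the step ``decide whether the optimum is strictly positive'' is not covered by the result you invoke, and as written the polynomial-time claim for this case is under-justified.

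The paper avoids this by keeping the quadratic in the \emph{objective} and the constraints purely linear, so that Kozlov et al.\ gives an exact rational optimum. Concretely: first test via LP whether the open polyhedron $P$ is non-empty; if it is, its closure $\widehat P$ is full-dimensional, and one shows the topological equivalence that $P$ meets the closed ball $B_\ell(\bar x)$ if and only if $\widehat P$ meets the \emph{open} ball. The latter amounts to checking whether $\min_{\bar y\in \widehat P}\lVert \bar x-\bar y\rVert_2^2<\ell^2$, a standard convex QP over a closed polyhedron whose optimum is rational and computable exactly in polynomial time. Your route can likely be salvaged (e.g., by proving a polynomial bound on how close a non-zero optimum can be to~$0$, or by citing an exact feasibility result for systems with a single convex quadratic constraint), but the paper's reformulation is the cleaner way to stay strictly within the cited QP result.
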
 

\begin{proof}
 In the {\sc $k$-Counterfactual Explanation}$(\mathbb{R},D_2)$ problem, given $\bar{x} \in \mathbb{R}^n$ and $\ell > 0$, the goal is to check if the ball $B_\ell(\bar{x}) = \{\bar{y} \in \mathbb{R}^n \mid \|\bar{y} - \bar{x}\|_2 \leq \ell\}$ contains a vector with a different $f^k_{S^+, S^-}$ value than $\bar{x}$. If $f^k_{S^+, S^-}(\bar{x}) = 0$, this reduces to checking whether $B_\ell(\bar{x})$ intersects the set $\{\bar{y} \in \mathbb{R}^n \mid f^k_{S^+, S^-}(\bar{y}) = 1\}$, which, by Proposition \ref{prop_symm}, is a union of polynomially many polyhedra.  
Thus, the problem reduces to determining whether $B_\ell(\bar{x})$ intersects a given polyhedron $P$. This can be solved via convex quadratic programming, which minimizes a positive definite quadratic form under a system of non-strict linear inequalities and is solvable in polynomial time due to Kozlov, Tarasov, and Khachiyan~\cite{kozlov_polynomial_1980}. Specifically, we minimize $q(\bar{y}) = \|\bar{x} - \bar{y}\|_2^2$ subject to constraints defining $P$. The answer to {\sc $k$-Counterfactual Explanation}$(\mathbb{R},D_2)$ is {\sc Yes} if and only if the minimum value is at most $\ell^2$, with the optimal solution providing the counterfactual explanation.

Similarly, when $f^k_{S^+, S^-}(\bar{x}) = 1$, the problem reduces to checking whether the ball $B_\ell(\bar{x})$ intersects a given open polyhedron, defined as the solution set to a system of strict linear inequalities. The argument from the previous paragraph requires modification because the algorithm in \cite{kozlov_polynomial_1980} assumes non-strict inequalities in the constraints.  
To address this, we first check whether $P$ is empty, reducing the problem to linear programming as described in the proof of Proposition~\ref{prop:cont-check-sr}. If $P$ is non-empty (since otherwise, there is nothing left to check), we construct a polyhedron $\widehat{P}$ by converting all strict inequalities of $P$ into non-strict ones. Note that $P$ corresponds to the interior of $\widehat{P}$.
We claim that $P$ intersects $B_\ell(\bar x)$ if and only if $\widehat{P}$ intersects the \emph{interior} of $B_\ell(\bar x)$. The latter can be reduced to a problem of minimizing a convex quadratic objective subject to (closed) polyhedron, which in turn can be solved in polynomial time with the techniques by Kozlov, Tarasov, and  Khachiyan~\cite{kozlov_polynomial_1980}. Indeed, consider the problem of minimizing $q(\bar y) = \| \bar x - \bar y\|_2^2$ subject to $\bar{y}\in \widehat{P}$ (which is a system of non-strict inequalities, as required in the algorithm). The answer is {\sc Yes} if and only if the optimum is strictly less than $\ell^2$.

We now establish the claim. First, assume that $P$ intersects $B_\ell(\bar{x})$. We show that $P$ (and hence $\widehat{P}$) also intersects the interior of $B_\ell(\bar{x})$. If $P$ has a point on the boundary of the ball, it must also have a point inside, as $P$ is open and any boundary point of the ball has interior points arbitrarily close to it.  
Now assume that $\widehat{P}$ intersects the interior of $B_\ell(\bar{x})$. Since $P$ is the interior of $\widehat{P}$, it is non-empty, and $\widehat{P}$ is a full-dimensional polyhedron. By Proposition 2.1.8 in \cite{hiriart-urruty_fundamentals_2001}, every point in a full-dimensional polyhedron has interior points arbitrarily close to it. Thus, if $\widehat{P}$ intersects the interior of the ball, $P$, as the interior of $\widehat{P}$, must also intersect the interior of the ball. 
\end{proof}

By extending the techniques used in the proof of Theorem \ref{theo:l2-cont-counterfactual}, we can 
conclude that in the continuous setting, a counterfactual explanation can be computed in polynomial time, provided one exists, when $\ell_2$-distances are used.
\begin{corollary}
     \label{coro:cont-counterfactual}
 Consider the setting $(\mathbb{R},D_2)$ and let $k \geq 1$ be any odd integer. There is a polynomial time algorithm that, given sets $S^+,S^- \subseteq \mathbb{R}^n$, vector $\bar x \in \mathbb{R}^n$, and rational $\ell > 0$, it computes a counterfactual explanation for $\bar x$ with respect to $f^k_{S^+,S^-}$ at distance at most $\ell$ in case there exists one. 
\end{corollary}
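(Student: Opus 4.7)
The plan is to lift the decision algorithm of Theorem~\ref{theo:l2-cont-counterfactual} into a constructive one: for each of the polynomially many polyhedra identified via Proposition~\ref{prop_symm} comprising the ``opposite class'' region $\{\bar y : f^k_{S^+,S^-}(\bar y) \neq f^k_{S^+,S^-}(\bar x)\}$, I would either certify that it contains no counterfactual at distance $\le \ell$ or explicitly output one. Enumerating the successful polyhedra and returning any point found yields a valid counterfactual whenever one exists.

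When $f^k_{S^+,S^-}(\bar x)=0$, the opposite class is a union of closed polyhedra. For each such $P$, solving the convex quadratic program $\min\{\|\bar y-\bar x\|_2^2 : \bar y\in P\}$ in polynomial time~\cite{kozlov_polynomial_1980} yields a point $\bar y^*\in P$; if $\|\bar y^*-\bar x\|_2^2\le\ell^2$, this is a valid counterfactual. This essentially follows the proof of Theorem~\ref{theo:l2-cont-counterfactual} and handles this half of the corollary directly.

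The interesting case is $f^k_{S^+,S^-}(\bar x)=1$, where the opposite class is a union of \emph{open} polyhedra $P$ defined by strict linear inequalities, so the QP cannot be run over $P$ as stated. For each such $P$ I would proceed in three steps: first, use the $\varepsilon$-maximization LP from the proof of Proposition~\ref{prop:cont-check-sr} to test whether $P$ is nonempty and, when it is, extract an explicit strict interior witness $\bar z\in P$; second, solve the convex QP over the closure $\widehat P$ to obtain a minimizer $\bar y^*$ with squared distance $d^{*2}$; third, if $d^{*2}<\ell^2$, return the convex combination $\bar y=(1-\delta)\bar y^*+\delta\bar z$ for a suitably small rational $\delta\in(0,1]$. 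By convexity, every strict inequality defining $P$ remains strictly satisfied at $\bar y$ because $\bar z$ satisfies it strictly and $\bar y^*$ satisfies its closed version, so $\bar y\in P$; and $\|\bar y-\bar x\|_2^2$ is an explicit quadratic in $\delta$ equal to $d^{*2}<\ell^2$ at $\delta=0$, so one can compute in polynomial time a rational $\delta$ of polynomial bit length keeping $\|\bar y-\bar x\|_2<\ell$ (for instance, via an explicit upper bound on the quadratic's linear and quadratic coefficients in terms of the input bit length).

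The main obstacle is exactly this last case: the QP minimum over $\widehat P$ may be attained only on its boundary, hence outside $P$, so the natural ``optimum'' is not itself a legal counterfactual. The perturbation toward the strict interior witness $\bar z$—whose polynomial-time extraction is precisely what the $\varepsilon$-trick from Proposition~\ref{prop:cont-check-sr} provides—together with continuity of the squared-distance function resolves the issue while preserving polynomial bit complexity.
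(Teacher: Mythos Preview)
Your proposal is correct and follows the same overall architecture as the paper: enumerate the polynomially many polyhedra from Proposition~\ref{prop_symm}, handle the closed case by reading off the QP minimizer directly, and in the open case perturb the QP minimizer over the closure $\widehat P$ into the interior $P$ while staying inside the ball.

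The one genuine difference is in how you produce the perturbation direction. The paper, after obtaining the QP minimizer $\bar y^*\in\widehat P$, identifies the inequalities of $\widehat P$ that are tight at $\bar y^*$ and solves a fresh LP to find a vector $\beta$ with $\langle\alpha,\beta\rangle>0$ for each tight facet normal $\alpha$; it then moves from $\bar y^*$ along $\beta$ by a small amount. You instead precompute a single global interior witness $\bar z\in P$ via the $\varepsilon$-maximization LP of Proposition~\ref{prop:cont-check-sr} and take a convex combination $(1-\delta)\bar y^*+\delta\bar z$. Your route is arguably cleaner: the direction $\bar z-\bar y^*$ is automatically a strict improving direction for every tight constraint (since $\langle\alpha,\bar z\rangle>c=\langle\alpha,\bar y^*\rangle$), so you avoid having to detect which constraints are tight and solve a second LP at $\bar y^*$. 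You are also more explicit than the paper about the bit-complexity of choosing $\delta$, which is a point the paper glosses over with ``a small amount.'' Both arguments rely on the same underlying fact from Theorem~\ref{theo:l2-cont-counterfactual}, namely that $P\cap B_\ell(\bar x)\neq\emptyset$ iff the QP optimum over $\widehat P$ is strictly below $\ell^2$.
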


\begin{proof}
This reduces to finding a point in the intersection of the ball $B_\ell(\bar x)$ and an open polyhedron $P$, if this intersection is non-empty. If it is non-empty, then  $\widehat{P}$ intersects the interior of $B_\ell(\bar x)$. We can find a point $\bar{y}$ in this intersection by minimizing the quadratic form  $q(\bar y) = \| \bar x - \bar y\|_{2}^{2}$ subject to $\widehat{P}$. Suppose $\bar y$ lies on the boundary of $\widehat{P}$ (otherwise, we are done). That is, some inequalities, defining $\widehat{P}$, turn into equalities on $\bar y$. We now need to find a direction that points to the interior of $\widehat{P}$ from  this point.  More precisely, our task is to find a vector $\beta$ such that $\langle \alpha, \beta\rangle > 0$ for every inequality $\langle \alpha, \bar y\rangle \ge c$ which turns into equality on $\bar y$. After such $\beta$ is found, we just need to move from $\bar y$ along $\beta$ a little bit so that we still inside $B_\ell(\bar x)$. Finding such $\beta$ is reducible to finding a solution to a system of strict linear inequalities. In turn, the latter can be reduced in polynomial time to linear programming as explained in the proof of Proposition \ref{prop:cont-check-sr}.
\end{proof}

\paragraph{{\bf The parameterized complexity of Counterfactual Explanation for the $\ell_2$-distance.}} Running time of our algorithms  for problems  {\sc $k$-Minimal Sufficient Reason}$(\mathbb{R},D_2)$ and {\sc $k$-Counterfactual Explanation}$(\mathbb{R},D_2)$  depends exponentially on $k$. More specifically, it is $n^{O(k)}$, where $n$ is the size of the dataset. This comes from the fact that in our algorithms we essentially go through all possible $k$-sized sets of points of the dataset that could be $k$ closest neighbors.

This raises a question -- is there a fixed-parameter tractable algorithm for our problems in the case of $\ell_2$-distance, where $k$ is considered as a parameter? That is, are there algorithms running in time $f(k)\cdot n^c$, for some computable function $f$ and constant $c$? We show that this is unlikely for the counterfactual explanations.

\begin{theorem}
    \label{thm_w1hard}
   The problem  {\sc $k$-Counterfactual Explanation}$(\mathbb{R},D_2)$ is W$[1]$-hard \footnote{The class W$[1]$ is the analog of $\np$ in parameterized complexity. For further details see \cite{cygan2015parameterized}.}
   when $k$ is a parameter, and \np-complete when $k$ is part of the input.
\end{theorem}
\begin{proof}
Both results follow from the following lemma.

\begin{lemma}
    \label{lem_reduction_clique}
    There exists a polynomial-time many-to-one reduction from the $k$-RegClique problem (given a regular graph $G$, decide, whether it has a $k$-clique) to the  {\sc $(2k-1)$-Counterfactual Explanation}$(\mathbb{R},D_2)$.
\end{lemma}
Indeed, $k$-RegClique is known to be both W$[1]$-complete~\cite[Theorem 13.4]{cygan2015parameterized} when $k$ is a parameter and \np-complete when $k$ is a part of the input~\cite[Problem GT.20, page 194]{garey1979computers}.

In the rest of the proof, we establish Lemma \ref{lem_reduction_clique} (the fact that {\sc $k$-Counterfactual Explanation}$(\mathbb{R},D_2)$ belongs to $\np$ is immediate).
For readability, throughout the proof we omit the bar notation and write vectors as $x,x_i,y,0,\dots$. For our reduction, we first require the following lemma that provides an embedding of the nodes of our graph into an Euclidean space that respects the connections between the nodes. The resulting vectors are actually $0,1$-vectors so we formulate our lemma for the Hamming distance.

\begin{lemma}
\label{lem_embedding}
    Let $G$ be a regular graph with $n$ nodes and degree $d$. Let $m = n^2 + n + d - 5$. In $O(n^3)$-time, we can assign to each node of $G$ a vector in $\{0, 1\}^m$ such that:
    \begin{itemize}
        \item[a)] all $n$ vectors have Hamming weight $2(n + d - 3)$;
        \item[b)] vectors that correspond to any two distinct nodes of $G$ that are connected by an edge have Hamming distance $2(n + d - 3)$;
        \item[c)] vectors that correspond to any two distinct nodes of $G$ that are not connected by an edge  have Hamming distance $2(n + d - 1)$.
    \end{itemize}
\end{lemma}
\begin{proof} For simplicity, let us first explain how to satisfy just conditions b) and c). We can do this using $n^2$ coordinates that will be split into $n$ blocks of size $n$. Both the blocks and coordinates in each block are indexed by the nodes of $G$.

We define the vector of a node $u$ as follows. In each of its blocks that corresponds to a vertex, different from $u$, we put the ``one-hot encoding'' of $u$ (the vector that has $1$ in the $u$-coordinate and $0$ in the rest of the coordinates). Now, in the $u$-block we put the vector that has 1s exactly in coordinates that correspond to neighbors of $u$ (thus, it will have $d$ ones).

Take any two distinct nodes $u,v$ and their corresponding vectors $f_u, f_v$. In the blocks that correspond to vertices, distinct from both $u$ and $v$, both $f_u$ and $f_v$ have a single 1 in distinct position. These blocks thus contribute $2(n - 2)$ to the Hamming distance. Now, consider a $u$-block or a $v$-block. In these blocks, one of the vectors $f_u, f_v$ have $d$ ones, and the other just a single one. If $u,v$ are connected in $G$, then both blocks have a common one so that they contribute $2(d - 1)$ to the Hamming distance. Now, if $u,v$ are not connected, in the $u$-block and $v$-block with have $d$ ones and 1 separate one, contributing $2(d +1)$ to the Hamming distance. This establishes conditions b) and c).

To obtain condition a), observe that, in the current construction, all vectors have the same number of 1s, namely $n - 1+ d$. It thus suffices to add $2(n + d - 3) - (n - 1 + d) = n +d -5$ coordinates where all vectors have 1s.
\end{proof}

We now explain how we get from a $d$-regular graph $G$ to an instance of the  {\sc $(2k - 1)$-Counterfactual Explanation}$(\mathbb{R},D_2)$ problem. We first give an argument for the case when points in a dataset can have multiplicities. In the end we will explain how to get rid of this assumption. 

Namely, we take $0,1$-vectors from Lemma \ref{lem_embedding}, assigned to nodes of $G$, all classified positively. We also add the all-0 vector to the dataset, classified negatively and with multiplicity $k$. Due to conditions a), b), c), the all-0 point has the $\ell_2$-distance $\alpha = \sqrt{2(n + d - 3)}$ from all the other points of the dataset. Any two positive points of the data set that correspond to two connected nodes of $G$ also have the $\ell_2$-distance equal to $\alpha$ between them. At the same time, points, corresponding to nodes, not connected by an edge, are all at distance $\beta =  \sqrt{2(n + d - 1)}$ from each other. 

In our instance of the  {\sc $(2k - 1)$-Counterfactual Explanation}$(\mathbb{R},D_2)$ problem, the question will be: is there a point at distance at most $R$ (a number that will be defined later) from the all-0 vector $x = (0, \ldots, 0)$ where $f^{2k-1}_{S^+,S^-}$ takes the positive value? Observe that at $x$, the $(2k-1)$-NN takes the negative value -- precisely at $0$, there is a negative point with multiplicity $k$, and positive points are at some positive distance from $0$.

As the point $0$ has multiplicity $k$, in order for $f^{2k-1}_{S^+,S^-}(y)$ to be positive for a point $y$,    there have to exist $k$ positive points $x_1, \ldots, x_k$ in the dataset such that $y$ is closer or at the same distance to all these points as to $0$. The minimal distance from $0$ at which such $y$ exists is formally defined in the following definition.

\begin{definition}
    Let $x_1, \ldots, x_k$ be $k$ points in an Euclidean space $\mathbb{R}^m$ for some $m$.  Define $r(x_1, \ldots, x_k)$ to be the minimum of $\|y\|_2$ over all $y\in\mathbb{R}^m$ satisfying:
    \begin{equation}
    \label{eq_l2conditions}
        \|y - x_1 \|_2 \le \|y\|_2, \ldots, \|y - x_k \|_2 \le \|y\|_2.
    \end{equation}
    (if no such $y$ exists, we set $r(x_1, \ldots, x_k) = +\infty$).
\end{definition}

The answer to the counterfactual explanation problem is thus determined by whether there exist $k$ positive points $x_1, \ldots, x_k$ in the dataset with $r(x_1, \ldots, x_k) \le R$. We now relate this with the existence of the $k$-clique in the initial graph $G$. 

If a $k$-clique exists, the corresponding $k$ points $x_1, \ldots, x_k$ of the dataset will all be at distance $\alpha$ from each other and from $0$. As we will see later, $r(x_1, \ldots, x_k) = \alpha\sqrt{\frac{k}{2(k + 1)}}$ in this case. Now, when no $k$-clique exists, we will argue that $r(x_1, \ldots, x_k)$ is slightly bigger than  $\alpha\sqrt{\frac{k}{2(k + 1)}}$ for any $k$ positive points $x_1, \ldots, x_k$. This will happen because  $0$ is still at distance $\alpha$ from $x_1, \ldots, x_k$, and distances between $x_1, \ldots, x_k$ are all at least $\alpha$ with at least one distance being at least $\beta > \alpha$ (for any $k$ nodes there will be at least two nodes not connected by an edge, so their points will be at a larger distance). The following lemma justifies this reasoning.

\begin{lemma}
\label{lem_geometric}
    Let $x_1, \ldots, x_k$ be $k$ points in an Euclidean space $\mathbb{R}^m$ for some $m$.
    Further, let $0 < \alpha \le \beta$. Then the following hold:
    \begin{itemize}
        \item[a)] if the $\ell_2$-distance between any two points in $\{0, x_1, \ldots, x_k\}$ is precisely $\alpha$, then  
        \[r(x_1, \ldots, x_k) \le \alpha \sqrt{\frac{k}{2(k + 1)}}\]

         \item[b)] if  the $\ell_2$-distance from $0$ to any point in $\{x_1, \ldots, x_k\}$ is precisely $\alpha$, the distance between any two points in $\{x_1, \ldots, x_k\}$ is at least $\alpha$, and for at least one pair of points in $\{x_1, \ldots, x_k\}$ the distance is at least $\beta$,  then 
        \[r(x_1, \ldots, x_k) \ge \alpha \sqrt{\frac{k}{2(k + 1 - \delta)}}, \qquad \text{where } \delta = \frac{\beta^2 - \alpha^2}{k\alpha^2}.\]
        
    \end{itemize}
    \end{lemma}   
    \begin{proof}
    Within the proof, we omit the subscript $\|\|_2$ in the notation for the $\ell_2$-norm.
    
    Let us start with a). For any $i, j\in\{1, \ldots, k\}$, we have:
    \[\langle x_i, x_j\rangle = \frac{\langle x_i, x_i\rangle + \langle x_j, x_j\rangle - \langle x_i - x_j, x_i - x_j\rangle }{2} = \begin{cases}
        \alpha^2 & i = j, \\
        \alpha^2/2 & i \neq j.
    \end{cases}\]
        We take $y = (x_1 + \ldots + x_k)/(k + 1)$, the center of mass of the regular simplex, formed by $0, x_1, \ldots, x_k$. It is of equal distance to all the points $0, x_1, \ldots, x_k$, thus satisfying \eqref{eq_l2conditions}. The square of this distance is computed as:
        \[\langle \frac{x_1 + \ldots + x_k}{k + 1}, \frac{x_1 + \ldots + x_k}{k + 1}\rangle = \frac{k \alpha^2 + (k^2 - k) \alpha^2/2}{(k +1)^2} = \frac{k \alpha^2}{2(k + 1)}, \]
        giving us the required upper bound.

We now establish b). Let $y$ be any point, satisfying \eqref{eq_l2conditions}. Rewriting these conditions in terms of dot-products, we obtain
\begin{equation}
    \label{eq_l2conditions2}
    \langle y - x_i, y - x_i \rangle \le \langle y, y\rangle \iff \langle y, x_i \rangle \ge \|x_i\|^2/2  = \alpha^2/2\qquad i = 1, \ldots, k.
\end{equation}

        Define $z = (x_1+ \ldots + x_k)$. We lower bound $\|y\|$ by the length of the projection of the vector $y$ to the direction of the vector $z$, that is,
        \[\|y\| \ge \frac{|\langle y, z\rangle|}{\|z\|}.\]
        Let us first explain why $z \neq 0$. Indeed, by \eqref{eq_l2conditions2}, the dot-product $\langle y, x_i\rangle$ is strictly positive for every $i = 1, \ldots, k$ as $\|x_i\| \ge \alpha > 0$. Hence, $\langle y, z\rangle$ is also strictly positive as a sum of positive values. As a result, we obtain:
         \begin{align*}
            \|y\| \ge \frac{|\langle y, z\rangle|}{\|z\|} = \frac{\langle y, x_1\rangle + \ldots + \langle y, x_k\rangle}{\sqrt{\langle x_1+ \ldots + x_k, x_1 + \ldots + x_k\rangle}}
        \end{align*}
      The numerator in the last expression is bounded from below by $k\alpha^2/2$ due to \eqref{eq_l2conditions2}. We now upper bound the denominator. The expression $\langle x_1+ \ldots + x_k, x_1 + \ldots + x_k\rangle$ has $k$ terms of the form $\|x_i\|^2$ for $i = 1, \ldots, k$ that contribute precisely $k\alpha^2$ to the sum. Additionally, we have $k^2 - k$ dot-products of the form $\langle x_i, x_j\rangle$ for $i\neq j$. Each of them can be written as
      \[\langle x_i, x_j\rangle = \frac{\langle x_i, x_i\rangle + \langle x_j, x_j\rangle - \langle x_i - x_j, x_i - x_j\rangle }{2} = \alpha^2 - \|x_i - x_j\|^2/2.\]
      By the conditions of the lemma, each of these products is bounded from above by $\alpha^2/2$, but for at least one of them, the upper bound is even better, namely $\alpha^2 - \beta^2/2$, that is, smaller by $(\beta^2 - \alpha^2)/2$. We thus can upper bound the product $\langle x_1+ \ldots + x_k, x_1 + \ldots + x_k\rangle$ as follows:
      \[\langle x_1+ \ldots + x_k, x_1 + \ldots + x_k\rangle \le k \alpha^2 + (k^2 - k)\alpha^2/2 - (\beta^2 - \alpha^2)/2.\]
      Overall, we obtain the following lower bound on $\|y\|$:
\begin{align*}
      \|y\| &\ge \frac{k\alpha^2/2}{\sqrt{k \alpha^2 + (k^2 - k)\alpha^2/2 - (\beta^2 - \alpha^2)/2}} = \alpha \cdot \sqrt{\frac{k \cdot k\alpha^2}{2 \cdot 2 \cdot (k \alpha^2 + (k^2 - k)\alpha^2/2 - (\beta^2 - \alpha^2)/2)}} \\
      &=  \alpha \sqrt{\frac{k\cdot k \alpha^2}{2 \cdot  ((k + 1)k \alpha^2 ´-  (\beta^2 - \alpha^2))}} 
      = \alpha \sqrt{\frac{k}{2(k + 1) - \frac{2(\beta^2 - \alpha^2)}{k\alpha^2}}},
\end{align*}
as required.
    \end{proof}

The above reduction clearly takes polynomial time, but it remains to pick a rational number $R$, satisfying:
\begin{equation}
\label{eq_lambda}
     \lambda_1 =  \alpha \sqrt{\frac{k}{2(k + 1)}} \le R <  \alpha \sqrt{\frac{k}{2(k + 1 - \delta)}} = \lambda_2, \qquad \delta = \frac{\beta^2 - \alpha^2}{k\alpha^2}.
\end{equation}
To avoid technicalities, we slightly modify the construction so that $\lambda_1$ becomes a rational number and we can set $R = \lambda_1$. Namely, just repeat every coordinate $T$ times in all vectors of the dataset, for $T$ to be defined later. All relative distances will be multiplied by $\sqrt{T}$. After that, the value of $\lambda_1$ will become:
\[\lambda_1 =\sqrt{T\cdot 2(n + d - 3)} \sqrt{\frac{k}{2(k + 1)}} =
\sqrt{(n + d - 3) T \frac{k}{k+1}}.\]
Choosing $T = (n + d - 3) k (k +1)$, we get
\begin{equation}
\label{eq_rad}
    \lambda_1 = (n + d - 3) k
\end{equation}

\medskip

We now explain how to get rid of the multiplicities in our dataset. Instead of just point 0 with multiplicity $k$, we will have 0 and $k - 1$ other negative points $y_1, \ldots, y_{k-1}$ with the following properties:
\begin{itemize}
    \item every $y_i$ is at the same distance $\alpha$ from all positive points of the dataset.
    \item every $y_i$ is $\varepsilon$-close to $0$, for some $\varepsilon > 0$ small enough that $R + \varepsilon < \lambda_2$.
\end{itemize}
Assuming we have constructed such $y_1, \ldots, y_{k - 1}$, we explain why the reduction still works after this modification. Indeed, in order to change the $(2k-1)$-NN classification, we have to become closer (or at the same distance) to some $k$ positive points as to some negative points. If a $k$-clique exists, we can take $k$ corresponding positive points $x_1, \ldots, x_k$, we can become as close to them as to 0 by moving at distance at most $R =  \alpha \sqrt{\frac{k}{2(k + 1)}}$ (as before, the points $0, x_1, \ldots, x_k$ form a configuration as in the item a) of Lemma \ref{lem_geometric}).

Now, assume that no $k$-clique exists. Pick arbitrary $k$ positive points $x_1, \ldots, x_k$. By the same argument as before, we require distance at least $\lambda_2 > R$ to become at least as close to $x_1, \ldots, x_k$ as to $0$. However, we should also argue that we cannot become as close to $x_1, \ldots, x_k$ as to $y_i$ for some $i$, by moving at distance at most $R$. Indeed, $x_1, \ldots, x_k$ correspond to a non-clique, and $y_i$ has distance $\alpha$ to all of them, so $y_i, x_1, \ldots, x_k$ form the same configuration as described in item b)  of Lemma \ref{lem_geometric} (after translation by $-y_i$). Hence, we have to move at least $\lambda_2$ from $y_i$, and hence, at least $\lambda_2 - \varepsilon > R$ from $0$ (because $0$ and $y_i$ are $\varepsilon$-close).

It remains to construct $y_1, \ldots, y_{k - 1}$ with the required properties. Let us first calculate the value of $\varepsilon$ that would work. Recall that $\alpha = \sqrt{T\cdot 2(n + d - 3)}, \beta = \sqrt{T \cdot 2(n + d - 1)}$, so we get the following expression for $\delta$ in \eqref{eq_lambda}:
\[\delta = \frac{2}{k \cdot (n + d - 3)}.\]
Define 
\[\lambda_2' =  \alpha \sqrt{\frac{k}{2(k + 1 - \delta')}}, \qquad \text{where } \delta' = \left(1 - \left(\frac{n^4 - 1}{n^4}\right)^2\right) (k + 1)  \]
Observe that 
\[0 < \delta' = \left(1 - \frac{n^4 - 1}{n^4}\right)\left(1 + \frac{n^4 - 1}{n^4}\right)(k + 1)  \le 2(k + 1)/n^4  < \frac{2}{k \cdot (n + d - 3)} =  \delta,\] where the last inequality is due to the fact that $k, d\le n$, meaning that $\lambda_1 < \lambda_2' < \lambda_2$. The value of $\delta'$ is also chosen so that $\lambda_2'$ is rational. Indeed, $\lambda_1$ is rational, and the fraction $\lambda_1/\lambda_2'$ is equal to:
\[\sqrt{\frac{k + 1 - \delta'}{k + 1}} = \frac{n^4 - 1}{n^4}.\]
Recalling \eqref{eq_rad}, we get that it suffices to set
$\varepsilon = \frac{1}{n^4} < \lambda_2' - \lambda_1 = \frac{k(n + d - 3)}{n^4 - 1}$.

We now construct $y_1, \ldots, y_{k - 1}$.
Recall also that all positive points were 0,1-vectors with the same number of 1s $W$. Let $m$ be the dimensionality of vectors in our dataset. It is equal to the dimensionality of vectors from Lemma \ref{lem_embedding} times $T$. In the end of the argument, we will require a simple bound $m \ge n$. 
Define
\[\tau = \frac{1}{m^{100}}, \qquad S = 2W m^{100} - m.\]
We add $S(k - 1)$ new coordinates where all previous points of the dataset have $0$s. Each point $y_1, \ldots, y_{k - 1}$ will have value $\tau$ in all $m$ old coordinates. Now, we split new coordinates into $k - 1$ blocks of size $S$. Each $y_i$ for $i = 1, \ldots, k -1$ will additionally have $S$ coordinates of the $i$th block equal to $\tau$, with the rest of the new coordinates equal to $0$.

We need to show two things: (a) all the points $y_1, \ldots, y_{k -1}$ are $\varepsilon$-close to $0$; (b) all of them have the same distance to all positive points of the dataset as $0$. We start with (a). Each $y_1, \ldots, y_{k - 1}$ has $m + S$ coordinates, equal to $\tau$, so the square of its distance to $0$ is:
\[(m + S) \tau^2 = \frac{2Wm^{100}}{m^{200}} \le \frac{2}{m^{99}} < \frac{1}{n^8} = \varepsilon^2,\]
because $m \ge n$.

We now show (b). As we have mentioned, every positive point $x_j$ of the dataset has $W$ 1s in the old $m$ coordinates, and just 0s in the new coordinates. Hence, square of the distance from 0 to it is $W$. The following calculation shows that the square of the distance from each $y_i$ to $x_j$ is also $W$, thus finishing the proof of the theorem:
\begin{align*}
    \|y_i - x_j\|^2 &= (m - W) \tau^2 + W (1 -  \tau)^2 + S \tau^2 = m \tau^2 + W - 2W\tau  + S \tau^2 = W + (m + S) \tau^2 - 2W \tau \\
    &= W + \frac{2W m^{100}}{m^{200}} - \frac{2W}{m^{100}} = W.\qedhere
\end{align*}
\end{proof}

\section{The Continuous Setting Based on the 
$\ell_1$-distance}
\label{sec:l1}
We first show that the positive results for counterfactual explanations under the $\ell_2$-norm do not extend to the $\ell_1$-norm. For abductive explanations, we demonstrate that some favorable properties of the $\ell_2$-norm remain, allowing minimal sufficient reasons to be computed efficiently when $k = 1$. 

\paragraph{{\bf Counterfactual explanations}} 
We show that the {\sc $k$-Counterfactual Explanation} problem for the $\ell_1$-distance is \np-hard even when $S^+$ and $S^-$ are of minimal non-degenerate size.

\begin{theorem} \label{theo:l1-cont-counterfactual}
For every odd integer $k\geq 1$, 
the problem {\sc $k$-Counterfactual Explanation}$(\mathbb{R},D_1)$ is \np-complete even when $|S^+| = |S^-| = (k+1)/2$. 
\end{theorem}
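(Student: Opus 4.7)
The plan is to establish both \np-membership and \np-hardness, with the bulk of the work being the hardness direction.

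Membership is standard. Given a candidate $\bar y\in\mathbb{Q}^n$, the values $\|\bar y-\bar x\|_1$ and $f^k_{S^+,S^-}(\bar y)$ can both be evaluated in polynomial time, so verification of any guess is immediate. To argue that a polynomially-bounded certificate exists whenever a counterfactual exists, observe that by Proposition~\ref{prop_symm} the set $\{\bar y\in\mathbb{R}^n : f^k_{S^+,S^-}(\bar y)\ne f^k_{S^+,S^-}(\bar x)\}$ is a union of polynomially many regions, one per choice of the witness subsets $A,B$ appearing in the proposition. Inside each region, after also guessing, for every relevant pair $(a,c)$ and coordinate $i$, the signs of $y_i-a_i$ and $y_i-c_i$, every $\ell_1$ expression is linear in $\bar y$, so the existence of $\bar y$ reduces to LP feasibility, which always admits a rational basic solution of polynomial bit size.

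For hardness I would start with the base case $k=1$, $S^+=\{a\}$, $S^-=\{c\}$ and reduce from \textsc{Subset Sum}. The key structural observation is that
\[
h(\bar y) \;=\; \|\bar y-a\|_1-\|\bar y-c\|_1 \;=\; \sum_{i=1}^n\bigl(|y_i-a_i|-|y_i-c_i|\bigr)
\]
is a separable sum of ``ramp'' functions, each a piecewise-linear function with two flat plateaus joined by a linear middle segment of slope $\pm 2$. Tuning the triple $(a_i,c_i,x_i)$ per coordinate turns coordinate $i$ into a knapsack-like item with an activation threshold $\beta_i$ (the cost of traversing the initial flat plateau, yielding no profit) followed by a ramp of cost $\alpha_i$ contributing exactly $2\alpha_i$ to $h$. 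Because the ramp slope is constant, once the threshold $\beta_i$ is paid it is always weakly better to traverse the entire ramp than only part of it, so the condition ``$h(\bar y)>0$ subject to $\|\bar y-\bar x\|_1\le\ell$'' collapses to a $0/1$ subset-selection problem: pick $S\subseteq\{1,\ldots,n\}$ of total cost $\sum_{i\in S}(\beta_i+\alpha_i)\le\ell$ maximizing the profit $2\sum_{i\in S}\alpha_i$. Given a \textsc{Subset Sum} instance $(v_1,\ldots,v_n,T)$, setting $\alpha_i=\beta_i=v_i/2$ makes full activation of coordinate $i$ cost and contribute exactly $v_i$; with $h(\bar x)=-\sum\alpha_i=-V/2$, the condition $h(\bar y)>0$ becomes $\sum_{i\in S}v_i>V/2$, and a suitable choice of budget $\ell$ together with a handful of auxiliary padding coordinates is used to collapse the permitted window onto the single value $\sum_{i\in S}v_i=T$.

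For arbitrary odd $k\ge 3$ with $|S^+|=|S^-|=(k+1)/2$, Proposition~\ref{prop_symm} shows that $f^k_{S^+,S^-}(\bar y)=1$ iff some point of $S^-$ is a farthest point of $S^+\cup S^-$ from $\bar y$, so flipping the classification means producing some $a^*\in S^+$ strictly farther from $\bar y$ than every $c\in S^-$. I would lift the $k=1$ construction by adding $(k-1)/2$ ``dummy'' points to each of $S^+$ and $S^-$ along fresh coordinates, placed close to $\bar x$ there while $a$ and $c$ are positioned far along the same coordinates. A short triangle-inequality calculation then guarantees that no dummy can be the farthest point of $S^+\cup S^-$ from any $\bar y\in B_\ell(\bar x)$, so the farthest-point decision still rests entirely with the primary pair $(a,c)$ and the base case transfers verbatim.

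The main obstacle is the base case: aligning the ramp parameters, the initial value $h(\bar x)$, and the budget $\ell$ so that the weak range condition $V/2<\sum_{i\in S}v_i\le\ell$ coming out of the ramp analysis becomes the sharp equality $\sum_{i\in S}v_i=T$ demanded by \textsc{Subset Sum}. This requires a careful choice of a constant number of auxiliary padding coordinates, combined with an appeal to the strict-versus-non-strict asymmetry induced by the optimistic tie-break in the definition of $f^k_{S^+,S^-}$; everything else, including the lift to general $k$, is routine.
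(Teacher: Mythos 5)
Your high-level strategy coincides with the paper's: exploit the coordinate-separability of $\|\bar y - a\|_1 - \|\bar y - c\|_1$ to turn each coordinate into a knapsack-type item (a flat plateau followed by a slope-$2$ ramp), prove hardness for the base case $|S^+|=|S^-|=1$ via a Partition-flavoured problem, and lift to general $k$ with dummy points on fresh coordinates (the membership argument is also standard and fine). However, the two steps that carry the real weight of the hardness proof are missing or incorrectly justified. First, the collapse to a $0/1$ selection problem: your claim that ``once the threshold $\beta_i$ is paid it is always weakly better to traverse the entire ramp than only part of it'' is not a valid argument under a budget constraint, since completing a ramp costs additional $\ell_1$ budget while a partially traversed ramp already earns profit at the same marginal rate as any other ramp; nothing forces endpoint solutions. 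The paper handles this with an explicit exchange argument (shifting budget between two partially traversed coordinates until at most one remains) followed by an integrality argument that crucially uses the fact that the budget $W$ is an integer while every interval has length $\gamma v_i < 1$, so the last partial coordinate can be completed without exceeding $W$. Your parametrization $\alpha_i=\beta_i=v_i/2$ makes the ramps have length $v_i/2$, which is not bounded by $1$, so that rounding step is unavailable and you offer no substitute.

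Second, the reduction source does not match what the construction produces. With cost and profit coupled, the feasibility condition you derive is a \emph{window} condition, ``there is $S$ with $V/2 < \sum_{i\in S} v_i \le \ell$,'' not the exact-target condition of {\sc Subset Sum}; you acknowledge this mismatch as ``the main obstacle'' and defer it to unspecified padding coordinates, so the reduction is incomplete precisely at its crux. The paper sidesteps both difficulties at once by reducing from the knapsack variant asking whether items of total value at least $(v_1+\cdots+v_n)/2$ fit within weight $W$ (hard from Partition): the threshold $V/2$ is exactly what the geometry yields (since $h(\bar 0)=-\gamma\sum_i v_i$ and each coordinate swings by $\pm\gamma v_i$), and decoupling the cost of a coordinate (its position $w_i$, an integer) from its profit (its interval length $\gamma v_i<1$) is what makes both the window condition and the integrality argument go through. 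A repair within your coupled setup is possible but requires the non-strict orientation (flipping $f(\bar x)=0$ to $1$ needs only $\le$ under the optimistic tie-break) together with $\ell=V/2$, which forces $\sum_{i\in S}v_i=V/2$ and full ramp traversal by tightness, giving Partition directly; your strict-inequality orientation does not admit this.
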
 

\begin{proof} 
The \np\ upper bound is straightforward: if a counter-factual explanation exists, it can be found as a solution to a linear program, constructable in polynomial time from the input. 
We now show the $\np$ lower bound. Let us start with the case $k = 1$ (we reduce from $k = 1$ to the general case in the end of the proof).
We reduce from a variation of the knapsack problem, where the goal is to determine whether at least half the total value of all items can fit into a given knapsack. Specifically, we are given $n$ items, each with a weight $w_i$ and value $v_i$ (positive integers for $i = 1, \ldots, n$), and a positive integer $W > 0$, representing the knapsack's maximum weight capacity. The question is whether there exists a subset $T \subseteq \{1, \ldots, n\}$ such that $\sum_{i \in T} w_i \leq W$ and $\sum_{i \in T} v_i \geq (v_1 + \ldots + v_n)/2$. The hardness of this problem arises from a reduction from the partition problem \cite{lewis1997elements}. 


We reduce to the {\sc $1$-Counterfactual Explanation}$(\mathbb{R}, D_1)$ problem with $|S^+| = |S^-| = 1$ as follows. The dimension $n$ corresponds to the number of items in the knapsack instance. We set $\bar{x} = \bar{0} \in \mathbb{R}^n$ and the radius $\ell = W$. The sets $S^+ = \{\bar{g}\}$ and $S^- = \{\bar{h}\}$ are defined as:  
\[
\bar{g}_i = w_i, \qquad \bar{h}_i = w_i - \gamma \cdot v_i, \qquad i = 1, \ldots, n,
\]
where $\gamma = 1/(2 \max_i v_i)$ ensures $\gamma \cdot v_i \leq 1/2 < 1$ for all $i = 1, \ldots, n$. Since $w_i$ is a positive integer, we have $0 < \bar{h}_i < \bar{g}_i$, so the interval $[\bar{h}_i, \bar{g}_i]$ lies to the right of 0 and has a length of $\gamma \cdot v_i$. These properties imply $\|\bar{h} - \bar{0}\|_1 < \|\bar{g} - \bar{0}\|_1$, resulting in $f^1_{S^+, S^-}(\bar{0}) = 0$. We are asked if there exists $\bar y$ with $\|y\|_1\le \ell = W$ such that $f^1_{S^+,S^-}(\bar y) = 1$, or, equivalently, $\| \bar h - \bar y\|_1 \ge \| \bar g - \bar y \|_1$. We show that the answer is {\sc Yes} if and only if the original knapsack instance has a solution.


Assume first that the original knapsack instance has a solution. Define a vector $\bar{y} \in \mathbb{R}^n$ by setting $\bar{y}_i = 0$ for items not placed in the knapsack and $\bar{y}_i = \bar{g}_i = w_i$ for items that are placed. Note that $\|\bar{y}\|_1$ equals the total weight of the items in the knapsack, which does not exceed $W = \ell$. We now need to show that $\|\bar{h} - \bar{y}\|_1 \geq \|\bar{g} - \bar{y}\|_1$, or equivalently:  
\[
\|\bar{h} - \bar{y}\|_1 - \|\bar{g} - \bar{y}\|_1 \ = \ \sum_{i=1}^n (|\bar{h}_i - \bar{y}_i| - |\bar{g}_i - \bar{y}_i|) \ \geq \  0.
\]
It is more convenient to express this inequality in the equivalent form:  
\begin{equation}
\label{eq_length}
\sum_{i=1}^n (|\bar{h}_i - \bar{y}_i| - |\bar{g}_i - \bar{y}_i| + \gamma v_i)/2 \ \geq \ \gamma (v_1 + \ldots + v_n)/2.
\end{equation}
The term $|\bar{h}_i - \bar{y}_i| - |\bar{g}_i - \bar{y}_i|$ represents the distance from $\bar{y}_i$ to the left endpoint of $[\bar{h}_i, \bar{g}_i]$ minus the distance from $\bar{y}_i$ to the right endpoint of $[\bar{h}_i, \bar{g}_i]$. It is minus the length of the interval to the left of it, and plus the length of the interval to the right of it, and the length of the interval in our case is $\gamma v_i$. 
Thus, if $\bar{y}_i = 0$, the left-hand side of \eqref{eq_length} contributes 0; if $\bar{y}_i = \bar{g}_i$ (the right endpoint), it contributes $\gamma v_i$. Therefore, the left-hand side of \eqref{eq_length} is the sum of the values of the items in the knapsack, scaled by $\gamma$. This proves \eqref{eq_length}, as it follows directly from the fact that we began with a feasible solution to the original knapsack problem.

 We now show the other direction. Assume that there exists $\bar y\in\mathbb{R}^n$ such that $\|\bar{y}\|_1 \le \ell = W$ and $\| \bar h - \bar y\|_1 \ge \| \bar g - \bar y \|_1$, with the latter being equivalent to \eqref{eq_length}. Consider again the quantity $|\bar h_i - \bar y_i| - |\bar g_i - \bar y_i|$ as a function of $\bar y_i$. To the right of $\bar g_i = w_i$ it is constant and is equal to the length of the interval. Hence, without loss of generality, $\bar y_i \le \bar g_i$ for every $i = 1, \ldots, n$, as otherwise we can decrease $\|\bar{y}\|_1$ without changing the left-hand side of \eqref{eq_length}. Likewise, to the left of the interval, the quantity in question is also constant and is equal to the minus of the length of the interval, and the minimal absolute value there is $0$. Hence, we may assume that if $\bar y_i$ lies to the left of $[\bar h_i, \bar g_i]$, then $\bar y_i = 0$ for every $i = 1, \ldots, n$. Again, if not, $\|\bar{y}\|_1$ can be decreased without changing the left-hand side of \eqref{eq_length}.

We now establish that, without loss of generality, we may assume $\bar{y}_i = 0$ or $\bar{y}_i = \bar{g}_i$ for every $i = 1, \ldots, n$. First, suppose there are two different components, $\bar{y}_i$ and $\bar{y}_j$, that lie within their respective intervals but are strictly smaller than their right endpoints. Start decreasing $\bar{y}_i$ and increasing $\bar{y}_j$ at the same rate. The terms involving $\bar{y}_i$ and $\bar{y}_j$ in \eqref{eq_length} will begin to decrease and increase, respectively, at twice the same rate, leaving the left-hand side of \eqref{eq_length} unchanged. Similarly, $\|\bar{y}\|_1$ remains constant.  This process continues until either $\bar{y}_i$ reaches $\bar{h}_i$ or $\bar{y}_j$ reaches $\bar{g}_j$. In the first case, we can further decrease $\bar{y}_i$ to 0 without reducing the left-hand side of \eqref{eq_length}. In the second case, $\bar{y}_j$, which was strictly within its interval, becomes equal to its right endpoint. In both scenarios, the number of indices $i$ for which $\bar{y}_i \in (\bar{h}_i, \bar{g}_i)$ strictly decreases.  
We repeat this procedure until at most one index $i$ satisfies $\bar{y}_i \in (\bar{h}_i, \bar{g}_i)$. 


Now, if at most one problematic $i$ remains where $\bar{y}_i \in (\bar{h}_i, \bar{g}_i)$, we simply increase $\bar{y}_i$ to $\bar{g}_i$. We claim that the solution remains feasible. Increasing $\bar{y}_i$ raises the left-hand side of \eqref{eq_length}, so the corresponding inequality is still satisfied.  
We now explain why $\|\bar{y}\|_1 \le W = \ell$ continues to hold. This follows from the fact that $W$ is an integer and the length of any interval is less than 1. Before the increase, $\|\bar{y}\|_1 = |\bar{y}_1| + \ldots + |\bar{y}_n|$ was at most $W$. After the increase, the sum becomes integral because $\bar{y}_i = 0$ or $\bar{y}_i = \bar{g}_i = w_i$ for every $i$. Consequently, the total sum cannot exceed $W$, as the increase is too small to make the sum reach $W + 1$.


Therefore, it holds that $\bar y_i = 0$ or $\bar y_i = \bar g_i = w_i$ for every $i = 1,\ldots, n$. We place objects with $\bar y_i = w_i$ into the knapsack. The condition $\|\bar{y}\|_1 \le \ell = W$ ensures that the sum of the weights of the objects placed in the knapsack does not exceed the capacity. Now, notice that the left-hand side of \eqref{eq_length} becomes the sum of $\gamma v_i$ for the objects placed in the knapsack, which guarantees that the total value of the objects placed is at least half of the total value of all objects.

\medskip

We only have to show that it is enough to establish the theorem for $k = 1$. This is because  {\sc $1$-Counterfactual Explanation}$(\mathbb{R},D_1)$ for  $|S^+| = |S^-| = 1$ reduces to {\sc $k$-Counterfactual Explanation}$(\mathbb{R},D_1)$ for  $|S^+| = |S^-| = (k+1)/2$ for every odd integer $k\ge 1$. Indeed, consider any input to {\sc $1$-Counterfactual Explanation}$(\mathbb{R},D_1)$ with $|S^+| = |S^-| = 1$. It consists of two singleton sets $S^+ = \{\bar g\}$ and $S^- = \{\bar h\}$ for $\bar g, \bar h\in \mathbb{R}^n$, a vector $\bar x \in\mathbb{R}^n$ and an integer $\ell > 0$. The question is whether there exists $\bar y\in\mathbb{R}^n$ such that $\|\bar x - \bar y\|_1 \le \ell$ and $f^{1}_{S^+,S^-}(\bar x) \neq f^{1}_{S^+,S^-}(\bar y)$. In this special case, we have $f^{1}_{S^+,S^-}(\bar z) = 1$ if $\| \bar z - \bar g\|_1\le \| \bar z - \bar h\|_1$ and  $f^{1}_{S^+,S^-}(\bar z) = 0$ otherwise. In other words,  $f^{1}_{S^+,S^-}(\bar z)$ can be defined as the sign of the difference  $\| \bar z - \bar h\|_1\ - \| \bar z - \bar g\|_1$ (where the sign of a non-negative number is 1 and the sign of a negative number is 0).

Without loss of generality, we may assume that $\bar x = \bar{0}$.
We now transform our input into an input for {\sc $k$-Counterfactual Explanation}$(\mathbb{R},D_1)$ with $|S^+| = |S^-| = (k + 1)/2$. We first add arbitrary $(k-1)/2$ points to $S^+$ and $(k-1)/2$ points to $S^-$, for example:
\begin{align*}
    \bar p_1 =(1, 0, \ldots,0),\,\, \ldots,\,\, \bar p_{\frac{k - 1}{2}} = \left(\frac{k - 1}{2}, 0, \ldots,0\right) &\text{ to } S^+,\\
     \bar p_\frac{k + 1}{2} =\left(\frac{k + 1}{2}, 0, \ldots,0\right),\,\, \ldots,\,\, \bar p_{k-1} = \left(k-1, 0, \ldots,0\right) &\text{ to } S^-.
\end{align*}
We then add one more coordinate to all vectors, where we put $0$ for $\bar x, \bar p_1, \ldots, \bar p_{k-1}$ and $M = 10 (\ell + k)$ for $\bar g, \bar h$. Let $B_\ell$ be the ball of radius $\ell$ around $\bar x = \bar{0} $  (in the space with the new coordinate). Observe that any point among  $\bar p_1, \ldots, \bar p_{k-1}$  is much closer to any point in $B_\ell$ than $\bar g$ and $\bar h$. Hence, for any potential counterfactual explanation $\bar y \in B_\ell$, the closest  $k-1$  points in $S^+\cup S^-$ will be  $\bar p_1, \ldots, \bar p_{k-1}$, among which half is from $S^+$ and half is from $S^-$. Hence, the value $f^k_{S^+, S^-}(\bar y)$ for any $\bar y\in B_\ell$ is determined by whether $\bar g$ or $\bar h$  (with an additional coordinate) is closer to $\bar y$, with value 1 in case of the equality. This means that the answer to the problem is determined by whether the sign of $\|\bar y - \bar h\|_1 - \|\bar y - \bar g\|_1$ is different for some $\bar y \in B_\ell$ from the sign of $\|\bar x - \bar h\|_1 - \|\bar x - \bar g\|_1$. Despite we have an additional coordinate, the answer to this is the same as in the initial {\sc $1$-Counterfactual Explanation} problem. This is because $\bar g$ and $\bar h$ have the same value in this additional coordinate so that it cannot contribute to the difference, meaning that we can restrict ourselves to $\bar y \in B_\ell$ that have $0$ in this additional coordinate.
%
\end{proof} 

\paragraph{{\bf Abductive explanations}}
Next, we observe that the {\sc 1-Check Sufficient Reason} problem is tractable for the $\ell_1$-norm (together with the {\sc 1-Minimal Sufficient Reason}, by Proposition \ref{prop_check_minimal_reduction}). 

\begin{proposition}
\label{prop_l1_1check}
The problem {\sc $1$-Check Sufficient Reason}$(\mathbb{R},D_1)$, and hence also {\sc $1$-Minimal Sufficient Reason}$(\mathbb{R},D_1)$, is polynomial-time solvable. 
\end{proposition}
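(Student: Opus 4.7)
The plan is to reduce the problem to polynomially many direct distance comparisons by exhibiting a canonical witness whenever one exists. Given input $(S^+, S^-, \bar x, X)$, I first compute $f^1_{S^+,S^-}(\bar x)$ explicitly. By symmetry assume $f^1_{S^+,S^-}(\bar x) = 1$ (the other case is handled analogously by swapping the roles of $S^+$ and $S^-$). Let $U(X,\bar x) = \{\bar z \in \mathbb{R}^n : \bar z[i] = \bar x[i] \text{ for all } i \in X\}$. By \Cref{prop_symm}(b), $X$ fails to be a sufficient reason if and only if there exist $\bar y \in U(X,\bar x)$ and $\bar q \in S^-$ such that $\|\bar y - \bar q\|_1 < \|\bar y - \bar p\|_1$ for every $\bar p \in S^+$.

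The central observation is that, for each fixed $\bar q \in S^-$, there is a canonical candidate witness $\bar y^{\bar q} \in U(X,\bar x)$ defined by $\bar y^{\bar q}[i] = \bar x[i]$ for $i \in X$ and $\bar y^{\bar q}[i] = \bar q[i]$ for $i \notin X$. This candidate \emph{simultaneously} maximizes, over all $\bar y \in U(X,\bar x)$ and for every $\bar p \in S^+$, the gap $\|\bar y - \bar p\|_1 - \|\bar y - \bar q\|_1$. This is a direct consequence of the coordinate-wise reverse triangle inequality: for every free coordinate $i \notin X$,
\[
|\bar y[i] - \bar p[i]| - |\bar y[i] - \bar q[i]| \ \leq \ |\bar q[i] - \bar p[i]| \ = \ |\bar y^{\bar q}[i] - \bar p[i]| - |\bar y^{\bar q}[i] - \bar q[i]|,
\]
while for $i \in X$ both sides equal $|\bar x[i] - \bar p[i]| - |\bar x[i] - \bar q[i]|$. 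Summing over coordinates, if any $\bar y \in U(X,\bar x)$ witnesses a classification flip ``through'' $\bar q$, then $\bar y^{\bar q}$ does so as well.

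The algorithm then iterates over each $\bar q \in S^-$, constructs $\bar y^{\bar q}$, and checks whether $\|\bar y^{\bar q} - \bar q\|_1 < \|\bar y^{\bar q} - \bar p\|_1$ holds for all $\bar p \in S^+$; the set $X$ is a sufficient reason precisely when no $\bar q$ succeeds. Tractability of {\sc $1$-Minimal Sufficient Reason}$(\mathbb{R},D_1)$ then follows from \Cref{prop_check_minimal_reduction}. I do not expect any real obstacle: the entire argument rests on this single, clean pointwise inequality. The reason the approach does not obviously extend to $k \geq 3$ (or to $\ell_1$-counterfactuals, which is indeed \np-hard by \Cref{theo:l1-cont-counterfactual}) is that for $k = 1$ a \emph{single} opposing point is enough to witness a flip, allowing coordinate-by-coordinate optimization; for larger $k$ one must be close to $(k+1)/2$ opposing points at once, and no single $\bar y^{\bar q}$ can then be optimal for all of them simultaneously.
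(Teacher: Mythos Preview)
Your proposal is correct and essentially identical to the paper's proof: both observe that, for each candidate ``target'' point on the opposite side, the optimal witness sets every free coordinate equal to that point's coordinate (justified by the triangle inequality), reducing the check to $|S^+|\cdot|S^-|$ distance comparisons. The only cosmetic difference is that the paper starts from the case $f^1_{S^+,S^-}(\bar x)=0$ and iterates over $S^+$, whereas you start from $f^1_{S^+,S^-}(\bar x)=1$ and iterate over $S^-$.
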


\begin{proof}
For a given $S^+, S^-\subseteq \mathbb{R}^n$, $\bar x \in\mathbb{R}^n$ and $X\subseteq \{1, \ldots, n\}$, our task is to decide, whether there is a vector, coinciding with $\bar x$ on coordinates in $X$ but differing in the value of $f^1_{S^+, S^-}$.  We will use the following notation: for a vector $\bar v\in\mathbb{R}^n$, we write $\bar v = (\bar v_1, \bar v_2)$, denoting by $\bar v_1$ the projection of $\bar v$ to coordinates from $X$ and by $\bar v_2$ the projection to the remaining coordinates. 

In this notation, our task is to see, if there is $\bar y_2 \in \mathbb{R}^{[n]\setminus X}$ with the property that $f^1_{S^+, S^-}((\bar x_1, \bar y_2)) \neq f^1_{S^+, S^-}((\bar x_1, \bar x_2))$. First, assume that $f^1_{S^+, S^-}((\bar x_1, \bar x_2)) = 0$. By Proposition \ref{prop_symm} for $k  = 1$, we are asked if there exists  $\bar y_2 \in \mathbb{R}^{[n]\setminus X}$ and $\bar a\in S^+$ such that:
\begin{equation}
\label{eq_ac}
\|(\bar x_1, \bar y_2) - (\bar a_1, \bar a_2)\|_1 \le \|(\bar x_1, \bar y_2) - (\bar c_1, \bar c_2)\|_1 \text{ for every } \bar c\in S^-.
\end{equation}
Using linearity of the $\ell_1$-norm under concatenation of vectors, we rewrite \eqref{eq_ac} as follows:
\begin{equation}
\label{eq_ac2}\|\bar x_1  - \bar a_1\|_1 - \|\bar x_1  - \bar c_1\|_1 \le  \|\bar y_2 - \bar c_2\|_1 - \|\bar y_2 - \bar a_2\|_1 \text{ for every } \bar c\in S^-.
\end{equation}
The left-hand side of \eqref{eq_ac2} does not depend on $\bar y_2$. The right-hand side  of \eqref{eq_ac2}, for every $\bar c_2$, attains its maximum at $\bar y_2 = \bar a_2$, by the triangle inequality. This gives the following algorithm  for our problem: for every $\bar a\in  S^+$, check if $\bar y_2 = \bar a_2$ satisfies all inequalities in \eqref{eq_ac}. If for some $\bar a\in S^+$ it does, the set $X$ is not a sufficient reason, otherwise $X$ is a sufficient reason.

The argument for the case $f^1_{S^+, S^-}((\bar x_1, \bar x_2)) = 0$ is exactly the same, with the roles of $S^+$ and $S^-$ swapped and with non-strict inequalities replaced by strict ones.
\end{proof}

As a corollary, we obtain the following: 

\begin{corollary}
     \label{coro:cont-minimal-sr-comp-l1}
Consider the setting $(\mathbb{R},D_1)$. There is a polynomial time algorithm that, given sets $S^+,S^- \subseteq \mathbb{R}^n$ and a vector $\bar x \in \mathbb{R}^n$, it computes a minimal sufficient reason $X$ for $\bar x$ with respect to $f^1_{S^+,S^-}$. 
\end{corollary} 

We finish this section by showing that, starting from $k = 3$, the problems 
  {\sc $k$-Check Sufficient Reason}$(\mathbb{R},D_1)$ and  {\sc $k$-Minimal Sufficient Reason}$(\mathbb{R},D_1)$  become intractable.

\begin{theorem} \label{theo:continuous-check-sr-k3} For every odd integer $k \geq 3$, 
the problem
 {\sc $k$-Check Sufficient Reason}$(\mathbb{R},D_1)$ is \conp-complete, and the problem  {\sc $k$-Minimal Sufficient Reason}$(\mathbb{R},D_1)$ is $\np$-hard  under Turing reductions.
\end{theorem} 
\begin{proof}

Clearly, the {\sc $k$-Check Sufficient  Reason}$(\mathbb{R},D_1)$ is in $\conp$. Indeed, if a given set of coordinates is not a sufficient reason, then a counter-example can be found as a solution to a linear program, constructible from the input in polynomial time.

    We now show the \conp-hardness of the {\sc $k$-Check Sufficient Reason}$(\mathbb{R},D_1)$ problem. We reduce from the complement to the  partition problem, where we are given $n$ positive integers $v_1, \ldots, v_n$, and the question is whether there exists $T\subseteq \{1, \ldots, n\}$ such that $\sum_{i \in T} v_i = \sum_{i \notin T} v_i$. \np-completeness of the partition problem (as well as $\conp$-completeness of its complement) is standard~\cite{lewis1997elements}.

    Let us first assume for simplicity that the same point can appear in the data set multiple times (we will explain later how to get rid of this simplification). Given $n$ positive integers $v_1, \ldots, v_n$ (an input to the partition problem), consider an $n$-dimensional data set $S$, consisting of the  points
    \begin{align*}
        \bar{\alpha} &= (0, 0, \ldots, 0) \qquad \text{classified 1, multiplicity 1,}\\
        \bar{\beta} &= (2v_1, 2v_2, \ldots, 2 v_n) \qquad  \text{classified 1, multiplicity $(k - 1)/2$,}\\
        \bar{\gamma} &= (v_1, v_2, \ldots, v_n) \qquad \text{classified 0, multiplicity $(k + 1)/2$.}
    \end{align*}
    The size of the dataset is $k+1$, and we have a point $\bar{\gamma}$ with multiplicity $(k+1)/2$, classified as $0$. Hence, 
we have $f_{S^+, S^-}^k(\bar{x}) = 0$ if and only if $\bar{\gamma}$ is not the farthest point to $\bar{x}$ in the dataset, i.e., if
$\|\bar{\gamma}- \bar{x}\|_1 < \|\bar{\alpha}- \bar{x}\|_1$ or  $\|\bar{\gamma} - \bar{x}\|_1 < \|\bar{\beta} - \bar{x}\|_1$ (here we require strict inequalities because of our optimistic view on $k$-NN classification). Observe that $f_{S^+, S^-}^k(\bar{\alpha}) = 0$ because $\|\bar{\gamma}\|_1 < \|\bar{\beta}\|_1$. We claim that there exists $\bar{y}\in\mathbb{R}^n$ such that $f_{S^+, S^-}^k(\bar y) = 1$ if and only if the initial partition problem has a solution. In other words, we claim that the empty set of coordinates is \emph{not} a sufficient reason for $\bar x = \bar 0$ if and only if the initial partition problem has a solution.

The condition $f_{S^+, S^-}^k(\bar y) = 1$ can be rewritten as
\[\begin{cases}
     \|\bar \alpha- \bar y\|_1 \le \|\bar \gamma - \bar y\|_1 \\
      \|\bar \beta- \bar y\|_1 \le \|\bar \gamma - \bar y\|_1
\end{cases},\]
or, equivalently,
\[
  \begin{cases}
             \sum\limits_{i = 1}^n (|\bar y_i - 0| - |\bar y_i - v_i|) \le 0 \\
            \sum\limits_{i = 1}^n (|\bar y_i - 2v_i| - |\bar y_i - v_i|) \le 0.
        \end{cases}
\]
We claim that we can restrict our search to $\bar y\in\mathbb{R}^n$ where $\bar y_i \in\{0, 2v_i\}$ for every $i = 1, \ldots, n$. Indeed, consider the two  expressions that appear in our conditions as functions of $\bar y_i$:
\[A_1(\bar y_i) = |\bar y_i - 0| - |\bar y_i - v_i|, \qquad A_2(\bar y_i) = |\bar y_i - 2v_i| - |\bar y_i - v_i|.\]
If $\bar y_i$ is to the left of $v_i$, then we can decrease $A_1$ without changing $A_2$ by moving $\bar y_i$ to $0$. In turn, if $\bar y_i$ is to the right of $v_i$, we can decrease $A_2$ without changing $A_1$ by moving $\bar y_i$ to $2v_i$.

Now, take any ``restricted'' $\bar y$. Let $T$ denote the set of $i \in\{1, \ldots, n\}$ such that $\bar y_i = 2v_i$ (for $i\notin T$, we have $\bar y_i = 0$). Then our conditions can be rewritten as: 
\[
        \begin{cases}
            \sum\limits_{i \in  T}^n v_i  +  \sum\limits_{i \notin  T}^n (-v_i)\le 0 \\
            \sum\limits_{i \in  T}^n(-v_i)  +  \sum\limits_{i \notin  T}^n v_i
           \le 0.
        \end{cases}
\]
These conditions are equivalent to $ \sum_{i \in  T}^n v_i =  \sum_{i \notin  T}^n v_i$, as required in the partition problem.

We now explain how to get rid of points with multiplicities. There are $k +1$ points in our dataset (taking into account multiplicities). We add $k + 1$  auxiliary coordinates. For $i = 1, \ldots, k + 1$, for the $i$th point of the dataset, we put 1 to the $i$-th auxiliary coordinate and 0 to all the other auxiliary coordinates. For example, for $k = 3$, we obtain the following dataset:
\begin{align*}
    &(1, 0, 0, 0, \bar{\alpha})  \qquad \text{classified 1},\\
        &(0, 1, 0, 0, \bar{\beta})  \qquad \text{classified 1},\\
        &(0, 0, 1, 0, \bar \gamma)  \qquad \text{classified 0},\\
                &(0,0 , 0, 1, \bar \gamma)  \qquad \text{classified 0}.    
\end{align*}
We consider $\bar x = (0, \ldots, 0)$ as the input and ask, whether the set of auxiliary coordinates is a sufficient reason for $\bar x$. The same argument as before shows that this is the case if and only if the initial instance of the partition problem has a solution. Indeed, we are not allowed to change 0s in the $k+ 1$ auxiliary coordinates of $\bar x$. Hence, these coordinates contribute 1 to distances from $\bar y$ to all the points. This does not change the differences of distances on which our argument is based on.

Finally, we explain why this reduction implies that  {\sc $k$-Minimal Sufficient Reason}$(\mathbb{R},D_1)$ is $\np$-hard (under Turing reductions) as well. The crucial thing is that the set of $X$ for which we ask, whether $X$ is a sufficient reason, has constant-size, namely, $k+1$. It remains to note that $X$ is a sufficient reason if and only if it has a subset which is a minimal sufficient reason. Hence, if we could solve   {\sc $k$-Minimal Sufficient Reason}$(\mathbb{R},D_1)$ in polynomial-time, we could solve  the partition problem in polynomial-time by running the algorithm for the minimal sufficient reason problem on all subsets of $X$.
    \end{proof}

\section{Results on the Discrete Setting}\label{sec:discrete}
We consider the metric space family $(\{0,1\},D_H)$, where $D_H$ is the Hamming distance on $\{0,1\}^n$, for every $n > 0$. Our results for the discrete setting resemble those for the continuous setting under the $\ell_1$-distance; however, our proof techniques differ. 

\paragraph{{\bf Counterfactual explanations}}
We prove that {\sc $k$-Counterfactual Explanation} is intractable for every odd integer $k \geq 1$, employing different techniques than those used for the \np-hardness of the problem in the continuous setting under the $\ell_1$-distance. Unlike the continuous case, where Theorem \ref{theo:l1-cont-counterfactual} shows \np-hardness even when $|S^+| = |S^-| = (k+1)/2$, this does not hold in the discrete setting. We address this by constructing a reduction with an unbounded size for $S^+$.  

\begin{theorem} \label{theo:discrete-counterfactual}
The problem {\sc $k$-Counterfactual Explanation}$(\{0,1\},D_H)$ is \np-complete for every odd integer $k \geq 1$.  
\end{theorem}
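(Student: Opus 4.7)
The plan is to prove \np-hardness by reducing from Maximum Independent Set, with two stages: a direct treatment of $k=1$, followed by a lifting argument to all odd $k\ge 3$ via ``dummy'' positive and negative helper points. Membership in \np\ is immediate since, for fixed $k$, Proposition~\ref{prop_symm} lets us verify $f^k_{S^+,S^-}(\bar y)$ by examining polynomially many pairs $(A,B)$ of bounded size, and $\|\bar y - \bar x\|_H$ is trivially computed.

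For the base case $k=1$, given an instance $(G=(V,E),t)$ of Maximum Independent Set with $|V|=n$, I take dimension $n$, put $\bar x = \bar 1^n$, $S^+ = \{\bar 0^n\}$, $S^- = \{\bar e_j : e_j\in E\}$ where $\bar e_j$ is the $0/1$-indicator of the two endpoints of $e_j$, and set $\ell = n-t$. Writing $Y$ for the support of $\bar y$, direct computation gives $d_H(\bar y,\bar 0^n) = |Y|$ and $d_H(\bar y,\bar e_j) = |Y| + 2 - 2|Y\cap \{u_j,v_j\}|$. By Proposition~\ref{prop_symm}(a), $f^1_{S^+,S^-}(\bar y) = 1$ iff $|Y\cap \{u_j,v_j\}|\le 1$ for every edge, i.e.\ iff $Y$ is independent. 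Since $\|\bar y-\bar x\|_H = n-|Y|\le \ell$ iff $|Y|\ge t$, a counterfactual within Hamming distance $\ell$ exists iff $G$ has an independent set of size $\ge t$, giving \np-hardness for $k = 1$.

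For the lifting to $k\ge 3$, set $s = (k-1)/2$ and append $k-1$ fresh coordinates, zero-extending $\bar x$ and all points of $S^\pm$. I add $s$ dummy positives $\bar p_i$, each matching $\bar x$ in the first $n$ coordinates and carrying a single $1$ in position $n+i$, and $s$ dummy negatives $\bar q_i$ analogous but with the $1$ in position $n+s+i$. For $\bar y$ agreeing with $\bar x$ in the new coordinates, every dummy sits at Hamming distance $\|\bar y-\bar x\|_H+1$ from $\bar y$, whereas each original point is at distance at least $n-\|\bar y-\bar x\|_H$; hence when $\|\bar y-\bar x\|_H<(n-1)/2$, the $k$ nearest neighbours of $\bar y$ are exactly the $2s$ dummies plus the single closest original point, and $f^k(\bar y)=1$ is governed solely by whether that closest original lies in $S^+$, recovering the $k=1$ independence condition. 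To make the backward direction rigorous I use Proposition~\ref{prop_symm}(a): since $|S^+|=s+1$, the set $A$ is forced to equal $S^+$; and whenever any $\bar q_i\notin B$, the inequality $d_H(\bar y,\bar a_0)\le d_H(\bar y,\bar q_i)$ yields $|Y'|\le (n+1)/2$, contradicting the size lower bound. Hence $B$ must contain every negative dummy, after which the remaining conditions collapse to $Y'$ being independent.

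The main obstacle is that the size lower bound $|Y'|>(n+1)/2$ needed for the dummy-dominance is not a property of general Max IS instances. I resolve it with a standard padding trick: add $M=\max\{0,n-2t+2\}$ fresh isolated vertices to $G$ and raise the threshold to $t+M$. Isolated vertices always lie in some maximum IS, so the original decision problem is preserved, while $t+M>(n+M+1)/2$ holds by construction, activating the dummy-dominance regime. Verifying that $\bar y$ with non-zero values on the new coordinates cannot yield spurious counterfactuals is the subtlest technical step, but it follows from the same forcing argument applied pointwise: for any $\bar q_i\notin B$ and any pattern of the new-coordinate block of $\bar y$, the required inequality $d_H(\bar y,\bar a_0)\le d_H(\bar y,\bar q_i)$ still reduces to a bound on $|Y'|$ that contradicts the padded threshold.
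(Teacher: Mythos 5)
Your proof is correct, but it takes a genuinely different route from the paper's. The paper first introduces an intermediate problem, {\sc $p$-BMCF} (flipping columns of a Boolean matrix so that all but $p$ rows become light), proves it \np-hard from a relaxed Vertex Cover in which up to $p$ edges may remain uncovered, and then reduces {\sc $p$-BMCF} to {\sc $k$-Counterfactual Explanation} with $k=2p+1$; the slack needed for general $k$ is thus absorbed into the combinatorial problem itself (the $p$ exempted rows are matched against $p+1$ special negative points). You instead reduce directly from Maximum Independent Set for $k=1$ (your computation $d_H(\bar y,\bar e_j)=|Y|+2-2|Y\cap e_j|$ and the resulting equivalence with independence of the support is exactly right), and then lift to $k\ge 3$ by adding $(k-1)/2$ dummy positives and negatives in fresh coordinates, forcing $A=S^+$ in Proposition~\ref{prop_symm}(a) because $|S^+|=(k+1)/2$, and forcing $B$ to consist precisely of the dummy negatives via the bound $|Y'|>(n+1)/2$ guaranteed by your isolated-vertex padding. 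I verified the arithmetic of the padding ($t+M>(n+M+1)/2$ in both branches of $M=\max\{0,n-2t+2\}$) and the handling of nonzero entries of $\bar y$ in the new coordinates; both check out. One small caveat: your informal claim that every original point lies at distance at least $n-\|\bar y-\bar x\|_H$ from $\bar y$ is off by $2$ for the edge vectors (they can be at distance $|Y|-2$), so the ``$k$ nearest neighbours are the dummies plus one original'' picture needs a slightly stronger bound than $\|\bar y-\bar x\|_H<(n-1)/2$; however, your actual formal argument bypasses this entirely by working with Proposition~\ref{prop_symm}(a) directly, so nothing breaks. The trade-off between the two proofs: the paper's factoring through {\sc $p$-BMCF} yields a reusable intermediate problem (it is reused in the proof of Theorem~\ref{theo:discrete-check-sr-k3}'s preliminaries) and keeps the counterfactual reduction uniform in $p$, whereas your dummy-point lifting is more self-contained and makes the role of the optimistic tie-breaking ($A$ forced to be all of $S^+$, $B$ forced to be the negative dummies) very transparent.
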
 

The $\np$ upper bound is straightforward, so we focus on the hardness part of 
Theorem \ref{theo:discrete-counterfactual}. We first establish the intractability of the following intermediate problem.
Fix an integer $p \geq 0$. 
The input to the problem is given by a Boolean matrix $B$ of dimension $m \times n$ and an integer $\ell \leq n$. We want to know if it is possible to find a set $T \subseteq \{1,\dots,n\}$ of size $|T| \leq \ell$ such that, after {\em flipping} every column from $B$ whose index is in $T$, the {\em weight} of at least $m-p$ rows in the resulting matrix is at most $|T|-1$. 
Here, ``flipping a column'' means replacing the 0s with 1s and vice versa in that column's elements. 
Additionally, the weight of a row is defined as the number of 1s it contains.
We call this problem {\em $p$-Boolean Matrix Column Flipping}, or simply {\sc $p$-BMCF}.   

\begin{proposition} \label{prop:bmcf}
    {\sc $p$-BMCF} is \np-complete, for every integer $p \geq 0$. 
\end{proposition}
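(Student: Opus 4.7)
The approach is to place $p$-BMCF inside NP by guess-and-verify, and to prove hardness by reducing from Vertex Cover. The starting observation is the following reformulation: writing $B_i \subseteq \{1,\dots,n\}$ for the set of columns in which row $i$ of $B$ has a $1$, the identity $|B_i \triangle T| = |B_i| + |T| - 2|B_i \cap T|$ shows that ``after flipping the columns of $T$, the weight of row $i$ is at most $|T|-1$'' is equivalent to $|B_i \cap T| > |B_i|/2$. Thus $p$-BMCF asks for a small $T$ that hits a strict majority of the $1$'s of each row, with at most $p$ rows allowed to fail.

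For the case $p = 0$, given a Vertex Cover instance $(G=(V,E),\ell)$ with $V=\{1,\dots,n\}$, the plan is to construct a matrix whose $n+2$ columns are indexed by $V \cup \{d_1, d_2\}$. For each edge $\{u,v\} \in E$ I add a row with $1$'s exactly at $\{u,v,d_1,d_2\}$ (weight $4$), and I add a single \emph{forcing row} whose $1$'s are only at $\{d_1,d_2\}$ (weight $2$); the target size is set to $\ell + 2$. The forcing row forces $|T \cap \{d_1,d_2\}| \geq 2$, so both dummy columns must lie in $T$; once this holds, the weight-$4$ condition $|T \cap \{u,v,d_1,d_2\}| \geq 3$ collapses to $|T \cap \{u,v\}| \geq 1$, which is precisely the vertex cover constraint for the edge $\{u,v\}$. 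The two directions of the equivalence ``$G$ has a vertex cover of size at most $\ell$ iff the constructed instance admits a valid $T$ of size at most $\ell + 2$'' then correspond to the bijection $T = C \cup \{d_1,d_2\}$ and $C = T \setminus \{d_1,d_2\}$.

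For arbitrary fixed $p \geq 1$, I would reuse the same gadget but duplicate every row $p+1$ times. A failure on any original row then propagates to all $p+1$ of its copies, producing more than $p$ violated rows and exceeding the exception budget; consequently every original row must still be satisfied and the analysis of the $p=0$ case carries over unchanged. Since $p$ is a fixed constant, the resulting instance remains of polynomial size, and NP membership is immediate from polynomial-time verifiability. The main subtlety, and the step I would most carefully verify, is that the $p$ exceptions cannot be exploited to bypass the forcing row: without duplication, a candidate solution could omit both dummies by absorbing the resulting single violation into the exception budget, which would collapse the whole reduction; row duplication is precisely what closes this loophole.
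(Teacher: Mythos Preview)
Your proof is correct, and the reformulation $|B_i \triangle T| \le |T|-1 \iff |B_i \cap T| > |B_i|/2$ is a clean way to expose the combinatorics. The paper, however, takes a somewhat different route. Rather than two dummy columns plus a weight-$2$ forcing row, it adds a single all-ones column (so every row has weight $3$) and, instead of structurally forcing the dummy into $T$, it argues by a swap: if a solution $T$ omits the extra column, one can replace any element of $T$ by that column and keep all previously satisfied rows satisfied. For $p \ge 1$, the paper does not duplicate rows; it first observes that the variant of Vertex Cover asking to cover all but $p$ edges is NP-hard (by padding a Vertex Cover instance with $p$ isolated edges), and then reduces this ``almost-cover'' problem directly to $p$-BMCF, letting the $p$ allowed row failures correspond one-to-one with the $p$ uncovered edges. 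Your duplication trick effectively neutralizes the exception budget and lets you reuse the $p=0$ analysis verbatim, which is elegant and avoids the intermediate problem; the paper's approach yields a slightly leaner instance (no forcing row, one dummy column, no row replication) and a swap argument in place of your structural forcing. Both are valid and of comparable difficulty.
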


\begin{proof}
    Fix $p \geq 0$. 
    We reduce from the following modified version of the {\em Vertex Cover} problem: Given an undirected graph $G = (V,E)$ and an integer $\ell$, is there  a set $V' \subseteq V$ with $|V'| \leq \ell$ such that at least $|E|-p$ edges have an endpoint in $V'$? Notice that for $p = 0$ this is exactly the Vertex Cover problem. The fact that this problem is \np-complete for $p > 0$ is obtained by an easy reduction from Vertex Cover (simply extend the input graph with $p$ additional isolated edges). 

    We define a Boolean matrix $A$ as the transpose of the incidence matrix of $G$, i.e., $A[i,j] = 1$ if and only if edge $e_i$ is incident to vertex $v_j$. We then define our matrix $B$ by extending $A$ with a column of all 1s on the right. We consider the instance $(B,\ell+1)$ for the problem 
    {\sc $p$-BMCF}. 

    Suppose first that $T \subseteq \{1,\dots,|V|+1\}$ is a solution for the instance $(B,\ell+1)$ of {\sc $p$-BMCF}. Notice, by construction, that $|T| > 0$ (as we can assume without loss of generality that $G$ contains at least $p+1$ edges, and thus $B$ contains at least $p+1$ rows). Then there exists a solution $T' \subseteq \{1,\dots,|V|+1\}$ for $(B,\ell+1)$ such that $|T| = |T'|$ and $T'$ contains index $|V|+1$. Indeed, if $T$ does not contain index $|V|+1$ then we can obtain such a solution $T'$ by simply removing any index from $T$ and adding index $|V|+1$. Then $|T'| \leq \ell+1$ and $|T''| \leq \ell$, for $T'' = T' \setminus \{|V|+1\}$. We claim that $U = \{v_i \mid i \in T''\}$ satisfies that at least 
    $|E|-p$ edges of $G$ have an endpoint in $U$. 
In fact, take an arbitrary edge $e \in E$ such that row $e$ has weight at most $|T'|-1$ after flipping the columns in $T'$. We know that there exist at least $|E| - p$ such edges. We claim that, for each such an edge $e$, it is the case that $U \cap e \neq \emptyset$. Assume, on the contrary. Then the weight of row $e$ after flipping the columns in $T'$ is 
    $$2 \, - \, |U \cap e| + (|T'|-1) - |U \cap e| \ = \ |T'|+1.$$
    This is a contradiction.  

    Suppose, in turn, that $G$ has a vertex set $U \subseteq V$ with $|U| \leq \ell$ such that at least $|E|-p$ edges have an endpoint in $U$. Let us define $T = \{i \mid v_i \in U\}$ and $T' = T \cup \{|V|+1\}$. Then $|T'| = |U| + 1 \leq \ell + 1$. We claim that $T'$ is a solution for the instance $(B,\ell+1)$ of {\sc $p$-BMCF}. Take an arbitrary edge $e \in E$ that is covered by $U$. In a similar fashion as above, the weight of row $e$ after flipping the columns in $T'$ is 
    $$2 \, - \, |U \cap e| + (|T'|-1) - |U \cap e| \ = \ 
    |T'|+1 - 2|U \cap e|.$$
    But $|U \cap e| > 0$ since $U$ covers $e$, which implies that the weight of row $e$ after flipping the columns in $T'$ is at most $|T'|-1$. 
\end{proof}

\begin{proof}[Proof of Theorem \ref{theo:discrete-counterfactual}] Fix an odd integer $k = 2p + 1$, for $p \geq 0$. 
    The proof is by reduction from {\sc $p$-BMCF}. Suppose that the input to {\sc $p$-BMCF} is given by a Boolean matrix $B$ of dimension $m \times n$ and an integer $\ell \leq n$. 
    From the proof of Proposition \ref{prop:bmcf}, we can assume that $B$ contains no repeated rows, each row of $B$ contains at least two 0s, and $m \geq p+1$. From $B$, we define the input $(S^+,S^-,\bar x,\ell)$ for {\sc $k$-Counterfactual Explanation} as follows: 
 \begin{itemize}
     \item Each row $b$ of $B$ defines a tuple in $S^+$ of dimension $n+p+1$. 
    This tuple is obtained by extending $b$ with $p+1$ 
    0s on the right. 
    \item We define $S^-$ to contain all tuples of the form $\{0\}^{n+j} \times \{1\} \times \{0\}^{p-j}$, for $1 \leq j \leq p+1$. Observe that there are $p+1$ such tuples.  
    \item Finally, $\bar x = \{1\}^{n+p+1}$.
 \end{itemize}
    Notice that the $p+1$ closest points to $\bar x$ in the set $S^+ \cup S^-$ belong to $S^+$, as every row of $B$ contains at least two 0s. 
    Therefore, $f^k_{S^+,S^-}(\bar x) = 1$.  


    Assume first that the input $(B,\ell)$ to {\sc $p$-BMCF} has a solution $T \subseteq \{1,\dots,n\}$. Then $|T| \leq \ell$. Let $\bar y \in \{0,1\}^{n+p}$ be the point that is obtained from $\bar x$ by flipping from 1 to 0 precisely 
    the elements indexed in $T$. Then the Hamming distance between $\bar x$ and $\bar y$ is $|T| \leq \ell$, and the Hamming distance between $\bar y$ and any point in $S^-$ is $n-|T| + p$. Let $B_T$ be the matrix that is obtained from $B$ after flipping the columns in $T$. We know that at least $m-p$ of the rows in $B_T$ have weight at most $|T|-1$. 
    Take any such row $b$, and let $\phi(b)$ be its corresponding element in $S^+$. It follows that the  distance between $\bar y$ and $\phi(b)$ is at least $n - (|T|-1) + p + 1 = n - |T| -p + 2 > n-|T| + p$. Therefore, $S^+$ contains at most $p$ elements which are at distance at most $n-|T| + p$ from $\bar y$. Hence, $f^k_{S^+,S^-}(\bar y) = 0$.


    Assume, in turn, that the input $(S^+,S^-,\bar x,\ell)$ to the problem {\sc Counterfactual Explanation} has a solution given by $\bar y$, where $\bar y$ is at distance $d \leq \ell$ from $\bar x$. Let $T \subseteq \{1,n+p+1\}$ be the set of indices for which $\bar y$ takes value 0 and $T' = T \cap \{1,\dots,n\}$. We claim that $T'$ is a solution for $(B,\ell)$ with respect to {\sc $p$-BCMF}. Notice first that the Hamming distance between $\bar y$ and an arbitrary element in $S^-$ is at most 
    $$(n-|Y'|) + (p+1) - (d - |Y'|) + 1 \ = \ n + p - d + 2.$$ This means that there cannot be $p+1$ elements in $S^+$ whose Hamming distance from $\bar y$ is at most $n+p-d + 2$ (as, otherwise, $f^k_{S^+,S^-}(\bar y) = 1$, which is a contradiction). Suppose, for the sake of contradiction, that after flipping the columns in $T'$ and obtaining matrix $B_{T'}$ there are at least $p+1$ rows whose weight is at least $|Y'|$. Take any such row $b$. Then the distance between $\bar y$ and the element in $S^+$ that uniquely represents $b$ is at most
    $$(n-|Y'|) + (p+1) - (d - |Y'|) \ = \ n + p - d + 1 \ < \ n + p - d + 2.$$
    This is our desired contradiction. 
\end{proof}

\paragraph{{\bf Abductive explanations}}
We start by establishing that the problems {\sc $1$-Check Sufficient Reason} and {\sc $1$-Minimal Sufficient Reason}
are tractable in the discrete setting. 

\begin{proposition} \label{prop:discrete-check-sr-k1}
The problem {\sc $1$-Check Sufficient Reason}$(\{0,1\},D_H)$, and hence also {\sc $1$-Minimal Sufficient Reason}$(\{0,1\},D_H)$, is polynomial-time solvable. 
\end{proposition}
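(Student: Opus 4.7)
The plan is to mirror the argument of Proposition~\ref{prop_l1_1check} for the continuous $\ell_1$ case. What makes that proof work is that the $\ell_1$-norm decomposes additively across coordinates and satisfies the triangle inequality; the Hamming distance shares both properties, so essentially the same reduction should go through.

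Concretely, I would fix the notation $\bar v = (\bar v_1, \bar v_2)$, where $\bar v_1$ is the projection of $\bar v$ onto the coordinates in $X$ and $\bar v_2$ the projection onto $\{1,\dots,n\}\setminus X$. Any candidate $\bar y$ with $\bar y_1 = \bar x_1$ then satisfies $d_H(\bar y, \bar v) = d_H(\bar x_1, \bar v_1) + d_H(\bar y_2, \bar v_2)$, so only the second summand depends on the free part $\bar y_2$. Using Proposition~\ref{prop_symm} for $k=1$, the question of whether $X$ fails to be a sufficient reason becomes: if $f^1_{S^+,S^-}(\bar x)=0$, does there exist an extension $\bar y$ and some $\bar a\in S^+$ with $d_H(\bar y, \bar a)\le d_H(\bar y, \bar c)$ for every $\bar c\in S^-$; symmetrically (with a strict inequality and the roles of $S^+$ and $S^-$ swapped) if $f^1_{S^+,S^-}(\bar x)=1$.

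The crucial step is to show that for each such candidate $\bar a$ it suffices to try the canonical extension $\bar y_2 = \bar a_2$. This follows from the triangle inequality: for every $\bar y_2'$ and every $\bar c$,
\[
d_H(\bar y_2', \bar c_2) - d_H(\bar y_2', \bar a_2) \;\le\; d_H(\bar a_2, \bar c_2),
\]
and the right-hand side is precisely the value attained by choosing $\bar y_2' = \bar a_2$. Hence if the canonical candidate $\bar y = (\bar x_1, \bar a_2)$ fails to exhibit $\bar a$ as a closest witness, no other extension of $\bar x_1$ succeeds with the same $\bar a$ either. The algorithm then iterates over $\bar a$ in the appropriate set among $S^+$ and $S^-$, builds the canonical candidate, and verifies the closeness condition in polynomial time. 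Tractability of {\sc $1$-Minimal Sufficient Reason}$(\{0,1\},D_H)$ is then immediate from Proposition~\ref{prop_check_minimal_reduction}.

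I do not anticipate a serious obstacle: the triangle-inequality calculation transfers verbatim from the $\ell_1$ case, and the only mild care needed is tracking strict versus non-strict inequalities depending on the value of $f^1_{S^+,S^-}(\bar x)$, which is handled by a symmetric case split.
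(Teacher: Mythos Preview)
Your proposal is correct and essentially identical to the paper's own proof: the paper also shows that it suffices to test the canonical candidates $\bar y = (\bar x_1, \bar a_2)$ for each $\bar a$ in the appropriate set, using the observation that moving $\bar y_2$ toward $\bar a_2$ can only improve the relevant inequality (which is exactly your triangle-inequality step, phrased as ``flipping coordinates decreases the distance to $\bar a$ by $r$ and to any other point by at most $r$''). The case split on $f^1_{S^+,S^-}(\bar x)$ and the passage to {\sc $1$-Minimal Sufficient Reason} via Proposition~\ref{prop_check_minimal_reduction} are also the same.
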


\begin{proof}
We show that {\sc $1$-Check Sufficient Reason}$(\{0,1\},D_H)$ can be solved in polynomial time. Let $\bar{x}\in\{0,1\}^n$, $S^+,S^-\subseteq \{0,1\}^n$, and $X\subseteq\{1,\dots,n\}$. Suppose that $X$ is not a sufficient reason. Assume without loss of generality that $f^{1}_{S^+,S^-}(\bar x)=1$. Then there is a vector $\bar{z}\in\{0,1\}^n$ with $\bar{z}[i]=\bar{x}[i]$ for every $i\in X$ such that $f^{1}_{S^+,S^-}(\bar z)=0$. For each $\bar{y}\in S^{-}$, let $\bar{y}_X$ be the vector such that $\bar{y}_X[i]= \bar{x}[i]$, for every $i\in X$, and $\bar{y}_X[i]= \bar{y}[i]$, for every $i\not\in X$. We claim that without loss of generality, the vector $\bar{z}$ can always be chosen to be a vector in $\{\bar{y}_X\mid \bar{y}\in S^{-}\}$. Indeed, since $f^{1}_{S^+,S^-}(\bar z)=0$, there is $\bar{y}\in S^-$ such that $d_H(\bar{z},\bar{y})<d_H(\bar{z},\bar{w})$, for every $\bar{w}\in S^+$. By flipping the components of $\bar{z}$ belonging to $\{i\not\in X\mid \bar{z}[i]\neq \bar{y}[i]\}$ to obtain $\bar{y}_X$, the distance $d_H(\bar{z},\bar{y})$ decreases by an amount of $r=|\{i\not\in X\mid \bar{z}[i]\neq \bar{y}[i]\}|$, while the distances $d_H(\bar{z},\bar{w})$ can decrease by at most a quantity of $r$. Hence the strict inequality still holds, in particular, we have that $d_H(\bar{y}_X,\bar{y})<d_H(\bar{y}_X,\bar{w})$, for every $\bar{w}\in S^+$. 

The above discussion implies that in order to check whether $X$ is a sufficient reason when $f^{1}_{S^+,S^-}(\bar x)=1$ (the other case is analogous), it suffices to check that none of the vectors in $\{\bar{y}_X\mid \bar{y}\in S^{-}\}$ satisfies $f^{1}_{S^+,S^-}(\bar{y}_X)=0$. This can be easily checked in polynomial time.
\end{proof}

As a corollary, we obtain the following: 

\begin{corollary}
     \label{coro:discrete-minimal-sr-comp} 
Consider the discrete setting $(\{0,1\},D_H)$. 
There is a polynomial time algorithm that, given sets $S^+,S^- \subseteq \mathbb{R}^n$ and a vector $\bar x \in \mathbb{R}^n$, it computes a minimal sufficient reason $X$ for $\bar x$ with respect to $f^1_{S^+,S^-}$. 
\end{corollary}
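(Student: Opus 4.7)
The plan is to convert the polynomial-time decidability of \textsc{$1$-Check Sufficient Reason}$(\{0,1\},D_H)$, established in Proposition~\ref{prop:discrete-check-sr-k1}, into a constructive polynomial-time algorithm for computing a minimal sufficient reason, using a standard greedy strip-down strategy.

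The algorithm starts by initializing $X := \{1,\dots,n\}$, which is trivially a sufficient reason for $\bar x$ with respect to $f^1_{S^+,S^-}$, since fixing all components forces $\bar y = \bar x$. Then we process the indices $1,2,\dots,n$ in order; for each index $i$ currently in $X$, we test whether $X \setminus \{i\}$ is still a sufficient reason by invoking the polynomial-time checker from Proposition~\ref{prop:discrete-check-sr-k1}. If the answer is \textsc{Yes}, we update $X := X \setminus \{i\}$; otherwise, we keep $i$ in $X$. After processing all indices, we return $X$.

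Correctness follows from two observations. First, the invariant that $X$ is a sufficient reason is maintained throughout, since we only remove $i$ from $X$ when we have verified that the result remains a sufficient reason. Second, the returned $X$ is minimal: if some $i \in X$ were such that $X \setminus \{i\}$ were still a sufficient reason, then, since sufficient reasons are upward closed under superset (if $Y \subseteq Y'$ and $Y$ is a sufficient reason then so is $Y'$), the set $X_i \setminus \{i\}$ at the moment index $i$ was processed would also have been a sufficient reason (because $X_i \supseteq X$, and removing $i$ from a superset of a sufficient reason still yields a sufficient reason); thus the algorithm would have removed $i$, contradicting $i \in X$. For the running time, the algorithm performs exactly $n$ calls to the checker, each running in polynomial time, so the overall complexity is polynomial.

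There is essentially no obstacle here: the only subtlety is the upward-closure argument justifying minimality, which is immediate from the definition of sufficient reason (any vector $\bar y$ agreeing with $\bar x$ on a superset of indices also agrees on any subset). Note that this is exactly the algorithmic content underlying Proposition~\ref{prop_check_minimal_reduction}, specialized to the discrete setting with $k=1$, so the corollary follows directly.
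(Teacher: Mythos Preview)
Your proposal is correct and matches the paper's intended argument: the paper states this corollary without proof, relying on Proposition~\ref{prop:discrete-check-sr-k1} together with the greedy strip-down idea underlying Proposition~\ref{prop_check_minimal_reduction}, which is exactly what you spell out.
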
 

In contrast, {\sc $k$-Check Sufficient Reason}  becomes intractable in this setting when $k \geq 3$. 

\begin{theorem} \label{theo:discrete-check-sr-k3}
The problem {\sc $k$-Check Sufficient Reason}$(\{0,1\},D_H)$ is \conp-complete for every odd integer $k \geq 3$.  
\end{theorem} 

\begin{proof}
The $\conp$ upper bound is direct: given $S^+, S^-$, $\bar{x}\in\{0,1\}^n$ and $X\subseteq \{1,\dots, n\}$, for all $\bar{y}\in\{0,1\}^n$ such that $\bar{y}[i] = \bar{x}[i]$, for every $i\in X$, we can verify in polynomial time whether $f^{k}_{S^+,S^-}(\bar y) = f^{k}_{S^+,S^-}(\bar x)$.

We show that the complement of {\sc $k$-Check Sufficient Reason}$(\{0,1\},D_H)$ is \np-hard for every fixed odd integer $k \geq 3$. We reduce from the Vertex Cover problem: given an undirected graph $G=(V,E)$ with $n$ vertices, and an integer $q\geq 0$, decide whether there is a subset of vertices $C\subseteq V$ with size $|C|\leq q$ such that $C$ is a vertex cover, that is, every edge in $E$ has an endpoint in $C$. We can assume without loss of generality that $q\geq n/2$. Indeed, if $q<n/2$, we consider the instance $(G', q'=n-q)$, where $G'$ results from $G$ by  adding $n-2q$ fresh nodes, each of them connected by an edge to every original node in $G$. Since any subset of vertices of size $n$ or $n-1$ is a vertex cover, we can also assume that $q\leq n-2$.

 Take a graph $G=(V,E)$ with $V=\{1,\dots,n\}$ and $E=\{e_1,\dots,e_m\}$, for $m\geq 1$. The vector dimension is $n+(k+1)/2 + (2q-n)$ and then we write vectors in $\{0,1\}^{n+(k+1)/2+(2q-n)}$ as  concatenations $(\bar{x}, \bar{y}, \bar{z})$ of three vectors $\bar{x}\in \{0,1\}^{n}$, $\bar{y}\in \{0,1\}^{(k+1)/2}$ and $\bar{z}\in \{0,1\}^{2q-n}$.  We denote by $\bar{0}_r$ and $\bar{1}_r$ the vectors $(0,\dots,0)\in \{0,1\}^{r}$ and $(1,\dots,1)\in \{0,1\}^{r}$, respectively. For $h\in\{1,\dots, (k+1)/2\}$ we define the canonical vector $\bar{\alpha}_h\in\{0,1\}^{(k+1)/2}$ such that $\bar{\alpha}_h[h']=1$ for $h'=h$ and $\bar{\alpha}_h[h']=0$, for $h'\neq h$. For each $j\in\{1,\dots,m\}$, we define the vector $\bar{y}_j\in\{0,1\}^{n}$ such that $\bar{y}_j[i]=1$ if the edge $e_j$ is incident to the vertex $i$, and $\bar{y}_j[i]=0$ otherwise. The sets $S^-,S^{+}\subseteq\{0,1\}^{n+(k+1)/2 + (2q-n)}$ are defined as follows:
\begin{align*}
S^{-} & =\{(\bar{y}_j, \bar{\beta}, \bar{1}_{2q-n}) \mid j\in\{1,\dots,m\}, \bar{\beta}\in \{0,1\}^{(k+1)/2}\setminus \{\bar{0}_{(k+1)/2}\}\}.\\
S^{+} & = \{(\bar{0}_n, \bar{\alpha}_1, \bar{1}_{2q-n})\}\cup \{(\bar{1}_n, \bar{\alpha}_h, \bar{0}_{2q-n})\mid h\in\{2,\dots,(k+1)/2\}\}.
\end{align*} 
Note that $|S^+| = (k+1)/2$. Finally, we set $\bar{x}=(\bar{0}_n, \bar{0}_{(k+1)/2}, \bar{0}_{2q-n})$. Note that $f^{k}_{S^+,S^-}(\bar x)=0$ as $d_H(\bar x, (\bar{1}_n, \bar{\alpha}_h, \bar{0}_{2q-n}))=n+1$ for every $h\in \{2,\dots,(k+1)/2\}$, while 
$$d_{H}(\bar x, (\bar{y}_1, \bar{\alpha}_h, \bar{1}_{2q-n}))= 2 + 1 + 2q-n\leq 3 + 2(n-2) - n = n-1$$
for every $h\in\{1,\dots,(k+1)/2\}$.

We claim that there is a vertex cover $C\subseteq\{1,\dots,n\}$ with $|C|\leq q$ if and only if the empty set is not a sufficient reason for $\bar{x}$ w.r.t $f^{k}_{S^+,S^-}$. The latter condition is equivalent to the existence of a vector $\bar{z}\in \{0,1\}^{n+(k+1)/2+(2q-n)}$ with $f^{k}_{S^+,S^-}(\bar z)=1$. It suffices to show the following two properties:

\begin{enumerate}
\item If there is a vertex cover $C\subseteq\{1,\dots,n\}$ of $G$ with $|C|=q$,  then $f^{k}_{S^+,S^-}(\bar z)=1$, for $\bar{z}=(\bar{w}_C,\bar{0}_{(k+1)/2}, \bar{0}_{2q-n})$, where $\bar{w}_C[i]=0$ if $i\in C$, and $\bar{w}_C[i]=1$ if $i\in \{1,\dots,n\}\setminus C$. 
\item Suppose that $f^{k}_{S^+,S^-}(\bar z)=1$ for $\bar{z}=(\bar{w}, \bar{\gamma}, \bar{t})$ and let $C=\{i\in\{1,\dots,n\}\mid \bar{w}[i]=0\}$. Then we have only two possibilities: either $|C|\leq q$ and $C$ is a vertex cover of $G$; or $|C|=q+1$ and every subset $C'\subseteq C$ with $|C'|=q$ is a  vertex cover of $G$.
\end{enumerate}

These two properties imply the correctness of the reduction. If there is a vertex cover of size at most $q$, without loss of generality, we can assume the size is $q$ and then we can apply (1). For the backward direction, if $f^{k}_{S^+,S^-}(\bar z)=1$ for some $\bar{z}$, property (2) implies that, in any of the possible two cases, there is a vertex cover of size at most $q$. 

We start with property (1). Suppose $C$ is a vertex cover with $|C|=q$ and define $\bar{w}_C\in\{0,1\}^n$ such that  $\bar{w}_C[i]=0$ if $i\in C$ and $\bar{w}_C[i]=1$, if $i\in\{1,\dots,n\}\setminus C$. We claim that $f^{k}_{S^+,S^-}(\bar z)=1$ for the vector $\bar{z}=(\bar{w}_C, \bar{0}_{(k+1)/2}, \bar{0}_{2q-n})$. For every $(\bar{y}_j, \bar{\beta}, \bar{1}_{2q-n})\in S^{-}$, we have
$$d_H(\bar{z}, (\bar{y}_j, \bar{\beta}, \bar{1}_{2q-n})) \geq |V\setminus C| + 1 + 2q-n = n-q + 1 + 2q - n = q+1.$$
Here we are using the fact that $C$ covers the edge $e_j$, that is, $|C\cap e_j|\geq 1$, and then:
$$ d_H(\bar{w}_C, \bar{y}_j) = |C\cap e_j| + |V\setminus C| - (2-|C\cap e_j|) \geq 1 + |V\setminus C| - (2-1) = |V\setminus C|.$$
On the other hand, we have: 
$$d_H(\bar{z}, (\bar{0}_n, \bar{\alpha}_1, \bar{1}_{2q-n})) = |V\setminus C| + 1 + 2q-n = q+1$$
and
$$d_H(\bar{z}, (\bar{1}_n, \bar{\alpha}_h, \bar{0}_{2q-n})) = |C| + 1 = q+1$$
for every $h\in\{2,\dots,(k+1)/2\}$.

We now turn to property (2). Suppose that $f^{k}_{S^+,S^-}(\bar z)=1$, for some $\bar{z}= (\bar{w}, \bar{\gamma}, \bar{t})\in \{0,1\}^{n+(k+1)/2 + (2q-n)}$ and define $C = \{i\in \{1,\dots,n\}\mid \bar{w}[i] = 0\}$. We can always pick  vectors $\bar{\beta}_1,\dots, \bar{\beta}_{(k+1)/2}$ from $\{0,1\}^{(k+1)/2}\setminus \{\bar{0}_{(k+1)/2}\}$ such that $d_H(\bar{\gamma}, \bar{\beta}_h)\leq 1$, for every $h\in\{1,\dots, (k+1)/2\}$.

By contradiction, suppose first that $|C|\geq q+2$. Then:
\begin{align*}
d_H((\bar{w}, \bar{\gamma}, \bar{t}), (\bar{y}_1, \bar{\beta}_h, \bar{1}_{2q-n})) & \leq d_H(\bar{w}, \bar{y}_1) + 1 + d_H(\bar{t}, \bar{1}_{2q-n})\\
& = |C\cap e_1| + |V\setminus C| - (2-|C\cap e_1|) + 1 + d_H(\bar{t}, \bar{1}_{2q-n}) \\
& \leq 2+ |V\setminus C| + 1 + d_H(\bar{t}, \bar{1}_{2q-n})\\
& \leq 2 + (n-q-2) + 1 + 2q - n\\
& = q+1
\end{align*}
for every $h\in\{1,\dots, (k+1)/2\}$. On the other hand, we have
$$d_H((\bar{w}, \bar{\gamma}, \bar{t}), (\bar{1}_n, \bar{\alpha}_h, \bar{0}_{2q-n})) \geq d_H(\bar{w}, \bar{1}_n) = |C|\geq q+2$$
for every $h\in\{2,\dots, (k+1)/2\}$. By Proposition~\ref{prop_symm}, choosing $A=\{(\bar{y}_1, \bar{\beta}_h, \bar{1}_{2q-n})\mid h\in \{1,\dots,(k+1)/2\}\}\subseteq S^-$ and $B = \{(\bar{0}_n, \bar{\alpha}_1, \bar{1}_{2q-n})\}\subseteq S^+$,  we have that $f^{k}_{S^+,S^-}(\bar z)=0$; a contradiction. 

Assume that $|C|\leq q$ and, towards a contradiction, suppose that $C$ is not a vertex cover. There must be an edge $e_j$ such that $|C\cap e_j| = 0$.  We have that
\begin{align*}
d_H((\bar{w}, \bar{\gamma}, \bar{t}), (\bar{y}_j, \bar{\beta}_h, \bar{1}_{2q-n})) & \leq d_H(\bar{w}, \bar{y}_j) + 1 + d_H(\bar{t}, \bar{1}_{2q-n})\\
& = |C\cap e_j| + |V\setminus C| - (2-|C\cap e_j|) + 1 + d_H(\bar{t}, \bar{1}_{2q-n}) \\
& = |V\setminus C| - 1 + d_H(\bar{t}, \bar{1}_{2q-n})
\end{align*}
for every $h\in\{1,\dots, (k+1)/2\}$. On the other hand, we have
$$d_H((\bar{w}, \bar{\gamma}, \bar{t}), (\bar{0}_n, \bar{\alpha}_1, \bar{1}_{2q-n})) \geq d_H(\bar{w}, \bar{0}_n) + d_H(\bar{t}, \bar{1}_{2q-n}) = |V\setminus C| + d_H(\bar{t}, \bar{1}_{2q-n}).$$

We conclude that $f^{k}_{S^+,S^-}(\bar z)=0$, using Proposition~\ref{prop_symm} with $A=\{(\bar{y}_j, \bar{\beta}_h, \bar{1}_{2q-n})\mid h\in \{1,\dots,(k+1)/2\}\}\subseteq S^-$ and $B = S^+\setminus \{(\bar{0}_n, \bar{\alpha}_1, \bar{1}_{2q-n})\}$. This is a contradiction. 

Finally, suppose that $|C|=q+1$. We claim that there is no edge with an endpoint in $V\setminus C$. By contradiction, suppose there is such an edge, say $e_j$. In particular, $|C\cap e_j|\leq 1$.  We obtain:
\begin{align*}
d_H((\bar{w}, \bar{\gamma}, \bar{t}), (\bar{y}_j, \bar{\beta}_h, \bar{1}_{2q-n})) & \leq d_H(\bar{w}, \bar{y}_j) + 1 + d_H(\bar{t}, \bar{1}_{2q-n})\\
& = |C\cap e_j| + |V\setminus C| - (2-|C\cap e_j|) + 1 + d_H(\bar{t}, \bar{1}_{2q-n}) \\
& \leq 1 + |V\setminus C| - (2-1) + 1 + d_H(\bar{t}, \bar{1}_{2q-n})\\
& \leq (n-q-1) + 1 + 2q - n\\
& = q
\end{align*}
for every $h\in\{1,\dots, (k+1)/2\}$. On the other hand, we have
$$d_H((\bar{w}, \bar{\gamma}, \bar{t}), (\bar{1}_n, \bar{\alpha}_h, \bar{0}_{2q-n})) \geq d_H(\bar{w}, \bar{1}_n) = |C| = q+1$$
for every $h\in\{2,\dots, (k+1)/2\}$. Again, by Proposition~\ref{prop_symm}, we conclude that $f^{k}_{S^+,S^-}(\bar z)=0$; a contradiction. 
It follows that every edge of $G$ is contained in $C$, and, since $|C|=q+1$, any subset $C'\subseteq C$ with $|C'|=q$ is a vertex cover of $G$.
\end{proof}

The hardness of {\sc $k$-Check Sufficient Reason}$(\{0,1\},D_H)$ holds even when the input subset of components $X\subseteq\{1,\dots,n\}$ is the empty set. As a consequence, {\sc $k$-Minimal Sufficient Reason}$(\{0,1\},D_H)$ is also hard when $k\geq 3$. This also implies hardness for the problem of computing minimal sufficient reasons.

\begin{corollary} \label{coro:discrete-minimal-sr-k3}
The problem {\sc $k$-Minimal Sufficient Reason}$(\{0,1\},D_H)$ is \conp-hard for every odd integer $k \geq 3$. 
\end{corollary}

\section{Further Complexity Results}\label{sec:further}

We start by noting that some of the $\np$-hardness results presented in Section \ref{sec:msr} for minimum sufficient reasons are actually completeness results. Indeed, the results in Sections \ref{sec:l2}, \ref{sec:l1} and \ref{sec:discrete} show that checking sufficient reasons can be done in polynomial time in several settings. This directly gives us $\np$ algorithms for these cases.

\begin{corollary}\label{coro:np-c-msr}
The following problems are $\np$-complete: {\sc $1$-Minimum Sufficient Reason}$(\{0,1\},D_H)$, {\sc $1$-Minimum Sufficient Reason}$(\mathbb{R},D_1)$, and {\sc $k$-Minimum Sufficient Reason}$(\mathbb{R},D_2)$, for  every fixed odd integer $k \geq 1$. 
\end{corollary}

In the remaining of the section we establish two additional complexity results. We first pinpoint the complexity of {\sc $k$-Minimum Sufficient Reason}$(\{0,1\},D_H)$, when $k\geq 3$, and show that this problem is complete for $\Sigma_2^p$, the second level of the polynomial hierarchy. Then we study the parameterized complexity of {\sc $k$-Counterfactual Explanation} in the setting based on $\ell_2$-distance.

\paragraph{{\bf The complexity of Minimum Sufficient Reason in the discrete setting, for $k\geq 3$}} We assume familiarity with the complexity class $\Sigma_2^p$, which corresponds to $\np^{\np}$, that is, $\np$ with an oracle in $\np$. We show the following:

\begin{theorem} \label{theo:sigma2p-msr-discrete}
{\sc $k$-Minimum Sufficient Reason}$(\{0,1\},D_H)$ is $\Sigma_2^p$-complete, for every fixed odd integer $k\geq 3$.
\end{theorem}

We start by establishing the $\Sigma_2^p$-completeness of the following auxiliary problem:

\newcommand{\nforall}{%
\ooalign{$\forall$\cr\hidewidth$\!/$\hidewidth\cr}%
    }

\medskip

\begin{center}
\fbox{\begin{tabular}{lp{8cm}}
{\small PROBLEM} : & {$\exists\nforall$-\sc Vertex-Cover} \\
{\small INPUT} : & 
Graph $G = (V, E)$, and two positive integers $p < q$.

\\ 
{\small OUTPUT} : & {\sc Yes}, if there is $S \subseteq V$, with $|S| \leq p$, such that no superset $S'$ ($S \subseteq S' \subseteq V$) with $|S'| \leq q$ is a vertex cover of $G$. {\sc No} otherwise.
\end{tabular}}
\end{center}

\medskip

This problem is tightly related to \emph{``interdiction''} optimization problems over graphs, in which one has some base problem (i.e., min vertex-cover, max clique, min dominating set, etc.) and the task is to find a small subset of the vertices that intersects every optimal solution. A nice general framework for proving $\Sigma_2^p$-completeness of interdiction problems has been recently established by Gr\"une and Wulf~\cite{grüne2025complexityblockingsolutions}. For our purposes, nonetheless, it will be enough to consider a simpler and older result from \cite{rutenburgPropositionalTruthMaintenance1994}: the independent set interdiction problem, defined next, is $\Sigma_2^p$-complete.

\medskip

\begin{center}
\fbox{\begin{tabular}{lp{8cm}}
{\small PROBLEM} : & {\sc Independent Set Interdiction} \\
{\small INPUT} : & 
Graph $G = (V, E)$, and two positive integers $p, q$.

\\ 
{\small OUTPUT} : & {\sc Yes}, if there is $S \subseteq V$, with $|S| \leq p$, such that every independent set $S' \subseteq V$ with $|S'| \geq q$ holds $S \cap S' \neq \varnothing$. {\sc No} otherwise.
\end{tabular}}
\end{center}

\medskip




\begin{theorem}
    The $\exists\nforall$-{\sc Vertex-Cover} problem is $\Sigma_2^p$-complete. Hardness holds even when $n/2\leq q \leq n-2$, where $n$ is the number of vertices of the input graph. 
\end{theorem}

\begin{proof}
    Membership is direct: the NP-part of the computation guesses $S$, checks its size, and calls a coNP-oracle to check that every superset $S' \supseteq S$ of size at most $q$ fails to cover $G$. 

    For hardness, we reduce from {\sc Independent Set Interdiction}. Let $(G, p, q)$ be an input instance, where $G=(V,E)$. We then simply output $(G, p, |V|-q)$, which obviously can be done in polynomial time. 

    To prove correctness we will need some notation. Given a graph $G$, we denote by $\alpha(G)$ the maximum size of an independent set of $G$, and by $\tau(G)$ the minimum size of a vertex cover of $G$. Moreover, given a graph $G$ and a subset $S \subseteq V$, we denote by $\tau(G, S)$ the minimum size of a vertex cover that contains $S$. Our reduction relies on two simple observations:
    \begin{enumerate}
        \item $\alpha(G) + \tau(G) = |V|$. \label{obs:1}
        \item $\tau(G, S) = |S| + \tau(G[V \setminus S])$. \label{obs:2}
    \end{enumerate}
While observation (\ref{obs:1}) corresponds to the well-known fact that the complement of a vertex cover is an independent set (and vice versa), observation (\ref{obs:2}) requires a brief justification. Indeed, note that if $S' \supseteq S$ is any vertex cover of size $\tau(G, S)$, then $S' \setminus S$ must be a vertex cover of $G[V \setminus S]$,  and thus $\tau(G, S) = |S'| = |S| + |S' \setminus S| \geq |S| + \tau(G[V \setminus S])$. Conversely, let $S''$ be a vertex cover of minimum size for $G[V \setminus S]$. Then, $S'' \cup S$ is a vertex cover for $G$ which contains $S$, and thus $|S'' \cup S| = \tau(G[V \setminus S]) + |S| \geq \tau(G, S)$.

Now, observe that $\exists\nforall$-{\sc Vertex-Cover} can be rephrased as whether there exists $S \subseteq V$ with $|S| \leq p$ such that $\tau(G, S) > q$. Similarly, the {\sc Independent Set Interdiction} problem can be rephrased as whether there exists $S \subseteq V$ with $|S| \leq p$ such that $\alpha(G[V \setminus S]) < q$, since the independent sets of $G$ that do not intersect $S$ are exactly the independent sets of $G[V \setminus S]$. Using these rephrasings to prove the correctness of our reduction, it suffices to show that for any $S \subseteq V$ with $|S| \leq p$, we have
\[
\tau(G, S) > q \iff \alpha(G[V\setminus S]) < |V| - q.
\]
This now follows easily from our observations:
\begin{align*}
    \tau(G, S) > q & \iff |S| + \tau(G[V \setminus S]) > q \tag{Observation (\ref{obs:2})}\\
    & \iff  |S| + (|V\setminus S| - \alpha(G[V \setminus S])) > q \tag{Observation (\ref{obs:1})}\\
    & \iff |S| + (|V| - |S| - \alpha(G[V \setminus S]) > q\\
    & \iff \alpha(G[V \setminus S]) < |V| - q.
\end{align*}

It remains to show that hardness holds when $n/2\leq q\leq n-2$. Let $(G,p,q)$ be an instance of $\exists\nforall$-{\sc Vertex-Cover}, where $G=(V,E)$ and $n:=|V|$. If $q<n/2$, we transform the instance to $(G',p,q')$, where $q'=n-q$ and $G'=(V',E')$ results from $G$ by adding $n-2q$ fresh nodes, each of them connected by an edge to every node in $G$. Note that $q'=n'/2$ where $n'=|V'|$. We claim that $(G,p,q)\in \exists\nforall\text{-{\sc Vertex-Cover}}$  if and only if $(G',p,q') \in \exists\nforall\text{-{\sc Vertex-Cover}}$. Suppose first that there is $S\subseteq V$ with $|S|\leq p$ such that every superset $S'$ of $S$ with $|S'|\leq q$ is not a vertex cover of $G$. It holds that there is no superset $S''$ of $S$ with $|S''|\leq q'$ that is a vertex cover of $G'$. By contradiction, if there is such a $S''$, as $q'< n$ and there is an edge from every fresh node to every node in $V$, then $S''$ must contain all the $n-2q$ fresh nodes. It follows that $|S''\cap V|\leq q$. This is a contradiction as $S''\cap V$ is a vertex cover of $G$ and a superset of $S$. On the other hand, assume there is $S\subseteq V'$ with $|S|\leq p$ such that every superset $S'$ of $S$ with $|S'|\leq q'$ is not a vertex cover of $G'$. We claim that there is no superset $S''\subseteq V$ of $S\cap V$ with $|S''|\leq q$ that is a vertex cover of $G$. If there is such a $S''$ then by adding the fresh nodes, we obtain a superset of $S$ that is a vertex cover of $G'$ of size at most $q+n-2q=q'$; a contradiction. Finally, note that any instance $(G,p,q)$ with $q\geq n-1$ is a negative instance of $\exists\nforall\text{-{\sc Vertex-Cover}}$, as any set of at least $n-1$  nodes is a vertex cover of $G$. 
\end{proof}

\begin{proof}[Proof of Theorem \ref{theo:sigma2p-msr-discrete}]
The upper bound is direct: we guess a subset $X$ of components with $|X|\leq \ell$ and then verify with an $\conp$-oracle that is a sufficient reason (recall Theorem \ref{theo:discrete-check-sr-k3}). 

We reduce from $\exists\nforall\text{-{\sc Vertex-Cover}}$ under the restriction  $n/2\leq q\leq n-2$. Given an instance $(G,p,q)$ of $\exists\nforall\text{-{\sc Vertex-Cover}}$, where $G=(V,E)$ and $V=\{1,\dots, n\}$, we define the instance $(S^+, S^-, \bar{x}, p)$ of {\sc $k$-Minimum Sufficient Reason}$(\{0,1\},D_H)$, where $(S^+,S^-,\bar{x})$ is define from $(G,q)$ \emph{exactly} as in the proof of Theorem \ref{theo:discrete-check-sr-k3}. Recall that the dimension of vectors in $(S^+,S^-,\bar{x})$ is $n+(k+1)/2 + (2q-n)$, and hence we write vectors as the concatenation of three vectors of dimensions $n$, $(k+1)/2$ and $(2q-n)$, respectively. The vector $\bar{x}$ is the all zero vector $\bar{x}=(\bar{0}_n,\bar{0}_{(k+1)/2}, \bar{0}_{2q-n})$ (here $\bar{0}_r$ is the all zero vector of dimension $r$). It holds that $f^{k}_{S^+,S^-}(\bar x)=0$. The condition $n/2\leq q\leq n-2$ is needed to apply this reduction. The following properties of $(S^+,S^-,\bar{x})$ are shown in the proof of Theorem \ref{theo:discrete-check-sr-k3}:
\begin{enumerate}
\item If there is a vertex cover $C\subseteq\{1,\dots,n\}$ of $G$ with $|C|=q$,  then $f^{k}_{S^+,S^-}(\bar z)=1$, for $\bar{z}=(\bar{w}_C,\bar{0}_{(k+1)/2}, \bar{0}_{2q-n})$, where $\bar{w}_C[i]=0$ if $i\in C$, and $\bar{w}_C[i]=1$ if $i\in \{1,\dots,n\}\setminus C$. 
\item Suppose that $f^{k}_{S^+,S^-}(\bar z)=1$ for $\bar{z}=(\bar{w}, \bar{\gamma}, \bar{t})$ and let $C=\{i\in\{1,\dots,n\}\mid \bar{w}[i]=0\}$. Then, either $|C|\leq q$ and $C$ is a vertex cover of $G$; or $|C|=q+1$ and every subset $C'\subseteq C$ with $|C'|=q$ is a  vertex cover of $G$.
\end{enumerate}
To see that the reduction is correct, assume first that there exists $S\subseteq\{1,\dots,n\}$ with $|S|\leq p$ such that no superset $S'$ of $S$ with $|S'|\leq q$ is a vertex cover of $G$. We claim that $S$ is a sufficient reason for $\bar x$ w.r.t. $f^k_{S^+,S^-}$. By contradiction, suppose there is a vector $\bar{z}=(\bar{w}, \bar{\gamma}, \bar{t})$ with $\bar{z}[i]=\bar{x}[i]=0$, for every $i\in S$, such that $f^k_{S^+,S^-}(\bar{z})=1$. Define $C=\{i\in\{1,\dots,n\}\mid \bar{w}[i]=0\}$. It holds that $S\subseteq C$. By property (2), we have two possible cases. If $|C|\leq q$, then $C$ is a vertex cover, and we obtain a contradiction. If $|C|= q+1$, we pick any $u\in C\setminus S$ (recall $p < q$), and then $C\setminus \{u\}$ is a superset of $S$ with $|C\setminus \{u\}|=q$ that is a vertex cover; a contradiction. 

For the other direction, suppose that $X\subseteq \{1,\dots, n+(k+1)/2+(2q-n)\}$ is a sufficient reason for $\bar{x}$ with $|X|\leq p$. Take $S=X\cap \{1,\dots,n\}$. We claim that no superset $C$ of $S$ with $|C|\leq q$ is a vertex cover of $G$. By contradiction, suppose that such a $C$ exists. We can assume that $|C|=q$. Take the vector $\bar{z}=(\bar{w}_C, \bar{0}_{(k+1)/2}, \bar{0}_{2q-n})$, where $\bar{w}_C[i]=0$, if $i\in C$, and $\bar{w}_C[i]=1$, otherwise. It holds that $\bar{z}[i]=\bar{x}[i]=0$, for every $i\in X$. Moreover, by property (1), we obtain that $f^k_{S^+,S^-}(\bar{z})=1$. This is a contradiction. 
\end{proof}

\section{Implementation and Experiments}\label{sec:implementation}
The results presented in the previous sections offer a comprehensive understanding of the requirements for computing explanations for $k$-NN classifiers in practice. In this section, we explore this topic further through a preliminary practical analysis, focusing on the widely used case of $k=1$. This case is particularly appealing as it reduces implementation complexity even for problems that remain tractable for larger values of $k$. We examine the problems of {\sc $1$-Counterfactual Explanation} and {\sc $1$-Minimal Sufficient Reason}. 

\subsection{Experimental setup and datasets}  
Our implementation is written in Python (3.10), but naturally calls external solvers. Namely, we use the standard \texttt{cvxpy} library for convex programming, the \texttt{Gurobi}~\cite{gurobi} solver for IQP, and the recent~\texttt{cardinality-cadical} solver~\cite{reeves2024clauses} for SAT-solving. All experiments were run on a Macbook Pro M1 2020 with 16GB of RAM.
We experiment both on the MNIST dataset of handwritten digit recognition~\cite{deng2012mnist} and on synthetic random data. For the MNIST dataset, we consider both the original grayscale $28 \times 28$ images, as well as a binarized version to represent the discrete setting, and different rescalings of the images to experiment with a different number of dimensions. Similarly, we consider subsets of the training data of different sizes (MNIST was originally split into 60\,000 training examples and 10\,000 test images). When computing an explanation for an image classified as digit $d \in \{0, 1, \ldots, 10\}$, we consider all images of digit $d$ as ``positive'' examples, and images of digits $d' \neq d$ as ``negative''.
For the synthetic random data, we consider uniformly random vectors in $\{0, 1\}^n$, labeled according to independent Bernoulli variables of parameter $p = \frac{1}{2}$, since additional experiments with other values of $p$ displayed a similar behavior.

\subsection{Implementation} For computing minimal sufficient reasons over $(\mathbb{R},D_1)$, we directly implement the simple algorithm from~\Cref{prop_l1_1check}, using the efficient \texttt{FAISS} library for fast $1$-NN search~\cite{douze2024faiss}. For computing counterfactual explanations in $(\mathbb{R}, D_2)$, we implement the convex program from~\Cref{theo:l2-cont-counterfactual}, ignoring tie-breaking concerns for simplicity. Computing counterfactual explanations in $(\mathbb{R}, D_1)$, we defer to the optimized implementation of a mixed integer program by Contardo et al.~\cite{DBLP:conf/iscopt/ContardoFRV24}. For counterfactual explanations in the discrete setting, we consider first the following IQP formulation for a vector $\bar{x}$ classified positively:
\begin{align*}
    \text{minimize } & \; \sum_{i=1}^n (\bar{x}[i] - \bar{y}[i])^2 \\
    \text{subject to } 
    & d^+ = \min_{\bar{z} \in S^+} \sum_{i=1}^n (\bar{z}[i] - \bar{y}[i])^2,\\
     & d^- = \min_{\bar{z} \in S^-} \sum_{i=1}^n (\bar{z}[i] - \bar{y}[i])^2,\\
     & d^- < d^+,\\
     & \bar{y} \in \{0, 1\}^d, \; d^+ \in \mathbb{R}, \; d^- \in \mathbb{R},
\end{align*}
where constraints 
\(
m = \min(r_1, \ldots, r_t)
\)
can be expressed by introducing indicator variables $v_1, \ldots, v_t \in \{0, 1\}$ and adding constraints
$m \leq r_i$ and $v_i \cdot r_i \leq m$, for every $i \in \{1,\dots,t\}$, and $\sum_{i = 1}^t v_i = 1$. 


\paragraph{\bf SAT encoding}
We also propose a CNF encoding to find a closest counterfactual $\bar{y}$, leveraging the native support for (guarded) cardinality constraints of the recent \texttt{cardinality-cadical} solver~\cite{reeves2024clauses}. Given boolean literals $\ell_1, \ldots, \ell_n$, cardinality constraints are of the form
\(
\sum_{i=1}^n \ell_i \geq b,
\)
for some constant integer $b$ (the ``bound''). On the other hand, ``guarded'' cardinality constraints, provided a different boolean literal $g$, are of the form
\(
    g \implies \left(\sum_{i=1}^n \ell_i \geq b\right).
\)
Our encoding uses boolean variables $y_1, \ldots, y_n$ where $y_i$ corresponds to whether $\bar{y}[i] = 1$, and variables $c_1, \ldots, c_{|S^-|}$, where  $c_i$ intuitively represents that the $i$-th point in $S^-$ (under some fixed ordering) will be the closest point to $\bar{y}$ among $S^+ \cup S^-$. 
We thus add first a clause of the form
\(
\bigvee_{i=1}^{|S^-|} c_i.
\)
Then, if we call $\bar{o}$ to the $i$-th point in $S^-$, we need to enforce that
\begin{equation}\label{eq:boolean-imp}
c_i \implies d_H(\bar{y}, \bar{o}) < d_H(\bar{y}, \bar{s}), \quad \forall \bar{s} \in S^+,
\end{equation}
which we show next how to encode as a guarded cardinality constraint.
Let us focus on a fixed pair $\bar{o}, \bar{s}$, and define the sets
\[
\Delta_0 := \{ i \mid \bar{o}[i] = 0, \bar{s}[i] =  1\} \quad ; \quad \Delta_1 := \{ i \mid \bar{o}[i] = 1, \bar{s}[i] =  0\}.
\]
We then have the following equivalence:
\begin{align*}
d_H(\bar{y}, \bar{o}) < d_H(\bar{y}, \bar{s}) &\iff \sum_{i \in \Delta_0} y_i + \sum_{i \in \Delta_1} \neg y_i < \sum_{i \in \Delta_0} \neg y_i + \sum_{i \in \Delta_1}  y_i\\
&\iff \sum_{i \in \Delta_0} y_i + \sum_{i \in \Delta_1} (1 - y_i) < \sum_{i \in \Delta_0} (1 - y_i) + \sum_{i \in \Delta_1}  y_i\\
& \iff |\Delta_1| - |\Delta_0| < 2\sum_{i \in \Delta_1} y_i -  2\sum_{i \in \Delta_0} y_i \\
& \iff \frac{|\Delta_0 + \Delta_1|}{2} < \sum_{i \in \Delta_0} \neg y_i + \sum_{i \in \Delta_1} y_i,
\end{align*}
which implies that we can encode~\Cref{eq:boolean-imp} as the following guarded cardinality constraint:
\[
c_i \implies \left(\sum_{i \in \Delta_0} \neg y_i + \sum_{i \in \Delta_1} y_i\right) \geq \left\lfloor \frac{|\Delta_0 + \Delta_1|}{2} \right\rfloor + 1.
\]
The encoding therefore has $|S^+| \cdot |S^-|$ guarded cardinality constraints.
Finally, in order to minimize the distance $d_H(\bar{x}, \bar{y})$, note that a cardinality constraint
\[
    \sum_{i \text{ s.t. } \bar{x}[i] = 0} y_i +  \sum_{i \text{ s.t. } \bar{x}[i] = 1} \neg y_i \geq n - k
\]
is equivalent to $d_H(\bar{x}, \bar{y}) \leq k$. Therefore, by doing a binary search over the parameter $k$ (or a linear search if the answer is expected to be small) we obtain a closest counterfactual explanation.

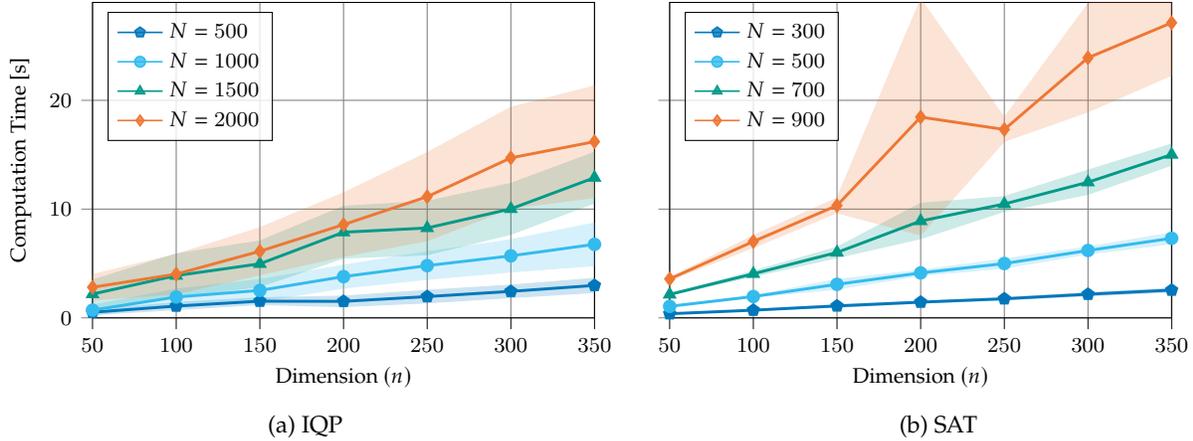
\begin{figure}
    \centering
\newlength\figurewidth
\newlength\figureheight
\setlength\figurewidth{0.5\columnwidth}
\setlength\figureheight{0.7\figurewidth}

\definecolor{darkcyan0119187}{RGB}{0,119,187}
\definecolor{darkcyan0153136}{RGB}{0,153,136}
\definecolor{gray}{RGB}{128,128,128}
\definecolor{mediumturquoise51187238}{RGB}{51,187,238}
\definecolor{tomato23811951}{RGB}{238,119,51}

\begin{subfigure}{0.49\textwidth}
\begin{tikzpicture}[font=\footnotesize, every plot/.style={line width=1.1pt}]

\begin{axis}[
height=\figureheight,
legend cell align={left},
legend style={fill opacity=0.8, draw opacity=1, text opacity=1, at={(0.03,0.97)}, anchor=north west},
minor xtick={},
minor ytick={},
tick align=outside,
tick pos=left,
width=\figurewidth,
x grid style={gray},
xlabel={Dimension $(n)$},
xmajorgrids,
xmin=50, xmax=350,
ylabel near ticks, 
xtick style={color=black},
xtick={50, 100, 150, 200, 250, 300, 350},
y grid style={gray},
ylabel={Computation Time [s]},
ymajorgrids,
ymin=0, ymax=29,
]
\path [draw=darkcyan0119187, fill=darkcyan0119187, opacity=0.2]
(axis cs:50,0.7437)
--(axis cs:50,0.2898)
--(axis cs:100,0.756)
--(axis cs:150,1.224)
--(axis cs:200,1.037)
--(axis cs:250,1.38)
--(axis cs:300,1.861)
--(axis cs:350,2.331)
--(axis cs:350,3.634)
--(axis cs:350,3.634)
--(axis cs:300,3.01)
--(axis cs:250,2.532)
--(axis cs:200,1.999)
--(axis cs:150,1.842)
--(axis cs:100,1.407)
--(axis cs:50,0.7437)
--cycle;

\path [draw=mediumturquoise51187238, fill=mediumturquoise51187238, opacity=0.2]
(axis cs:50,1.164)
--(axis cs:50,0.2663)
--(axis cs:100,1.197)
--(axis cs:150,1.531)
--(axis cs:200,2.74)
--(axis cs:250,3.54)
--(axis cs:300,4.221)
--(axis cs:350,4.797)
--(axis cs:350,8.722)
--(axis cs:350,8.722)
--(axis cs:300,7.18)
--(axis cs:250,6.066)
--(axis cs:200,4.846)
--(axis cs:150,3.52)
--(axis cs:100,2.636)
--(axis cs:50,1.164)
--cycle;

\path [draw=darkcyan0153136, fill=darkcyan0153136, opacity=0.2]
(axis cs:50,3.477)
--(axis cs:50,0.8982)
--(axis cs:100,1.886)
--(axis cs:150,2.876)
--(axis cs:200,5.495)
--(axis cs:250,5.803)
--(axis cs:300,7.683)
--(axis cs:350,10.56)
--(axis cs:350,15.22)
--(axis cs:350,15.22)
--(axis cs:300,12.36)
--(axis cs:250,10.72)
--(axis cs:200,10.25)
--(axis cs:150,7.065)
--(axis cs:100,5.883)
--(axis cs:50,3.477)
--cycle;

\path [draw=tomato23811951, fill=tomato23811951, opacity=0.2]
(axis cs:50,3.976)
--(axis cs:50,1.65)
--(axis cs:100,2.207)
--(axis cs:150,3.96)
--(axis cs:200,5.654)
--(axis cs:250,7.101)
--(axis cs:300,10.08)
--(axis cs:350,11.07)
--(axis cs:350,21.32)
--(axis cs:350,21.32)
--(axis cs:300,19.35)
--(axis cs:250,15.19)
--(axis cs:200,11.49)
--(axis cs:150,8.287)
--(axis cs:100,5.839)
--(axis cs:50,3.976)
--cycle;

\addplot [semithick, darkcyan0119187, mark=pentagon*, mark size=2, mark options={solid}]
table {%
50 0.5167
100 1.082
150 1.533
200 1.518
250 1.956
300 2.435
350 2.982
};
\addlegendentry{$N=500$}
\addplot [semithick, mediumturquoise51187238, mark=*, mark size=2, mark options={solid}]
table {%
50 0.715
100 1.917
150 2.526
200 3.793
250 4.803
300 5.7
350 6.76
};
\addlegendentry{$N=1000$}
\addplot [semithick, darkcyan0153136, mark=triangle*, mark size=2, mark options={solid}]
table {%
50 2.188
100 3.884
150 4.97
200 7.873
250 8.264
300 10.02
350 12.89
};
\addlegendentry{$N=1500$}
\addplot [semithick, tomato23811951, mark=diamond*, mark size=2, mark options={solid}]
table {%
50 2.813
100 4.023
150 6.124
200 8.573
250 11.14
300 14.71
350 16.2
};
\addlegendentry{$N=2000$}
\end{axis}
\end{tikzpicture}
\caption{IQP}
\end{subfigure}
\hfill
\begin{subfigure}{0.48\textwidth}
\begin{tikzpicture}[font=\footnotesize, every plot/.style={line width=1.1pt}]
\begin{axis}[
height=\figureheight,
legend cell align={left},
legend style={fill opacity=0.8, draw opacity=1, text opacity=1, at={(0.03,0.97)}, anchor=north west},
minor xtick={},
minor ytick={},
tick align=outside,
tick pos=left,
width=\figurewidth,
x grid style={gray},
xlabel={Dimension $(n)$},
xmajorgrids,
xmin=50, xmax=350,
xtick style={color=black},
xtick={50, 100, 150, 200, 250, 300, 350},
y grid style={gray},
ylabel={},
ymajorgrids,
ymin=0, ymax=29,
yticklabels=\empty
]
\path [draw=darkcyan0119187, fill=darkcyan0119187, opacity=0.2]
(axis cs:50,0.3921)
--(axis cs:50,0.3632)
--(axis cs:100,0.6624)
--(axis cs:150,0.9815)
--(axis cs:200,1.359)
--(axis cs:250,1.658)
--(axis cs:300,2.021)
--(axis cs:350,2.373)
--(axis cs:350,2.728)
--(axis cs:350,2.728)
--(axis cs:300,2.323)
--(axis cs:250,1.852)
--(axis cs:200,1.532)
--(axis cs:150,1.21)
--(axis cs:100,0.7502)
--(axis cs:50,0.3921)
--cycle;

\path [draw=mediumturquoise51187238, fill=mediumturquoise51187238, opacity=0.2]
(axis cs:50,1.095)
--(axis cs:50,1.024)
--(axis cs:100,1.891)
--(axis cs:150,2.639)
--(axis cs:200,3.893)
--(axis cs:250,4.572)
--(axis cs:300,5.875)
--(axis cs:350,6.809)
--(axis cs:350,7.822)
--(axis cs:350,7.822)
--(axis cs:300,6.514)
--(axis cs:250,5.415)
--(axis cs:200,4.371)
--(axis cs:150,3.529)
--(axis cs:100,2.03)
--(axis cs:50,1.095)
--cycle;

\path [draw=darkcyan0153136, fill=darkcyan0153136, opacity=0.2]
(axis cs:50,2.211)
--(axis cs:50,2.097)
--(axis cs:100,3.808)
--(axis cs:150,5.53)
--(axis cs:200,7.279)
--(axis cs:250,9.795)
--(axis cs:300,11.36)
--(axis cs:350,14.04)
--(axis cs:350,15.98)
--(axis cs:350,15.98)
--(axis cs:300,13.58)
--(axis cs:250,11.15)
--(axis cs:200,10.53)
--(axis cs:150,6.494)
--(axis cs:100,4.293)
--(axis cs:50,2.211)
--cycle;

\path [draw=tomato23811951, fill=tomato23811951, opacity=0.2]
(axis cs:50,3.705)
--(axis cs:50,3.45)
--(axis cs:100,6.467)
--(axis cs:150,9.661)
--(axis cs:200,7.624)
--(axis cs:250,16.23)
--(axis cs:300,18.94)
--(axis cs:350,22.31)
--(axis cs:350,31.94)
--(axis cs:350,31.94)
--(axis cs:300,28.87)
--(axis cs:250,18.42)
--(axis cs:200,29.3)
--(axis cs:150,11.01)
--(axis cs:100,7.572)
--(axis cs:50,3.705)
--cycle;

\addplot [semithick, darkcyan0119187, mark=pentagon*, mark size=2, mark options={solid}]
table {%
50 0.3776
100 0.7063
150 1.096
200 1.445
250 1.755
300 2.172
350 2.55
};
\addlegendentry{$N=300$}
\addplot [semithick, mediumturquoise51187238, mark=*, mark size=2, mark options={solid}]
table {%
50 1.06
100 1.96
150 3.084
200 4.132
250 4.994
300 6.194
350 7.316
};
\addlegendentry{$N=500$}
\addplot [semithick, darkcyan0153136, mark=triangle*, mark size=2, mark options={solid}]
table {%
50 2.154
100 4.05
150 6.012
200 8.905
250 10.47
300 12.47
350 15.01
};
\addlegendentry{$N=700$}
\addplot [semithick, tomato23811951, mark=diamond*, mark size=2, mark options={solid}]
table {%
50 3.578
100 7.02
150 10.33
200 18.46
250 17.32
300 23.91
350 27.13
};
\addlegendentry{$N=900$}
\end{axis}
\end{tikzpicture}
\caption{SAT}
\end{subfigure}
    \caption{Runtimes for counterfactual explanations over $\{0, 1\}^n$. The total training set has size $N := |S^+| + |S^-|$, consisting of independent uniformly random samples from $\{0, 1\}^n$. Confidence intervals of $95\%$ over $30$ independent runs are displayed. }
    \label{fig:disc-count-gurobi}
\end{figure}

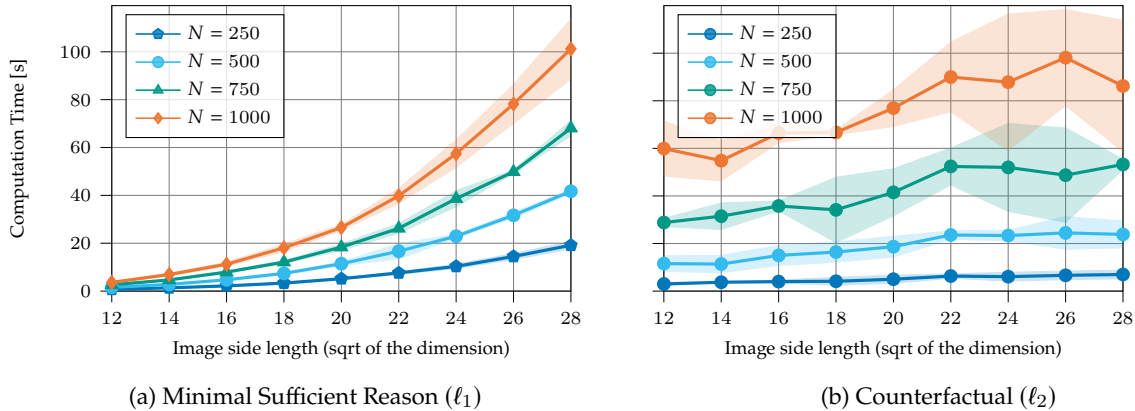
\begin{figure}
    \centering
    \begin{subfigure}{0.49\textwidth}
\setlength\figurewidth{0.95\columnwidth}
\setlength\figureheight{0.7\figurewidth}

\begin{tikzpicture}[font=\scriptsize, every plot/.style={line width=1.2pt}]

\definecolor{darkcyan0119187}{RGB}{0,119,187}
\definecolor{darkcyan0153136}{RGB}{0,153,136}
\definecolor{gray}{RGB}{128,128,128}
\definecolor{mediumturquoise51187238}{RGB}{51,187,238}
\definecolor{tomato23811951}{RGB}{238,119,51}

\begin{axis}[
height=\figureheight,
legend cell align={left},
legend style={fill opacity=0.8, draw opacity=1, text opacity=1, at={(0.03,0.97)}, anchor=north west},
minor xtick={},
minor ytick={},
tick align=outside,
tick pos=left,
width=\figurewidth,
x grid style={gray},
xlabel={Image side length (sqrt of the dimension)},
xmajorgrids,
xmin=12, xmax=28,
xtick style={color=black},
xtick={10,12,14,16,18,20,22,24,26,28,30},
y grid style={gray},
ylabel={Computation Time [s]},
ymajorgrids,
ymin=0, ymax=119.3,
ytick style={color=black},
ytick={-20,0,20,40,60,80,100}
]
\path [draw=darkcyan0119187, fill=darkcyan0119187, opacity=0.2]
(axis cs:12,0.7887)
--(axis cs:12,0.677)
--(axis cs:14,1.217)
--(axis cs:16,1.948)
--(axis cs:18,3.019)
--(axis cs:20,4.898)
--(axis cs:22,7.044)
--(axis cs:24,9.409)
--(axis cs:26,13.22)
--(axis cs:28,17.6)
--(axis cs:28,20.72)
--(axis cs:28,20.72)
--(axis cs:26,15.74)
--(axis cs:24,11.23)
--(axis cs:22,8.123)
--(axis cs:20,5.45)
--(axis cs:18,3.659)
--(axis cs:16,2.414)
--(axis cs:14,1.481)
--(axis cs:12,0.7887)
--cycle;

\path [draw=mediumturquoise51187238, fill=mediumturquoise51187238, opacity=0.2]
(axis cs:12,1.704)
--(axis cs:12,1.449)
--(axis cs:14,2.644)
--(axis cs:16,4.561)
--(axis cs:18,7.161)
--(axis cs:20,10.63)
--(axis cs:22,13.48)
--(axis cs:24,22.12)
--(axis cs:26,30.5)
--(axis cs:28,40.73)
--(axis cs:28,42.68)
--(axis cs:28,42.68)
--(axis cs:26,32.84)
--(axis cs:24,23.73)
--(axis cs:22,19.83)
--(axis cs:20,12.23)
--(axis cs:18,7.631)
--(axis cs:16,4.89)
--(axis cs:14,2.953)
--(axis cs:12,1.704)
--cycle;

\path [draw=darkcyan0153136, fill=darkcyan0153136, opacity=0.2]
(axis cs:12,3.05)
--(axis cs:12,2.214)
--(axis cs:14,4.529)
--(axis cs:16,7.471)
--(axis cs:18,11.44)
--(axis cs:20,17.03)
--(axis cs:22,24.13)
--(axis cs:24,35.32)
--(axis cs:26,49.04)
--(axis cs:28,64.89)
--(axis cs:28,71.29)
--(axis cs:28,71.29)
--(axis cs:26,50.68)
--(axis cs:24,41.94)
--(axis cs:22,28.28)
--(axis cs:20,19.72)
--(axis cs:18,12.78)
--(axis cs:16,8.395)
--(axis cs:14,4.794)
--(axis cs:12,3.05)
--cycle;

\path [draw=tomato23811951, fill=tomato23811951, opacity=0.2]
(axis cs:12,3.957)
--(axis cs:12,3.345)
--(axis cs:14,6.341)
--(axis cs:16,10.39)
--(axis cs:18,16.46)
--(axis cs:20,25.29)
--(axis cs:22,36.66)
--(axis cs:24,51.72)
--(axis cs:26,70.09)
--(axis cs:28,88.61)
--(axis cs:28,113.7)
--(axis cs:28,113.7)
--(axis cs:26,86.3)
--(axis cs:24,63.16)
--(axis cs:22,42.92)
--(axis cs:20,27.84)
--(axis cs:18,19.73)
--(axis cs:16,12.07)
--(axis cs:14,7.495)
--(axis cs:12,3.957)
--cycle;

\addplot [semithick, darkcyan0119187, mark=pentagon*, mark size=2, mark options={solid}]
table {%
12 0.7329
14 1.349
16 2.181
18 3.339
20 5.174
22 7.583
24 10.32
26 14.48
28 19.16
};
\addlegendentry{$N=250$}
\addplot [semithick, mediumturquoise51187238, mark=*, mark size=2, mark options={solid}]
table {%
12 1.576
14 2.798
16 4.725
18 7.396
20 11.43
22 16.65
24 22.93
26 31.67
28 41.71
};
\addlegendentry{$N=500$}
\addplot [semithick, darkcyan0153136, mark=triangle*, mark size=2, mark options={solid}]
table {%
12 2.632
14 4.662
16 7.933
18 12.11
20 18.37
22 26.21
24 38.63
26 49.86
28 68.09
};
\addlegendentry{$N=750$}
\addplot [semithick, tomato23811951, mark=diamond*, mark size=2, mark options={solid}]
table {%
12 3.651
14 6.918
16 11.23
18 18.09
20 26.56
22 39.79
24 57.44
26 78.2
28 101.2
};
\addlegendentry{$N=1000$}
\end{axis}

\end{tikzpicture}
\caption{Minimal Sufficient Reason $(\ell_1)$}
\end{subfigure}
\hfill
\begin{subfigure}{0.49\textwidth} 
\setlength\figurewidth{0.95\columnwidth}
\setlength\figureheight{0.7\figurewidth}

\begin{tikzpicture}[font=\scriptsize, , every plot/.style={line width=1.2pt}]

\definecolor{darkcyan0119187}{RGB}{0,119,187}
\definecolor{darkcyan0153136}{RGB}{0,153,136}
\definecolor{gray}{RGB}{128,128,128}
\definecolor{mediumturquoise51187238}{RGB}{51,187,238}
\definecolor{tomato23811951}{RGB}{238,119,51}

\begin{axis}[
height=\figureheight,
legend cell align={left},
legend style={fill opacity=0.8, draw opacity=1, text opacity=1, at={(0.03,0.97)}, anchor=north west},
minor xtick={},
minor ytick={},
tick align=outside,
tick pos=left,
width=\figurewidth,
x grid style={gray},
xlabel={Image side length (sqrt of the dimension)},
xmajorgrids,
xmin=12, xmax=28,
xtick style={color=black},
xtick={12,14,16,18,20,22,24,26,28,30},
y grid style={gray},
ymajorgrids,
ymin=0, ymax=120,
ytick style={color=black},
ytick={0,20,40,60,80,100},
yticklabels=\empty
]
\path [draw=darkcyan0119187, fill=darkcyan0119187, opacity=0.2]
(axis cs:12,3.646)
--(axis cs:12,2.338)
--(axis cs:14,3.683)
--(axis cs:16,3.373)
--(axis cs:18,2.457)
--(axis cs:20,3.301)
--(axis cs:22,5.434)
--(axis cs:24,4.196)
--(axis cs:26,4.863)
--(axis cs:28,5.146)
--(axis cs:28,8.813)
--(axis cs:28,8.813)
--(axis cs:26,8.393)
--(axis cs:24,7.941)
--(axis cs:22,7.209)
--(axis cs:20,6.682)
--(axis cs:18,5.803)
--(axis cs:16,4.611)
--(axis cs:14,3.757)
--(axis cs:12,3.646)
--cycle;

\path [draw=mediumturquoise51187238, fill=mediumturquoise51187238, opacity=0.2]
(axis cs:12,14.86)
--(axis cs:12,8.249)
--(axis cs:14,7.723)
--(axis cs:16,10.8)
--(axis cs:18,12.17)
--(axis cs:20,14.32)
--(axis cs:22,21.8)
--(axis cs:24,21.15)
--(axis cs:26,17.6)
--(axis cs:28,18.2)
--(axis cs:28,29.43)
--(axis cs:28,29.43)
--(axis cs:26,31.34)
--(axis cs:24,25.53)
--(axis cs:22,25.38)
--(axis cs:20,23.02)
--(axis cs:18,20.68)
--(axis cs:16,19.2)
--(axis cs:14,14.99)
--(axis cs:12,14.86)
--cycle;

\path [draw=darkcyan0153136, fill=darkcyan0153136, opacity=0.2]
(axis cs:12,30.54)
--(axis cs:12,27.13)
--(axis cs:14,25.9)
--(axis cs:16,33.7)
--(axis cs:18,20.42)
--(axis cs:20,31.63)
--(axis cs:22,44.63)
--(axis cs:24,33.52)
--(axis cs:26,28.9)
--(axis cs:28,51.03)
--(axis cs:28,55.44)
--(axis cs:28,55.44)
--(axis cs:26,68.55)
--(axis cs:24,70.47)
--(axis cs:22,60.16)
--(axis cs:20,51.41)
--(axis cs:18,47.88)
--(axis cs:16,37.85)
--(axis cs:14,37.05)
--(axis cs:12,30.54)
--cycle;

\path [draw=tomato23811951, fill=tomato23811951, opacity=0.2]
(axis cs:12,71.45)
--(axis cs:12,48.39)
--(axis cs:14,46.34)
--(axis cs:16,62.42)
--(axis cs:18,65.71)
--(axis cs:20,69.2)
--(axis cs:22,75.33)
--(axis cs:24,59.15)
--(axis cs:26,77.98)
--(axis cs:28,58.63)
--(axis cs:28,113.8)
--(axis cs:28,113.8)
--(axis cs:26,118.3)
--(axis cs:24,116.5)
--(axis cs:22,104.6)
--(axis cs:20,84.65)
--(axis cs:18,67.66)
--(axis cs:16,70.44)
--(axis cs:14,63.31)
--(axis cs:12,71.45)
--cycle;

\addplot [semithick, darkcyan0119187, mark=*, mark size=2, mark options={solid}]
table {%
12 2.992
14 3.72
16 3.992
18 4.13
20 4.992
22 6.321
24 6.069
26 6.628
28 6.98
};
\addlegendentry{$N=250$}
\addplot [semithick, mediumturquoise51187238, mark=*, mark size=2, mark options={solid}]
table {%
12 11.55
14 11.36
16 15
18 16.42
20 18.67
22 23.59
24 23.34
26 24.47
28 23.82
};
\addlegendentry{$N=500$}
\addplot [semithick, darkcyan0153136, mark=*, mark size=2, mark options={solid}]
table {%
12 28.83
14 31.48
16 35.77
18 34.15
20 41.52
22 52.4
24 51.99
26 48.72
28 53.24
};
\addlegendentry{$N=750$}
\addplot [semithick, tomato23811951, mark=*, mark size=2, mark options={solid}]
table {%
12 59.92
14 54.83
16 66.43
18 66.68
20 76.92
22 89.97
24 87.84
26 98.12
28 86.2
};
\addlegendentry{$N=1000$}
\end{axis}

\end{tikzpicture}
\caption{Counterfactual $(\ell_2)$}
\end{subfigure}
    \caption{Runtimes for explanations over the MNIST dataset. The training set used has size $N := |S^+| + |S^-|$. Confidence intervals of $95\%$ over 5 independent runs are displayed.}
    \label{fig:computation-abductive}
\end{figure}

\paragraph{\bf Results.}  On the discrete setting,~\Cref{fig:disc-count-gurobi} displays experimental results over synthetic random data, comparing the performance of IQP and SAT solving. While our implementation of the former scales significantly better than the latter, it is worth mentioning that~\texttt{Gurobi} was run using $8$ threads, whereas~\texttt{cardinality-cadical} is a single-threaded program. On the continuous setting,~\Cref{fig:computation-abductive} displays experimental results for the algorithms described in~\Cref{theo:l2-cont-counterfactual} and~\Cref{prop_l1_1check}. We remark that the use of a library for fast NN-classification such as \text{FAISS}~\cite{douze2024faiss} was key for performance in the computation of minimal sufficient reasons. In general, our results suggest that for hundreds of features, and up to a thousand points, explanations can be computed in practice. A faster implementation in a lower level language, using pruning heuristics as those of~\cite{DBLP:conf/iscopt/ContardoFRV24,eppsteinFindingRelevantPoints2022}, is part of our future work.

\section{Final Remarks}\label{sec:final}
Our work represents an initial step in studying the computational cost of generating explanations for $k$-NN classifiers. As demonstrated, the landscape is nuanced, with the complexity of finding explanations varying depending on the metric used. 
We believe that our results and proof techniques provide valuable insights into the types of methods needed to practically address the explanation problems studied in this paper.
A summary of our results is shown in Table~\ref{tab:example}. The problems of checking and computing minimal sufficient reasons are polynomial-time solvable when checking sufficient reasons is tractable (recall Proposition~\ref{prop_check_minimal_reduction}), and hard otherwise. All results in Table~\ref{tab:example} are completeness results, except for minimum sufficient reasons under the  $\ell_1$-distance and $k>1$, where we only obtain $\np$-hardness. We leave the precise complexity of this case open. 

The kind of explanations we have studied are often said to be ``local''~\cite{localSurvey,hakkoumGlobalLocalInterpretability2024}, since they aim to elucidate the behavior of a classifier in a local region of the space around an input point, as opposed to ``global'' explainability which aims to provide insight into a classier as a whole. Recently, Bassan et al. studied the difference between local and global interpretability from a computational-complexity perspective~\cite{bassan2024localvsglobalinterpretability}. Interestingly, a recent line of work has studied the computational problem of \emph{thinning} $k$-NN classifiers by removing redundant points in the training set~\cite{eppsteinFindingRelevantPoints2022,flores2022improved,rohrer2023reducing}. Arguably, this line of work contributes to the global interpretability of $k$-NN classifiers, and in practice might serve to speed up the computation of local explanations.

There are several intriguing directions for future research. 
\begin{itemize}
    \item  First, we seek to explore {\sc $k$-Counterfactual Explanation} for metrics based on $\ell_p$, where $ p > 2 $. Specifically, we ask whether $\ell_2$ is the only metric for which this problem is tractable.

    \item Second, it is interesting to explore, whether our positive results are extendable to the case of more than two labels. It does not make any difference for $k = 1$: if a point $\bar x$ to be explained was classified with some label, then all the other labels can be merged into a single one, effectively reducing everything to the binary case.  The same reasoning does not work for $k \ge 3$, and it is open, whether the {\sc $k$-Counterfactual Explanation} can still be computed  in polynomial time for $\ell_2$ in the multi-label case. One can generalize our algorithm from Theorem \ref{theo:l2-cont-counterfactual} to have exponential dependence on the number of labels; this answers our question positively when the number of labels is constant, but not in the general case.
    
        \item Lastly, we are interested in determining the extent to which the \np-hard problems discussed in this paper can be approximated. For instance, can {\sc $k$-Minimum Sufficient Reason}, which is \np-hard in all the settings considered, be tackled using polynomial-time approximation algorithms that produce a sufficient reason whose size is reasonably close to the minimum?
\end{itemize}

\begin{table}
    \centering
    \begin{tabular}{l*{3}{cc}}
        \toprule
       \textbf{Explanation} & {Counterfactual} & \multicolumn{2}{c}{Check Sufficient Reason} & \multicolumn{2}{c}{Minimum Sufficient Reason} \\
         \cmidrule(lr){3-4} 
          \cmidrule(lr){5-6}
    \textbf{Metric space} &   $k\geq 1$  & $k=1$ & $k>1$ & $k=1$ & $k>1$\\
        \midrule
        $(\mathbb{R}, D_2)$  & P (Thm.~\ref{theo:l2-cont-counterfactual}) & P (Prop.~\ref{prop:cont-check-sr}) & P (Prop.~\ref{prop:cont-check-sr}) & NP-c (Coro.~\ref{coro:np-c-msr}) & NP-c (Coro.~\ref{coro:np-c-msr}) \\
       $(\mathbb{R}, D_1)$  & NP-c (Thm.~\ref{theo:l1-cont-counterfactual}) & P (Prop.~\ref{prop_l1_1check}) & coNP-c (Thm.~\ref{theo:continuous-check-sr-k3}) & NP-c (Coro.~\ref{coro:np-c-msr}) & NP-h (Thm.~\ref{theo:minimum-sr})\\
       $(\{0,1\}, D_H)$  & NP-c (Thm.~\ref{theo:discrete-counterfactual}) & P (Prop.~\ref{prop:discrete-check-sr-k1}) & coNP-c (Thm.~\ref{theo:discrete-check-sr-k3}) &  NP-c (Coro.~\ref{coro:np-c-msr}) & $\Sigma_2^p$-c (Thm.~\ref{theo:sigma2p-msr-discrete}) \\
        \bottomrule
    \end{tabular}
    \caption{Summary of complexity results.}
    \label{tab:example}
\end{table}

\paragraph{Acknowledgments.}
Barcel\'o is funded by the National Center for Artificial Intelligence CENIA FB210017, Basal ANID, and by ANID Millennium Science Initiative Program Code
ICN17002. Kozachinskiy is funded by Grants ANID/FONDECYT Iniciación 1250060 and ANID/Basal National Center for Artificial Intelligence CENIA FB210017. Verschae is funded by Grants ANID/FONDECYT Regular 1221460 and Centro de Modelamiento Matemático (CMM) BASAL fund FB210005 for centers of excellence, ANID-Chile. Romero is funded by the National Center for Artificial Intelligence CENIA FB210017, Basal ANID. Subercaseaux is  supported by the U.S. National
Science Foundation under grant DMS-2434625.


\end{document}